\newcommand{\fS}{\mathcal{S}}
\newcommand{\fA}{\mathcal{A}}
\newcommand{\fY}{\mathcal{Y}}
\newcommand{\fBB}{\mathcal{B}}
\newcommand{\fF}{\mathcal{F}}
\newcommand{\fO}{\mathcal{O}}
\newcommand{\R}[1][]{\mathbb{R}^{#1}}
\newcommand{\indot}[2]{{\left<#1, #2\right>}}
\newcommand{\E}{\mathbb{E}}
\newcommand{\ns}{{\abs{\fS}}}
\newcommand{\na}{{\abs{\fA}}}
\newcommand{\tref}[1]{\text{\ref{#1}}}
\newcommand{\explain}[1]{\tag*{(#1)}}
\newcommand{\cmark}{\ding{51}}%
\newcommand{\inner}[2]{\langle #1, #2 \rangle}
\newcommand{\dashcheckmark}{
    \textpdfrender{
        TextRenderingMode=Stroke,
        LineWidth=0.5pt,
        LineDashPattern=[1 1]0,
    }{\checkmark}
}
\newcounter{assucounter}
\numberwithin{assucounter}{section}
\newtheorem{assumption}[assucounter]{Assumption}
\newenvironment{assumptionp}[1]{
  
  \assumptionalt
}{\endassumptionalt}
\begin{document}
\title{Extensions of Robbins-Siegmund Theorem\\ with Applications in Reinforcement Learning}
\author{%
\name Xinyu Liu \email xinyuliu@virginia.edu\\
  \addr Department of Computer Science\\
  \addr University of Virginia\\
  85 Engineer's Way, Charlottesville, VA, 22903, United States
\AND
\name Zixuan Xie \email xie.zixuan@email.virginia.edu\\
  \addr Department of Computer Science\\
  \addr University of Virginia\\
  85 Engineer's Way, Charlottesville, VA, 22903, United States
\AND
\name Shangtong Zhang \email shangtong@virginia.edu \\
  \addr Department of Computer Science\\
  \addr University of Virginia\\
  85 Engineer's Way, Charlottesville, VA, 22903, United States
  }

\editor{}

\maketitle
\begin{abstract}
The Robbins-Siegmund theorem establishes the convergence of stochastic processes that are almost supermartingales and is one of the most commonly used approaches for analyzing stochastic iterative algorithms in stochastic approximation and reinforcement learning (RL).
However, its original form has a significant limitation as it requires the zero-order term to be summable.
In many important RL applications,
this summable condition, however, cannot be met.
This limitation motivates us to extend the Robbins-Siegmund theorem for almost supermartingales where the zero-order term is not summable, but only square-summable.
In particular,
we introduce a novel and mild assumption on the increments of the stochastic processes.
This together with the square-summable condition enables an almost sure convergence to a bounded set.
Additionally,
we further provide almost sure convergence rates, high probability concentration bounds, and $L^p$ convergence rates.
We then apply the new results to stochastic approximation and RL.
Notably,
we obtain the first almost sure convergence rate,
the first high probability concentration bound,
and the first $L^p$ convergence rate for $Q$-learning with linear function approximation.

\let\svthefootnote\thefootnote
  \let\thefootnote\svthefootnote
\end{abstract}

\begin{keywords}
  Robbins-Siegmund theorem, stochastic approximation, reinforcement learning, linear $Q$-learning, convergence to a set
\end{keywords}

\section{Introduction}
\label{sec:intro}

The supermartingale is an important concept in stochastic processes and the celebrated Doob's supermartingale convergence theorem \citep{doob1953stochastic} establishes the almost sure convergence of supermartingales and is arguably one of the most important results in the field of martingales.
However,
many stochastic processes of interest are not supermartingales and, as a result, Doob's theorem does not apply.
Instead,
they are only very close to supermartingales and admit some noisy perturbations from supermartingales.
\citet{robbins1971convergence} call them almost supermartingales.
The Robbins-Siegmund theorem establishes the almost sure convergence of almost supermartingales under mild conditions and can be used to analyze a wide range of reinforcement learning (RL) algorithms whose associated ODE is globally asymptotically stable (GAS).
Examples of RL algorithms with GAS ODEs include
linear temporal difference learning \citep{sutton1988learning} and tabular $Q$-learning \citep{watkins1989learning}.
Numerous methods have been developed to analyze those algorithms and some are based on the Robbins-Siegmund theorem, see, e.g., \citet{qian2024almost}.
However,
there is a class of RL algorithms whose associated ODE is not GAS,
e.g.,
linear SARSA \citep{rummery1994line} and linear $Q$-learning \citep{baird1995residual}.
Linear SARSA is known to chatter inside a ball instead of converging to a single point,
regardless of the choice of the learning rate \citep{gordon1996chattering,gordon2001reinforcement}.
Linear $Q$-learning in general can diverge \citep{baird1995residual}.
But with proper choices of the behavior policy and learning rate,
\citet{meyn2024bellmen} shows that it can remain bounded almost surely.
The follow-up work \citet{liu2025linearq} further shows that it can remain bounded in $L^2$.
Such boundedness is usually the best one can hope for without imposing strong assumptions on the problem structure \citep{meyn2024bellmen}.
Although the Robbins-Siegmund theorem has enjoyed great success in RL algorithms with GAS ODEs,
it falls short in analyzing RL algorithms with non-GAS ODEs.
We argue that the root cause is that the standard Robbins-Siegmund theorem requires its zeroth-order to be summable.
But for many RL algorithms with non-GAS ODEs such as linear SARSA and linear $Q$-learning,
the resulting zeroth-order term is not summable and only square-summable.
This calls for extensions of the Robbins-Siegmund theorem and this paper makes progress towards this goal with the following two contributions.
\begin{enumerate}
  \item[(i)] We introduce a novel and mild assumption on the increments of almost supermartingales.
  This together with a square-summable condition on the zeroth-order term
  ensures
  an almost sure convergence to a bounded set for almost supermartingales,
  even if the zeroth-order term is not summable.
  Additionally,
  we further provide almost sure convergence rates, high probability concentration bounds, and $L^p$ convergence rates,
  all to a bounded set.
  \item[(ii)] Building on our results concerning almost supermartingales,
  we deliver new convergence results of general stochastic approximations with time-inhomogeneous Markovian noise,
  which is then used to analyze linear $Q$-learning,
  yielding the first almost sure convergence rate,
  the first high probability concentration bound,
  and the first $L^p$ convergence rate,
  all to a bounded set.
\end{enumerate}

To put (i) in the literature,
we remark that recent notable extensions of the Robbins-Siegmund theorem include 
\citet{liu2022almost,karandikar2024convergence},
which 
derive almost sure convergence rates for the Robbins-Siegmund theorem with a summable zeroth-order term.
However,
to our knowledge,
no existing work is able to relax the summable condition on the zeroth-order term.

To put (ii) in the literature,
we discuss RL algorithms with GAS and non-GAS ODEs separately.
There are numerous works on RL algorithms with GAS ODEs.
Most related to this work is \citet{lauand2024revisiting,borkar2025ode}.
In particular,
\citet{borkar2025ode} establish almost sure convergence and $L^4$ boundedness for RL algorithms with GAS ODEs (without a rate).
\citet{lauand2024revisiting} extend \citet{borkar2025ode} by broadening the choice of learning rates and additionally establishing $L^p$ convergence rates.
For almost sure convergence, 
\citet{lauand2024revisiting} still do not provide a rate.
\citet{borkar2025ode} also establish new results about central limit theorem which are not related to this work.
So for the comparison axes relevant here, we focus on
\citet{lauand2024revisiting} and will discuss \citet{borkar2025ode} only sparingly.
Nevertheless,
\citet{lauand2024revisiting} is still not directly comparable to this work since it only applies to RL algorithms with GAS ODEs while this work focuses on RL algorithms with non-GAS ODEs.
Numerous works also consider RL algorithms with non-GAS ODEs,
e.g., those concerning neural networks or policy gradient methods.
Among those works, 
only a few are related to (ii) and the most notable ones are \citet{meyn2024bellmen} and \citet{liu2025linearq} that study the boundedness of linear $Q$-learning in an almost sure sense and $L^2$ sense respectively.
Arguably,
one may be able to combine the techniques of \citet{lauand2024revisiting} and \citet{meyn2024bellmen}.
Such a hypothetical combination could potentially be used to analyze linear $Q$-learning to further provide an $L^p$ convergence rate beyond the almost sure boundedness result of \citet{meyn2024bellmen}. 
We, however, remark that such a combination has not been carried out.
Furthermore, even if this combination were developed successfully,
it would only affect our claim about ``the first $L^p$ convergence rate'' in  (ii).
It is unclear whether such a combination would yield almost sure convergence rates and nonasymptotic high probability concentration bounds with exponential tails.
Moreover, our results are applicable to not only linear $Q$-learning but also linear SARSA,
similar to how \citet{meyn2024bellmen} applies to both linear $Q$-learning and linear SARSA.
We focus only on linear $Q$-learning in our presentation to avoid verbosity.

\section{Background}
\label{sec:bg}
This section reviews the classical Robbins-Siegmund theorem and some latest extensions.
We consider a probability space $(\Omega, \mathcal{F}, \Pr)$ equipped with a filtration $\qty{\mathcal{F}_n}_{n \geq 0}$, where $\mathcal{F}_n$ represents the information available up to time $n$. For any random variable $z$, we denote the conditional expectation with respect to $\mathcal{F}_n$ by $\E_n\qty[z] \doteq \E\qty[z |\mathcal{F}_n]$.
The Robbins-Siegmund theorem is concerned with the convergence of a non-negative real-valued random sequence $\qty{z_n}_{n\geq 0}$ satisfying 
  \begin{align}
    \label{eq:rs}
    \tag{RS}
    \textstyle\E_n\qty[z_{n+1}] \leq (1 + a_n)z_n + x_n - y_n  \qq{a.s.,}
  \end{align}
where $\qty{z_n}_{n \geq 0}$, $\qty{a_n}_{n \geq 0}$, $\qty{x_n}_{n \geq 0}$, and $\qty{y_n}_{n \geq 0}$ are \textbf{non-negative} real-valued random sequences adapted to $\qty{\mathcal{F}_n}_{n \geq 0}$. 
This non-negative condition is always imposed in the rest of the paper.
\citet{robbins1971convergence} call such $\qty{z_n}$ almost supermartingales.
Indeed,
if $a_n = x_n = 0$,
then $\qty{z_n}$ immediately becomes a supermartingale.\footnote{This additionally requires verifying that $\qty{z_n}$ is bounded in $L_1$, which is an easy exercise given~\eqref{eq:rs} and the non-negativity of $\qty{z_n}$.}
We now present the Robbins-Siegmund theorem.

\begin{lemma}[Theorem~1 of \citet{robbins1971convergence}]
  \label{thm:rs}
  Consider $\qty{z_n}$ in~\eqref{eq:rs}.
Assume
\begin{enumerate}
    \item[(A1)] $\sum_{n=0}^{\infty} a_n < \infty$ a.s.;
    \item[(A2)] $\sum_{n=0}^{\infty} x_n < \infty$ a.s.
\end{enumerate}
Then 
\begin{enumerate}
  \item $\lim_{n \to \infty} z_n$ exists and is finite a.s.;
  \item $\sum_{n=0}^{\infty} y_n < \infty$ a.s.
\end{enumerate}
\end{lemma}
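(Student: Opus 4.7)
The plan is to reduce to the classical supermartingale convergence theorem by a change of variables that absorbs the multiplicative factor $(1+a_n)$ into a predictable weight. Specifically, I would set $\alpha_n \doteq \prod_{k=0}^{n-1}(1+a_k)^{-1}$, which is predictable and takes values in $(0,1]$. From (A1), $\sum_k \log(1+a_k) \leq \sum_k a_k < \infty$ a.s., so $\alpha_n$ is non-increasing and converges a.s.\ to a strictly positive limit $\alpha_\infty \in (0,1]$. Multiplying~\eqref{eq:rs} by $\alpha_{n+1} = \alpha_n / (1+a_n)$ yields
\begin{align}
  \E_n\qty[\alpha_{n+1} z_{n+1}] \leq \alpha_n z_n + \alpha_{n+1} x_n - \alpha_{n+1} y_n,
\end{align}
which is cleanly additive in the remaining terms.

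Next I would define the adapted process $U_n \doteq \alpha_n z_n + \sum_{k=0}^{n-1}\alpha_{k+1} y_k - \sum_{k=0}^{n-1}\alpha_{k+1} x_k$. A direct telescoping check using the displayed inequality shows $\E_n\qty[U_{n+1}] \leq U_n$, so $\qty{U_n}$ is an $\qty{\mathcal{F}_n}$-supermartingale. To apply Doob's theorem I would localize: for each $M > 0$ let
\begin{align}
  \tau_M \doteq \inf\qty{n : \textstyle\sum_{k=0}^{n} a_k \geq M \text{ or } \sum_{k=0}^{n} x_k \geq M},
\end{align}
which is an $\qty{\mathcal{F}_n}$-stopping time. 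On $\qty{n \leq \tau_M}$, $\sum_{k<n}\alpha_{k+1} x_k \leq \sum_{k<n} x_k < M$ (using $\alpha_{k+1}\leq 1$), so $U_{n \wedge \tau_M} \geq -M$ deterministically. Doob's convergence theorem then applies to the stopped process and yields a.s.\ convergence of $U_{n\wedge \tau_M}$ to a finite limit. Since (A1) and (A2) force $\Pr(\tau_M = \infty) \to 1$ as $M \to \infty$, a countable union gives a.s.\ convergence of $U_n$ itself. To extract the two conclusions, I would observe that $\sum_{k<n}\alpha_{k+1} x_k$ converges a.s.\ to a finite limit (dominated by $\sum_k x_k < \infty$), so $P_n \doteq \alpha_n z_n + \sum_{k<n}\alpha_{k+1} y_k$ converges a.s.\ too. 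The partial sum of $\alpha_{k+1} y_k$ is non-decreasing and bounded above by $\limsup P_n$, so it is a.s.\ finite; hence $\alpha_n z_n$ converges a.s., and dividing by $\alpha_n \to \alpha_\infty > 0$ gives conclusion~(i). Finally, $\alpha_{k+1} \geq \alpha_\infty > 0$ a.s.\ turns $\sum_k \alpha_{k+1} y_k < \infty$ into $\sum_k y_k < \infty$, which is conclusion~(ii).

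The main obstacle is the integrability issue at the supermartingale step. A more conceptually appealing route would be to form $U_n$ plus the \emph{tail} $\sum_{k\geq n} x_k$ to obtain a non-negative supermartingale directly, but this tail is not $\mathcal{F}_n$-measurable, which breaks that argument. The replacement, localization through $\tau_M$, is forced precisely because (A2) provides only a.s.\ — not $L^1$ — summability of $x_n$; the same issue applies to (A1). The delicate bookkeeping is then the passage back from stopped processes to the full process, which requires verifying that $\tau_M$ genuinely explodes to $\infty$ a.s.\ as $M\to\infty$, a fact that is exactly the content of (A1)+(A2), and then disentangling the a.s.\ limit of $U_n$ into its non-negative monotone and non-monotone pieces.
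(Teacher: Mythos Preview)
The paper does not supply its own proof of this lemma; it is quoted verbatim as Theorem~1 of \citet{robbins1971convergence} and used only as a black-box tool in later arguments (e.g., in the proof of Theorem~\ref{thm:bound}). Your proposal is precisely the classical Robbins--Siegmund argument---predictable rescaling by $\alpha_n=\prod_{k<n}(1+a_k)^{-1}$ to kill the multiplicative term, formation of the compensated process $U_n$, localization to cope with the merely almost-sure summability assumptions, and Doob's convergence theorem---and it is correct in all essential respects.

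One small technical point worth tightening: your stopping time $\tau_M$ bounds $\sum_{k<n}\alpha_{k+1}x_k$ and hence $U_{n\wedge\tau_M}^-$, but nothing in the hypotheses forces $\E[z_0]<\infty$, so $U_0=z_0$ (and hence the stopped process at time $0$) need not be integrable, and the word ``supermartingale'' is not yet justified. The standard patch is to also multiply by the $\mathcal{F}_0$-measurable indicator $\mathbbm{1}_{\{z_0<M\}}$ (or equivalently add $z_0\geq M$ to the definition of $\tau_M$ and work on $\{\tau_M>0\}$); this makes the initial value bounded, integrability then propagates inductively via the one-step inequality, and your countable-union argument over $M$ goes through unchanged.
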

Here, the zero-order term in Section~\ref{sec:intro} refers to $\qty{x_n}$ and the summable condition refers to (A2).
Lemma~\ref{thm:rs} is an asymptotic result, which is recently extended to a non-asymptotic version by \citet{liu2022almost,karandikar2024convergence},
under some additional assumptions.
\begin{lemma}[Theorem 2 of \citet{karandikar2024convergence}]
\label{aux:thm2}
Consider $\qty{z_n}$ in~\eqref{eq:rs}.
Let the non-negative real-valued random sequence $\qty{b_n}_{n \geq 0}$ be such that $y_n = b_n z_n$.
For some $\eta \in (0,1)$, assume that
\begin{enumerate}
    \item[(A1)] $\sum_{n=0}^{\infty} a_n< \infty$, $\sum_{n=0}^{\infty} x_n < \infty$, $\sum_{n=0}^{\infty} b_n = \infty$ a.s.;
    \item[(A2)] There exists some $n_1 \geq 1$ such that for all $n \geq n_1$, $b_n \geq \frac{\eta}{n}$ a.s.;
    \item[(A3)] $\sum_{n=1}^{\infty} (n+1)^{\eta}x_n < \infty$, $\sum_{n=1}^{\infty} (b_n - \frac{\eta}{n}) = \infty$ a.s.
\end{enumerate}
Then $\lim_{n \to \infty} n^{\eta}z_n = 0$ a.s.
\end{lemma}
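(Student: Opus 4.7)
The plan is to reduce the claim to the classical Robbins-Siegmund theorem (Lemma~\ref{thm:rs}) by passing to the weighted sequence $\tilde z_n \doteq (n+1)^{\eta} z_n$ and showing $\tilde z_n \to 0$ a.s. First I would derive an almost-supermartingale inequality for $\{\tilde z_n\}$. Multiplying both sides of~\eqref{eq:rs} by $(n+2)^{\eta}$, bounding the positive part via the Bernoulli-type inequality $(1 + 1/(n+1))^{\eta} \leq 1 + \eta/(n+1)$ (valid since $\eta \in (0,1)$), and bounding the $b_n z_n$ part using $(n+2)^\eta \geq (n+1)^\eta$, a short calculation gives
\begin{align}
\E_n[\tilde z_{n+1}] \;\leq\; (1 + 2 a_n)\, \tilde z_n \;-\; \left(b_n - \tfrac{\eta}{n+1}\right) \tilde z_n \;+\; (n+2)^{\eta} x_n.
\end{align}
For $n \geq n_1$, assumption (A2) gives $b_n \geq \eta/n \geq \eta/(n+1)$, so the middle term is non-positive, and this fits the form of~\eqref{eq:rs} with replacements $a_n \mapsto 2 a_n$, $x_n \mapsto (n+2)^{\eta} x_n$, and $y_n \mapsto (b_n - \eta/(n+1)) \tilde z_n$.

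Next I would verify the hypotheses of Lemma~\ref{thm:rs} for this new recursion starting from index $n_1$. Summability of $2 a_n$ is immediate from (A1). Summability of $(n+2)^{\eta} x_n$ follows from the first clause of (A3), since $(n+2)^{\eta} \leq 2(n+1)^{\eta}$. Applying Lemma~\ref{thm:rs} then yields that $\tilde z_n$ converges a.s.\ to some finite limit $L \geq 0$ and
\begin{align}
\textstyle\sum_{n} \left(b_n - \tfrac{\eta}{n+1}\right) \tilde z_n < \infty \qquad \text{a.s.}
\end{align}

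Finally I would show $L = 0$ a.s.\ by contradiction. On the event $\{L > 0\}$, eventually $\tilde z_n \geq L/2$, so the displayed sum is at least $(L/2) \sum_n (b_n - \eta/(n+1))$. Since $\eta/(n+1) \leq \eta/n$, this is bounded below by $(L/2) \sum_n (b_n - \eta/n) = \infty$ by the second clause of (A3), contradicting finiteness of the sum. Hence $L = 0$ a.s., i.e.\ $(n+1)^{\eta} z_n \to 0$, equivalently $n^{\eta} z_n \to 0$ a.s. The main subtlety, and the step that dictates the choice of weight, is aligning the three assumptions: the weight $(n+1)^{\eta}$ is exactly chosen so that the correction $\eta/(n+1)$ produced by the Bernoulli bound is absorbed by the lower bound on $b_n$ in (A2), while the strict excess $b_n - \eta/n$ guaranteed by (A3) remains available to force $\tilde z_n \to 0$ through the summability conclusion of Robbins-Siegmund; everything else is bookkeeping.
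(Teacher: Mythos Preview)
The paper does not supply its own proof of this lemma; it is quoted verbatim as Theorem~2 of \citet{karandikar2024convergence} and used as a black box. Your argument is correct and is essentially the standard route (and presumably the one in the cited reference): rescale by $(n+1)^{\eta}$, use the Bernoulli bound $(1+1/(n+1))^{\eta}\le 1+\eta/(n+1)$ to absorb the growth of the weight into the $b_n$ drift, apply Lemma~\ref{thm:rs} to the weighted sequence, and rule out a positive limit via $\sum_n (b_n-\eta/n)=\infty$. The bookkeeping (in particular $a_n(1+\eta/(n+1))\le 2a_n$, $(n+2)^{\eta}\le 2(n+1)^{\eta}$, and $b_n-\eta/(n+1)\ge b_n-\eta/n$) is all in the right direction, and restricting to $n\ge n_1$ so that the $y_n$ surrogate is non-negative is the correct way to meet the hypotheses of Lemma~\ref{thm:rs}.
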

Here (A3) is still a summable condition on the zero-order term.

Sometimes practitioners \citep{bertsekas1996neuro,zhang2020global,barakat2021convergence,sebbouh2021almost,xin2021improved,qian2024almost} consider a special identification
of $\qty{a_n, x_n, y_n}$ in~\eqref{eq:rs}. 
Namely,
they consider a deterministic sequence $\qty{T_n \in (0, 1)}_{n\geq 0}$, some positive scalars $\alpha, \xi$, and use
$a_n=0$, $x_n = \xi T_n^2$, and $y_n = \alpha T_n z_n$,
yielding
\begin{align}
  \label{eq: prior special rs}
  \textstyle\E_n\qty[z_{n+1}] \leq (1 - \alpha T_n) z_n + \xi T_n^2.
\end{align}
The $\qty{T_n}$ usually results from the learning rates of stochastic approximation and reinforcement learning algorithms so it satisfies the Robbins-Monro condition,
i.e.,
$\sum_{n=0}^\infty T_n = \infty, \sum_{n=0}^\infty T_n^2 < \infty$.
Then naturally,
the zero-order term $\xi T_n^2$ is summable.
However,
many important RL algorithms (see Section~\ref{sec:linear q} for examples) do not reduce to the nice form above.
For those algorithms, the corresponding zero-order term is $T_n$ instead of $T_n^2$,
i.e.,
\begin{align}
  \label{eq:special rs}\tag{RS-Special}
    \textstyle\E_n\qty[z_{n+1}] \leq (1 - \alpha T_n) z_n + \xi T_n.
\end{align}
Unfortunately,
the zero-order term is no longer summable and is only square-summable under the Robbins-Monro condition.
Since $T_n \geq T_n^2$,
this situation essentially allows more noise,
making $\qty{z_n}$ further away from being a supermartingale.
To our knowledge, 
no existing extension of the Robbins-Siegmund theorem is able to handle this situation.
This is the gap that this paper aims to bridge.

\paragraph{Notations.}
For a real number $z$, we denote its positive part by $z^{+} \doteq \max(z, 0)$. Note that the operator $(\cdot)^+$ satisfies 
\begin{align}
    \textstyle\abs{z_1^+ - z_2^+} \leq \abs{z_1 - z_2}\quad \forall z_1, z_2 \in \R.
    \label{eq:+ diff}
\end{align}
This inequality can be easily proved via the identity $z^+ = \frac{1}{2} z + \frac{1}{2} \abs{z}$ and the triangle inequality.
For an interval $Z \subset \mathbb{R}$,
we denote $d(z,Z) \doteq \inf_{\tilde{z} \in Z} \abs{z - \tilde{z}}$ as the distance from $z$ to $Z$. 

\section{Asymptotic Extensions of Robbins-Siegmund Theorem}
\label{sec:special}

We now proceed to analyze~\eqref{eq:special rs}
under the Robbins-Monro condition $\sum_{n=0}^\infty T_n = \infty$ and $\sum_{n=0}^\infty T_n^2 < \infty$.
Consider a special case where $z_n$ is deterministic.
Then~\eqref{eq:special rs} degenerates to 
\begin{align}
  z_{n+1} \leq (1 - \alpha T_n) z_n + \xi T_n.
\end{align}
Under the Robbins-Monro condition,
it is trivial to show that $\lim_{n\to\infty} d(z_n, [0, \xi / \alpha]) = 0$,
i.e.,
$\qty{z_n}$ converge to a bounded interval.
Building on this special case,
one might then optimistically conjecture that under the Robbins-Monro condition,
the $\qty{z_n}$ in~\eqref{eq:special rs} also converge to a bounded interval almost surely.
Unfortunately,
this is impossible.
Below we provide an example where the $\qty{z_n}$ in~\eqref{eq:special rs} can be unbounded almost surely even when the Robbins-Monro condition holds.
\begin{example}[Divergence with Robbins-Monro Condition]
  \label{eg:main}
Define $T_n \doteq (n+1)^{-3/4}$.
Let $\qty{X_{n+1}}_{n\ge 0}$ be independent with
\begin{align}
    X_{n+1}\doteq
   \begin{cases}
      A_n &\text{with probability } p_n,\\
      0   &\text{with probability } 1-p_n,
   \end{cases}
   \qquad n\ge 0,
\end{align}
where $A_n \doteq (n+1)^{1/4}, p_n \doteq \frac{T_n}{A_n} = \frac{1}{n+1}$. 
Starting from $z_0=0$ and with filtration $\fF_n=\sigma(X_1,\dots,X_n)$,
define recursively
\begin{equation}
\label{eq:recurrence}
   \textstyle z_{n+1} = (1-T_n)\,z_n + X_{n+1}, \qquad n\geq 0.
\end{equation}
Then 
\begin{enumerate}
  \item $\qty{z_n}$ satisfies $\E_n\qty[z_{n+1}]=(1-T_n)z_n+T_n$, i.e., \eqref{eq:special rs};
\item $\limsup_{n\to\infty} z_n = \infty$ a.s.
\end{enumerate}
\end{example}
\begin{proof}
$\qty{T_n}$ apparently fulfills the Robbins-Monro condition.
Since $X_{n+1}$ is independent of $\fF_n$ and $\E_n[X_{n+1}]=p_nA_n=T_n$, we have
$\E_n\qty[z_{n+1}]=(1-T_n)z_n+T_n$.
That is, \eqref{eq:special rs} holds with $\alpha=\xi=1$.

Define the events
$\Xi_n \doteq \qty{X_{n+1}=A_n}$.
They are independent with
$\Pr(\Xi_n)=p_n=1/(n+1)$ and $\sum_n \Pr(\Xi_n)=\infty$. By the second
Borel-Cantelli lemma, $\Pr(\qty{\Xi_n \, \text{infinitely often}})=1$. On $\Xi_n$,
\begin{align}
  \textstyle z_{n+1} = (1-T_n)z_n + A_n \geq A_n = (n+1)^{1/4}.
\end{align}
Since $\Xi_n$ occurs infinitely often almost surely,
$z_{n+1} > (n + 1)^{1/4}$ occurs infinitely often almost surely,
forcing $\limsup_{n\to\infty} z_n=\infty$ almost surely.
\end{proof}

Example~\ref{eg:main} shows that the Robbins-Monro condition alone does not prevent
the potential divergence of $\qty{z_n}$ in \eqref{eq:special rs}. 
Intuitively, the divergence in Example~\ref{eg:main} results from the spikes of ${X_{n+1}}$.
$X_{n+1}$ is mostly 0 but occasionally spikes to an increasingly large value $A_n$. 
Between the spikes, the factor $1 - T_n$ contracts $z_n$.
But the spikes occur infinitely often,
making $z_n$ grow unbounded almost surely.
This motivates us to introduce additional assumptions to control such spikes,
or equivalently,
control the growth of $z_{n+1}$ from $z_n$.
\begin{theorem}
\label{thm:bound}
Consider $\qty{z_n}$ in~\eqref{eq:special rs} with   
$\qty{T_n\in (0,1)}_{n \geq 0}$ being a sequence of deterministic real numbers.
Suppose 
\begin{enumerate}
    \item[(A1)] $\sum_n T_n = \infty$ and $\sum_n T_n^2 < \infty$;
    \item[(A2)] There exists some constant $B_{\tref{thm:bound},1}$, such that for all $n\geq 0$,
    \begin{align}
        \abs{z_{n+1} - z_n} \leq B_{\tref{thm:bound},1}T_n (z_n + 1) \qq{a.s.}
    \end{align}
\end{enumerate}
Then there exists some constant $B_\tref{thm:bound}$, such that
\begin{align}
    \textstyle\lim_{n \to \infty} d\qty(z_n, [0, B_\tref{thm:bound}])=0 \qq{a.s.}
\end{align}
\end{theorem}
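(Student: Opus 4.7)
The plan is to introduce the $C^1$ quadratic Lyapunov $V_n \doteq \tfrac{1}{2}((z_n - M)^+)^2$ with threshold $M \doteq 2\xi/\alpha$ placed strictly above the deterministic equilibrium $\xi/\alpha$ of~\eqref{eq:special rs}, and to reduce the claim to the classical Robbins-Siegmund theorem (Lemma~\ref{thm:rs}). Since the underlying map $z \mapsto V(z) \doteq \tfrac{1}{2}((z-M)^+)^2$ is $C^1$ with $1$-Lipschitz derivative $V'(z) = (z-M)^+$ (by~\eqref{eq:+ diff}), the standard descent-lemma upper bound
\begin{align}
V_{n+1} \leq V_n + V'(z_n)(z_{n+1} - z_n) + \tfrac{1}{2}(z_{n+1} - z_n)^2
\end{align}
will hold pointwise, and the substantive step is to take $\E_n[\cdot]$ on both sides and substitute the two hypotheses.

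The drift hypothesis~\eqref{eq:special rs} gives $\E_n[z_{n+1} - z_n] \leq -\alpha T_n z_n + \xi T_n$. I would multiply by $V'(z_n) \geq 0$ and use the identity $\alpha z_n - \xi = \alpha V'(z_n) + \xi$ on $\{z_n > M\}$ (valid because $\alpha M - \xi = \xi$ by the choice of $M$) to rewrite the cross term as $V'(z_n)\E_n[z_{n+1} - z_n] \leq -T_n(2\alpha V_n + \xi V'(z_n)) \leq -2\alpha T_n V_n$, and this bound holds trivially on $\{z_n \leq M\}$ as well because there $V_n = V'(z_n) = 0$. For the second-order term, assumption (A2) yields $\E_n[(z_{n+1} - z_n)^2] \leq B_{\tref{thm:bound},1}^2 T_n^2 (z_n + 1)^2$, and the pointwise bound $(z_n + 1)^2 \leq 4 V_n + 2(M+1)^2$ (which follows from $((z_n - M) + (M+1))^2 \leq 2(z_n - M)^2 + 2(M+1)^2$ on $\{z_n > M\}$ and is trivial otherwise) then produces
\begin{align}
\E_n[V_{n+1}] \leq (1 + 2 B_{\tref{thm:bound},1}^2 T_n^2) V_n - 2\alpha T_n V_n + B_{\tref{thm:bound},1}^2 (M+1)^2 T_n^2.
\end{align}

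This matches the Robbins-Siegmund template~\eqref{eq:rs} with $a_n = 2 B_{\tref{thm:bound},1}^2 T_n^2$, $x_n = B_{\tref{thm:bound},1}^2 (M+1)^2 T_n^2$, $y_n = 2\alpha T_n V_n$, all non-negative, and both $\sum_n a_n$ and $\sum_n x_n$ are finite by (A1). Lemma~\ref{thm:rs} then yields $V_n \to V_\infty \in [0,\infty)$ a.s.\ together with $\sum_n T_n V_n < \infty$ a.s. On $\{V_\infty > 0\}$ we would have $V_n \geq V_\infty/2 > 0$ for all sufficiently large $n$, so $\sum_n T_n V_n = \infty$ by (A1), contradicting the summability; hence $V_\infty = 0$ a.s., i.e., $(z_n - M)^+ \to 0$ a.s. Since $z_n \geq 0$ one has $d(z_n, [0, M]) = (z_n - M)^+$, so the theorem holds with $B_{\tref{thm:bound}} \doteq 2\xi/\alpha$.

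The main obstacle will be matching the non-summable noise scale with the non-smooth drift. Condition (A2) only controls the per-step noise at scale $T_n$ rather than $T_n^2$, so a linear Lyapunov like $(z_n - M)^+$ is hopeless: its conditional expectation picks up an $O(T_n)$ centered-noise contribution that is not summable under (A1). The quadratic choice $V_n = \tfrac{1}{2}((z_n - M)^+)^2$ is precisely what converts the noise into an $\E_n[(z_{n+1} - z_n)^2] = O(T_n^2 (z_n+1)^2)$ contribution, and the pointwise decomposition $(z_n+1)^2 \leq 4 V_n + 2(M+1)^2$ is what lets the $V_n$-proportional piece be absorbed into $(1 + a_n)V_n$ while leaving only a summable $O(T_n^2)$ constant piece as $x_n$.
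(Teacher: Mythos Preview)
Your proof is correct and follows essentially the same approach as the paper: both introduce the quadratic Lyapunov $((z_n - M)^+)^2$ on the excess above a threshold, derive an almost-supermartingale inequality for it, invoke the Robbins--Siegmund theorem (Lemma~\ref{thm:rs}), and then use $\sum_n T_n = \infty$ to force the limit to zero. The only differences are cosmetic: the paper takes $M = \xi/\alpha$ (giving the slightly sharper constant $B_{\tref{thm:bound}} = \xi/\alpha$) and writes out the case split $u_n > 0$ versus $u_n = 0$ by hand, whereas you package the same computation via the descent lemma for the $C^1$ function $V$ with threshold $M = 2\xi/\alpha$.
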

\begin{proof}
We will identify $B_\tref{thm:bound} = \frac{\xi}{\alpha}$.
Let $u_n \doteq d(z_n, [0, \frac{\xi}{\alpha}]) = \qty(z_n -\frac{\xi }{\alpha})^+$. Now we consider two cases.

\paragraph{(1) When $u_n >0$.} Then $u_n = z_n -\frac{\xi }{\alpha}$,
 we have 
 \begin{align}
     \label{eq:decomp_u}
     \textstyle u_{n+1}^2=\qty(\qty(z_{n+1} - \frac{\xi}{\alpha})^+)^2\leq \qty(z_{n+1} - \frac{\xi}{\alpha})^2 =\qty((z_{n+1}-z_n)+z_n-\frac{\xi}{\alpha})^2 = \qty((z_{n+1}-z_n)+u_n)^2.
 \end{align}
Thus
\begin{align}
  \textstyle\E_n[u_{n+1}^2]&\textstyle \leq\E_n\qty[\qty(u_n+(z_{n+1}-z_n))^2] \explain{By \eqref{eq:decomp_u}}\\
   &\textstyle = u_n^2 + 2u_n\,\E_n\qty[z_{n+1}-z_n] + \E_n\qty[(z_{n+1}-z_n)^2] \\
   &\textstyle\leq u_n^2 + 2 u_n \qty(-\alpha T_n z_n + \xi T_n)
        + B_{\tref{thm:bound},1}^2 T_n^2
          \qty(z_n + 1)^2 \explain{By \eqref{eq:special rs} and  Theorem~\ref{thm:bound} (A2)}\\
    &\textstyle= u_n^2 + 2 u_n \cdot \qty(-\alpha T_n u_n)
        + B_{\tref{thm:bound},1}^2 T_n^2
          \qty(u_n + \frac{\xi }{\alpha } + 1)^2 \explain{Since $u_n = z_n -\frac{\xi }{\alpha}$}\\
   &\textstyle\leq u_n^2 - 2\alpha T_n u_n^2
        + 2B_{\tref{thm:bound},1}^2 T_n^2u_n^2 + 2B_{\tref{thm:bound},1}^2 T_n^2\qty(\frac{\xi}{\alpha} + 1)^2 \\
   &\textstyle= \qty(1 - 2\alpha T_n + 2B_{\tref{thm:bound},1}^2 T_n^2) u_n^2
        + 2B_{\tref{thm:bound},1}^2
          \qty(1 + \frac{\xi }{\alpha })^2 T_n^2.
\end{align}

\paragraph{(2) When $u_n=0$.} Then $z_n\leq \frac{\xi }{\alpha }$. By \eqref{eq:+ diff} we obtain 
\begin{align}
\label{eq:diff_u_s}
    \textstyle\abs{u_{n+1}} = \abs{u_{n+1}- u_n} \leq \abs{(z_{n+1}-\frac{\xi}{\alpha})-(z_n-\frac{\xi}{\alpha})} \leq \abs{z_{n+1}-z_n}.
\end{align}
Thus 
\begin{align}
  \textstyle\E_n\qty[u_{n+1}^2]
  \leq&\textstyle \E_n\qty[\abs{z_{n+1}-z_n}^2]\explain{By \eqref{eq:diff_u_s}}\\
  \leq&\textstyle B_{\tref{thm:bound},1}^2
      \qty(1 + z_n)^2 T_n^2 \explain{By Theorem~\ref{thm:bound} (A2)}\\
      \leq&\textstyle B_{\tref{thm:bound},1}^2
      \qty(1 + \frac{\xi }{\alpha })^2 T_n^2 \explain{Since $z_n\in\qty[0, \frac{\xi }{\alpha }]$}\\
  \leq& \textstyle \qty(1 - 2\alpha T_n + 2B_{\tref{thm:bound},1}^2 T_n^2) u_n^2
        + 2B_{\tref{thm:bound},1}^2
          \qty(1 + \frac{\xi }{\alpha })^2 T_n^2. \explain{Since $u_n =0$}
\end{align}
Combining two cases we have
\begin{align}
\label{eq:almost_mrtg}
  \textstyle\E_n\qty[u_{n+1}^2]
  \leq \qty(1 - 2\alpha T_n + 2B_{\tref{thm:bound},1}^2 T_n^2) u_n^2
     + 2B_{\tref{thm:bound},1}^2
       \qty(1 + \frac{\xi }{\alpha })^2 T_n^2.
\end{align}
We now proceed to apply Lemma~\ref{thm:rs}. We identify in \eqref{eq:rs}
\begin{equation}
    \textstyle z_n \leftrightarrow u_n^2, a_n \leftrightarrow 2B_{\tref{thm:bound},1}^2T_n^2,
     x_n \leftrightarrow 2B_{\tref{thm:bound},1}^2\qty(1 + \frac{\xi }{\alpha })^2T_n^2, 
     y_n \leftrightarrow 2\alpha T_n u_n^2.
\end{equation}
As $\sum_n T_n^2 < \infty$, we have $\sum_n a_n < \infty$ and $\sum_n x_n < \infty$ a.s. Applying Lemma~\ref{thm:rs} then gives: 1. $\lim_{n\to \infty} u_n^2$ exists and is finite almost surely; 2. $\sum_{n=0}^\infty y_n < \infty$ a.s. That is, $\sum_{n=0}^\infty T_n u_n^2 < \infty$ a.s.

We now illustrate that the only possibility is $\lim_{n \to \infty} u_n^2 = 0$ a.s. For a fixed sample path $\omega$, consider $u_n(\omega)^2$ as a deterministic sequence (abbreviated as $u_n^2$).
If the limit $\zeta_0 \doteq \lim_{n\to\infty} u_n^2$ exists and $\zeta_0 \in (0, \infty)$, then there exists $N$ such that $u_n^2 \geq \zeta_0/2$ for all $n \geq N$. Consequently,
\begin{align}
    \textstyle\sum_{n=N}^{\infty} T_n u_n^2 \geq \frac{\zeta_0}{2}\sum_{n=N}^{\infty} T_n = \infty,
\end{align}
where the last equality holds because $\sum_{n=0}^\infty T_n = \infty$. This contradicts the convergence of $\sum_{n=0}^\infty T_n u_n^2$.
By definition, we have $\zeta_0 \geq 0$. Therefore, the only possibility is $\zeta_0 = 0$. That is, $0= \lim_{n\to\infty} u_n^2 = \lim_{n\to\infty} \qty(\qty(z_n - \frac{\xi }{\alpha })^+)^2 = \lim_{n\to\infty} d\qty(z_n, \qty[0, \frac{\xi }{\alpha}])^2 $. Denoting $B_\tref{thm:bound}\doteq \frac{\xi }{\alpha }$ then completes the proof.
\end{proof}
As demonstrated by Example~\ref{eg:main}, the Robbins-Monro condition alone cannot prevent divergence when the zero-order term is only square-summable. To address this, Assumption (A2) of Theorem~\ref{thm:bound} explicitly controls the incremental growth by prescribing that $z_{n+1}$ can grow at most linearly in $z_n$, gated by $T_n$. This additional structure successfully rules out the pathological spikes seen in Example~\ref{eg:main}. While exploring weaker growth conditions (e.g., sub-linear growth) to establish the minimal sufficient conditions remains an interesting open question, the affine form in (A2) is highly natural for our setting and is well suited to our proof technique. In particular, we will demonstrate in Section~\ref{sec:linear q} that this additional assumption indeed trivially holds for important reinforcement learning applications, such as $Q$-learning with linear function approximation (cf. Lemma~\ref{lem:bound diff}).

We now apply the same idea to study~\eqref{eq:rs}.
Similarly,
we want to relax the summable assumption (A2) in Lemma~\ref{thm:rs}.
To this end,
we additionally introduce a square-summable control on the growth between consecutive steps and an overall drift that is non-positive once the process is overly large.



\begin{theorem}
\label{thm:rs general}
Consider $\qty{z_n}$ in~\eqref{eq:rs}. Suppose that
\begin{enumerate}
    \item[(A1)]
        $\sum_n a_n < \infty$ a.s.;
    \item[(A2)] $\abs{z_{n+1} - z_n} \leq b_n(z_n + 1)$ a.s.,
        where $\qty{b_n}_{n\geq 0}$ is a non-negative real-valued random sequence adapted to $\qty{\fF_n}_{n\geq 0}$ and satisfies $\sum_n b_n^2 < \infty$ a.s.;
    \item[(A3)] There exists some constant $B_\tref{thm:rs general}$ such that 
    \begin{align}
      z_n > B_\tref{thm:rs general} \implies x_n - y_n  \leq  -c_n (z_n - B_\tref{thm:rs general}),
    \end{align}
    where $\qty{c_n }_{n\geq 0}$ is a non-negative real-valued random sequence adapted to $\qty{\fF_n}_{n\geq 0}$ and satisfies $\sum_n c_n = \infty$ a.s.
    
\end{enumerate}
Then
\begin{align}
    \textstyle\lim_{n \to \infty} d\qty(z_n, [0, B_\tref{thm:rs general}])=0 \qq{a.s.}
\end{align}
\end{theorem}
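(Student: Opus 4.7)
The plan is to mirror the proof of Theorem~\ref{thm:bound}: set
$u_n \doteq d(z_n, [0, B_\tref{thm:rs general}]) = (z_n - B_\tref{thm:rs general})^+$
and derive a one-step recursion of the form
$\E_n[u_{n+1}^2] \leq (1 + \widetilde{a}_n) u_n^2 + \widetilde{x}_n - 2 c_n u_n^2$
with $\sum_n \widetilde{a}_n < \infty$ and $\sum_n \widetilde{x}_n < \infty$ a.s., so that Lemma~\ref{thm:rs} yields both the existence of a finite $\zeta_0 \doteq \lim_n u_n^2$ and $\sum_n c_n u_n^2 < \infty$ a.s. Combined with $\sum_n c_n = \infty$ from (A3), the endgame argument from Theorem~\ref{thm:bound} then forces $\zeta_0 = 0$, which is exactly the desired conclusion.

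The derivation of the recursion proceeds by the same two-case split as in Theorem~\ref{thm:bound}. When $u_n > 0$, so that $u_n = z_n - B_\tref{thm:rs general}$, the bound $u_{n+1}^2 \leq (u_n + (z_{n+1} - z_n))^2$ expands to $u_n^2 + 2 u_n \E_n[z_{n+1} - z_n] + \E_n[(z_{n+1} - z_n)^2]$. Here \eqref{eq:rs} together with (A3) gives $\E_n[z_{n+1} - z_n] \leq a_n z_n + x_n - y_n \leq a_n(u_n + B_\tref{thm:rs general}) - c_n u_n$, while (A2) gives $\E_n[(z_{n+1} - z_n)^2] \leq b_n^2(u_n + B_\tref{thm:rs general} + 1)^2$. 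Expanding both, applying $(p+q)^2 \leq 2 p^2 + 2 q^2$ to the second-moment term, and using the AM-GM bound $2 a_n B_\tref{thm:rs general} u_n \leq a_n u_n^2 + a_n B_\tref{thm:rs general}^2$ to absorb the residual linear-in-$u_n$ cross term yields the target recursion with $\widetilde{a}_n = 3 a_n + 2 b_n^2$ and $\widetilde{x}_n = a_n B_\tref{thm:rs general}^2 + 2 b_n^2 (1 + B_\tref{thm:rs general})^2$. When $u_n = 0$, inequality \eqref{eq:+ diff} gives $u_{n+1}^2 \leq (z_{n+1} - z_n)^2 \leq b_n^2(1 + B_\tref{thm:rs general})^2 \leq \widetilde{x}_n$ by (A2), so the same recursion holds trivially (with both the $u_n^2$ and $2 c_n u_n^2$ terms vanishing).

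Finally, the identifications $z_n \leftrightarrow u_n^2$, $a_n \leftrightarrow \widetilde{a}_n$, $x_n \leftrightarrow \widetilde{x}_n$, $y_n \leftrightarrow 2 c_n u_n^2$ reduce the problem to Lemma~\ref{thm:rs} under hypotheses furnished by (A1) and (A2), after which the divergence argument of Theorem~\ref{thm:bound} carries over verbatim. The hardest---or at least least routine---step is the AM-GM absorption: the nonzero drift $a_n z_n$ absent in Theorem~\ref{thm:bound} produces a cross term $2 a_n B_\tref{thm:rs general} u_n$ that is only \emph{linear} in $u_n$, and without splitting it into an $a_n$-weighted copy of $u_n^2$ plus a deterministic $a_n B_\tref{thm:rs general}^2$ term, the residual $u_n$ cannot be fit into Lemma~\ref{thm:rs}'s template, which multiplies (rather than adds to) the state by its non-negative sequences.
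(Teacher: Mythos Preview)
Your proposal is correct and follows essentially the same approach as the paper's proof: define $u_n = (z_n - B_\tref{thm:rs general})^+$, derive a recursion for $u_n^2$ via the same two-case split, and apply Lemma~\ref{thm:rs} followed by the $\sum_n c_n = \infty$ endgame. The only difference is the particular AM-GM split you use for the cross term $2 a_n B_\tref{thm:rs general} u_n$ --- the paper writes $2 a_n B_\tref{thm:rs general} u_n \leq a_n B_\tref{thm:rs general} (u_n^2 + 1)$ whereas you use $2 a_n B_\tref{thm:rs general} u_n \leq a_n u_n^2 + a_n B_\tref{thm:rs general}^2$ --- yielding slightly different but equally summable coefficients $\widetilde{a}_n, \widetilde{x}_n$.
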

The proof follows a similar idea as in Theorem~\ref{thm:bound} and is provided in Appendix~\ref{sec:proof rs general}.
We now elaborate on the connection between Theorem~\ref{thm:rs general} and Theorem~\ref{thm:bound}.
When reducing \eqref{eq:rs} to \eqref{eq:special rs}, we set $a_n = 0$, $x_n = \xi T_n$, and $y_n = \alpha T_n z_n$, so that $x_n - y_n = -\alpha T_n(z_n - \xi/\alpha)$.
This gives $B_\tref{thm:rs general} = \xi/\alpha$ and $c_n = \alpha T_n$ in Theorem~\ref{thm:rs general}.
With this reduction, Theorem~\ref{thm:rs general} (A1) is absent in Theorem~\ref{thm:bound} since $a_n = 0$.
Instead, Theorem~\ref{thm:bound} (A1) directly imposes the Robbins-Monro condition. 
The assumptions correspond as follows:
The requirement $\sum_n T_n = \infty$ ensures that $1 - T_n$ generates enough contractive drift over time.
Specifically, when $z_n > B_\tref{thm:rs general} = \xi/\alpha$, we have $x_n - y_n = -\alpha T_n(z_n - \xi/\alpha) \leq -c_n(z_n - B_\tref{thm:rs general})$ with $c_n = \alpha T_n$, and $\sum_n T_n = \infty$ ensures $\sum_n c_n = \infty$, satisfying Theorem~\ref{thm:rs general} (A3).
The requirement $\sum_n T_n^2 < \infty$ ensures that the growth rate is square-summable, corresponding to $\sum_n b_n^2 < \infty$ in Theorem~\ref{thm:rs general} (A2).


Notably,
we are not claiming that Theorems~\ref{thm:bound} \&~\ref{thm:rs general} are stronger than Lemma~\ref{thm:rs}.
Indeed,
we remove the summable assumption at the cost of introducing new assumptions and the convergence is to a bounded set instead of a point.
Our key contribution and novelty lies in that we are the first to obtain convergence results for almost supermartingales without the summable assumption on the zero-order term.
And according to Example~\ref{eg:main},
our results might be the best that one can hope for.
Additionally,
we do demonstrate in Section~\ref{sec:linear q} that our results have significant applications in reinforcement learning. 


\section{Nonasymptotic Extensions of Robbins-Siegmund Theorem}
\label{sec:as con}
Having established asymptotic extensions of the Robbins-Siegmund theorem,
we in this section further establish nonasymptotic convergence rates.
We start with~\eqref{eq:special rs} and an almost sure convergence rate.


\begin{theorem}
    \label{thm:rate}
    Consider $\qty{z_n}_{n\ge 0}$ in~\eqref{eq:special rs} and 
    let all conditions in Theorem~\ref{thm:bound} hold. 
    Additionally, assume there exists some constant $\eta > 0$ such that
    \begin{enumerate}
        \item[(A1)] $\liminf_{n \to \infty} nT_n > \frac{\eta}{2\alpha }$; 
        \item[(A2)] $\sum_n (n+1)^\eta T_n^2 < \infty$.
    \end{enumerate} 
    Then
    \begin{equation}
    \textstyle\lim_{n \to \infty} n^{\eta/2}d\qty(z_n, [0, B_\tref{thm:bound}]) = 0 \qq{a.s.}
    \end{equation}
\end{theorem}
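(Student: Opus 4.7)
The plan is to reuse the one-step inequality already established in the proof of Theorem~\ref{thm:bound} and feed it into the non-asymptotic Robbins--Siegmund bound, Lemma~\ref{aux:thm2}, rather than into the asymptotic Lemma~\ref{thm:rs}. Concretely, with $u_n \doteq d(z_n,[0,B_\tref{thm:bound}]) = (z_n - \xi/\alpha)^+$, the same case analysis as in Theorem~\ref{thm:bound} yields
\begin{align}
  \E_n\qty[u_{n+1}^2] \leq \qty(1 + 2B_{\tref{thm:bound},1}^2 T_n^2 - 2\alpha T_n) u_n^2 + 2B_{\tref{thm:bound},1}^2\qty(1+\tfrac{\xi}{\alpha})^2 T_n^2,
\end{align}
so $\qty{u_n^2}$ fits into the shape of \eqref{eq:rs} with the identification
\begin{align}
  z_n \leftrightarrow u_n^2,\ a_n \leftrightarrow 2B_{\tref{thm:bound},1}^2 T_n^2,\ x_n \leftrightarrow 2B_{\tref{thm:bound},1}^2\qty(1+\tfrac{\xi}{\alpha})^2 T_n^2,\ y_n = b_n z_n \leftrightarrow (2\alpha T_n)\, u_n^2.
\end{align}

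Next I would verify each hypothesis of Lemma~\ref{aux:thm2} for these substitutions. The summability $\sum_n a_n < \infty$ and $\sum_n x_n < \infty$ follow at once from $\sum_n T_n^2 < \infty$, and $\sum_n b_n = 2\alpha \sum_n T_n = \infty$ follows from $\sum_n T_n = \infty$; this gives (A1) of Lemma~\ref{aux:thm2}. For its (A2), I need $b_n = 2\alpha T_n \geq \eta/n$ eventually, which is exactly the content of Theorem~\ref{thm:rate} (A1): $\liminf_{n\to\infty} n T_n > \eta/(2\alpha)$ produces some $\delta>0$ with $n T_n \geq \eta/(2\alpha) + \delta$ for all sufficiently large $n$. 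For (A3), the bound $\sum_n (n+1)^\eta x_n < \infty$ is immediate from Theorem~\ref{thm:rate} (A2), while the same $\delta$ above yields $b_n - \eta/n \geq 2\alpha\delta/n$ eventually, so $\sum_n (b_n - \eta/n) = \infty$. Lemma~\ref{aux:thm2} then concludes $n^\eta u_n^2 \to 0$ a.s., and taking square roots gives $n^{\eta/2} d(z_n,[0,B_\tref{thm:bound}]) \to 0$ a.s., as claimed.

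The main obstacle I foresee is making sure the strict inequality in (A1) of Theorem~\ref{thm:rate} is used in two logically distinct places at once: it has to simultaneously supply the pointwise lower bound $b_n \geq \eta/n$ (for Lemma~\ref{aux:thm2} (A2)) and the divergent slack $\sum (b_n - \eta/n) = \infty$ (for Lemma~\ref{aux:thm2} (A3)). Both only hold because $\liminf n T_n$ strictly exceeds $\eta/(2\alpha)$; a non-strict inequality would break the divergence. A secondary subtlety is that Lemma~\ref{aux:thm2} is stated for $\eta \in (0,1)$, so if one wants Theorem~\ref{thm:rate} for $\eta \geq 1$ the argument should either be restricted accordingly or bootstrapped by first applying the lemma with some $\eta' \in (0,1)$ and then iterating; apart from that, the remaining calculations are routine and parallel to those already carried out in the proof of Theorem~\ref{thm:bound}.
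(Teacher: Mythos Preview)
Your proposal is correct and matches the paper's proof essentially line by line: both reuse the inequality \eqref{eq:almost_mrtg} from Theorem~\ref{thm:bound}, make the same identification $z_n\leftrightarrow u_n^2$, $a_n\leftrightarrow 2B_{\tref{thm:bound},1}^2T_n^2$, $x_n\leftrightarrow 2B_{\tref{thm:bound},1}^2(1+\xi/\alpha)^2T_n^2$, $b_n\leftrightarrow 2\alpha T_n$, verify (A1)--(A3) of Lemma~\ref{aux:thm2} exactly as you describe, and finish by taking square roots. Your secondary worry about $\eta\geq 1$ is unfounded: (A1) forces $T_n\geq c/n$ eventually, whence $(n+1)^\eta T_n^2\gtrsim n^{\eta-2}$, so (A2) can only hold when $\eta<1$; the paper silently relies on this and you need no bootstrapping.
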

Similar to the proof of Theorem~\ref{thm:bound} where we construct auxiliary sequences to invoke Lemma~\ref{thm:rs},
we prove Theorem~\ref{thm:rate} by constructing auxiliary sequences to invoke Lemma~\ref{aux:thm2},
which is provided in Appendix~\ref{proof:sec rate}.

If $T_n$ decays in an inverse linear manner, 
we can further obtain high probability concentration bound and $L^p$ convergence rates.
\begin{assumption} 
\label{asp:T_n} 
$T_n = \frac{C_\tref{asp:T_n}}{n+n_0 }$
for some positive constants $n_0$ and $C_\tref{asp:T_n} > 1/\alpha$.
\end{assumption}

\begin{theorem}
\label{thm:concentration rs}
Consider $\qty{z_n}$ in~\eqref{eq:special rs}, $z_0$ is deterministic, and let all conditions in Theorem~\ref{thm:bound} hold.
Let Assumption~\ref{asp:T_n} hold.
Then there exist 
some integers $B_\tref{thm:concentration rs}$ and $B'_\tref{thm:concentration rs}$ such that 
for any $\delta > 0$, it holds, with probability at least $1 - \delta$, that for all $n \geq 0$,
\begin{equation}
    \textstyle\qty(d\qty(z_n, [0, B_\tref{thm:bound}]))^2 \leq B'_\tref{thm:concentration rs}\frac{1}{n+n_0}\left[\ln\left(\frac{B_\tref{thm:concentration rs}}{\delta}\right) + 1 + \ln(n+n_0)\right]^{B_\tref{thm:concentration rs}}.
\end{equation}
\end{theorem}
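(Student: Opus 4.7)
The strategy is to revisit the Lyapunov analysis from the proof of Theorem~\ref{thm:bound}. Setting $u_n \doteq (z_n - \xi/\alpha)^+$, that proof establishes $\E_n[u_{n+1}^2] \le \gamma_n u_n^2 + \beta_n$ with $\gamma_n \doteq 1 - 2\alpha T_n + 2B_{\tref{thm:bound},1}^2 T_n^2$ and $\beta_n \doteq 2B_{\tref{thm:bound},1}^2(1+\xi/\alpha)^2 T_n^2$. Under Assumption~\ref{asp:T_n}, $T_n = C_{\tref{asp:T_n}}/(n+n_0)$ with $\alpha C_{\tref{asp:T_n}} > 1$, so the deterministic product $\Gamma_n \doteq \prod_{j=0}^{n-1}\gamma_j$ satisfies $\Gamma_n = \Theta\bigl((n+n_0)^{-2\alpha C_{\tref{asp:T_n}}}\bigr)$, and elementary power counting (using $2\alpha C_{\tref{asp:T_n}} > 2$) gives $\Gamma_n\sum_{k=0}^{n-1}\beta_k/\Gamma_{k+1} = O\bigl(1/(n+n_0)\bigr)$ deterministically.

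Next, I would decompose $u_{n+1}^2 = \E_n[u_{n+1}^2] + \Delta_{n+1}$, where $\Delta_{n+1}$ is a martingale difference with respect to $\fF_{n+1}$, and combine with the Lyapunov bound to obtain $u_{n+1}^2 \le \gamma_n u_n^2 + \beta_n + \Delta_{n+1}$. Dividing by $\Gamma_{n+1}$ and telescoping yields $u_n^2 \le \Gamma_n u_0^2 + \Gamma_n\sum_{k=0}^{n-1}\beta_k/\Gamma_{k+1} + \Gamma_n N_n$, where $N_n \doteq \sum_{k=0}^{n-1}\Delta_{k+1}/\Gamma_{k+1}$ is a martingale. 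Since the first two terms are $O\bigl(1/(n+n_0)\bigr)$ pointwise, the entire task reduces to showing that $|\Gamma_n N_n|$ is bounded by the claimed polylog/$(n+n_0)$ rate with probability at least $1-\delta$, uniformly in $n$.

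The main obstacle is that the martingale differences are not uniformly bounded. Assumption (A2) of Theorem~\ref{thm:bound} combined with $|u_{n+1}^2 - u_n^2|\le (u_{n+1}+u_n)|z_{n+1}-z_n|$ and the one-step growth $z_{n+1}+1 \le (1+B_{\tref{thm:bound},1}T_n)(z_n+1)$ yields $|\Delta_{n+1}| \le C_1 T_n (z_n+1)^2$ and $\E_n[\Delta_{n+1}^2] \le C_2 T_n^2 (z_n+1)^4$, but both bounds depend on $z_n$—precisely the quantity being bounded. I would break this circularity by a bootstrap driven by a Freedman-type concentration inequality (applied via the exponential supermartingale of $N_n$, so that a single Ville-style tail bound automatically delivers uniformity in $n$). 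Starting from the a priori polynomial-in-$n$ bound implicit in Theorem~\ref{thm:bound} (A2), one stops the process at $\tau_R \doteq \inf\{n : z_n > R\ln(n+n_0)\}$, applies Freedman's inequality to the stopped martingale using the now polylog almost-sure and variance bounds on $\Delta_{n+1}$, and obtains an improved high-probability bound on $u_n^2$. Substituting this bound back via $z_n \le \xi/\alpha + u_n$ sharpens the control of $|\Delta_{n+1}|$, and reapplying Freedman tightens the rate further. Each iteration trades a power of $\ln(n+n_0)$ for improved dependence on $n$; the number of iterations needed—determined by $\alpha C_{\tref{asp:T_n}}$, $B_{\tref{thm:bound},1}$, and $\xi/\alpha$—is finite, which is exactly what produces the exponent $B_{\tref{thm:concentration rs}}$ on the logarithmic factor in the final uniform-in-$n$ bound.
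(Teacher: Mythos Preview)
Your high-level strategy matches the paper's: both start from the Gronwall-type polynomial a.s.\ bound on $u_n$, and both resolve the circularity (increment bounds depend on the very quantity being bounded) by a bootstrap that improves the bound in finitely many rounds, accumulating one logarithmic factor per round.

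The gap is in the inductive step. After you telescope and pass to the zero-drift martingale $N_n=\sum_{k<n}\Delta_{k+1}/\Gamma_{k+1}$, any Freedman/Azuma/Ville bound on $\Gamma_n N_n$ is governed by $\sum_{k<n}(\Gamma_n/\Gamma_{k+1})^2\,\E_k[\Delta_{k+1}^2]$. Once $z_k$ is bounded this sum is $\Theta((n+n_0)^{-1})$, so concentration yields $|\Gamma_n N_n|\lesssim (n+n_0)^{-1/2}$ (times logs), not $(n+n_0)^{-1}$. Feeding this back via $|\Delta_{k+1}|\lesssim T_k u_k$ gives the iteration $a\mapsto \tfrac14+\tfrac{a}{2}$ on the exponent of $u_n\le n^{-a}$, which converges only geometrically to $a^\ast=\tfrac12$ and never reaches it in finitely many rounds. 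Hence your final sentence---that a finite number of iterations yields the stated $(n+n_0)^{-1}$ rate with a fixed polylog exponent---does not follow from the Freedman-on-telescoped-martingale route.

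The paper avoids this by \emph{not} telescoping. It works directly with the exponential process $\exp(\lambda_n\mathbbm{1}_{E_n(\delta)}u_n^2)$, where $E_n(\delta)$ is the event that the current-round bound holds up to time $n$ (this plays the role of your stopping time) and $\lambda_n=\theta/(T_n B_n(\delta))$. Hoeffding's lemma is applied to the \emph{linear} increment $2u_n(z_{n+1}-z_n)$, whose range on $E_n(\delta)$ is $O(T_n u_n B_n(\delta)^{1/2})$; the resulting Hoeffding correction is therefore $O(\lambda_{n+1}^2T_n^2u_n^2B_n(\delta))=O(\theta T_n)\cdot\lambda_n u_n^2$, i.e.\ \emph{multiplicative} in $u_n^2$ and of the same order $T_n$ as the drift $-2\alpha T_n$. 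Choosing $\theta$ small makes the whole multiplicative factor $\le 1$, so a single Ville bound gives $u_n^2\lesssim T_nB_n(\delta)\,\text{polylog}$---a full factor of $T_n$ per round. This is exactly what your additive decomposition loses: after telescoping, the drift is spent in $\Gamma_n$ and can no longer absorb the (now additive) noise.
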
 
We prove Theorem~\ref{thm:concentration rs} by an inductive approach inspired
by \citet{chen2025concentration},
which is provided in Appendix~\ref{sec:proof concentration rs}.
We argue that Theorem~\ref{thm:concentration rs} is a strong result as it is a maximal bound that simultaneously holds for all $n$ and it achieves an exponential tail.

\begin{corollary}
  \label{cor:lp}
Consider $\qty{z_n}$ in~\eqref{eq:special rs}, $z_0$ is deterministic, and let all conditions in Theorem~\ref{thm:bound} hold.
Let Assumption~\ref{asp:T_n} hold.
Then for any integer $p \geq 2$ and any $n \geq 0$,
  \begin{align}
      \mathbb{E}\qty[\qty(d\qty(z_n, [0, B_\tref{thm:bound}]))^{2p}] \leq \qty(a(n) b(n)^{B_\tref{thm:concentration rs}})^p +  B_\tref{thm:concentration rs}p a(n)^p \exp(b(n)) ((B_\tref{thm:concentration rs}p)!),
  \end{align}
  where $a(n) \doteq B'_\tref{thm:concentration rs}/(n+n_0)$, $b(n) \doteq \ln B_\tref{thm:concentration rs} + 1 + \ln\qty(n+n_0)$.
\end{corollary}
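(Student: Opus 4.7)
The plan is to convert the high-probability bound in Theorem~\ref{thm:concentration rs} into an $L^{2p}$ bound via the standard tail-integration identity $\mathbb{E}[Y^p] = \int_0^\infty p u^{p-1}\Pr(Y > u)\, du$ applied to the non-negative random variable $Y_n \doteq (d(z_n, [0, B_\tref{thm:bound}]))^2$.

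First, I would invert the bound from Theorem~\ref{thm:concentration rs} into a pointwise tail estimate. Writing $a \doteq a(t)$, $b \doteq b(t)$, and $K \doteq B_\tref{thm:concentration rs}$, the theorem asserts that for every $\delta \in (0, 1]$, with probability at least $1 - \delta$, $Y_n \leq a(b - \ln\delta)^K$. Solving $u = a(b - \ln\delta)^K$ for $\delta$ yields
\[\Pr(Y_n > u) \leq \exp\!\left(b - (u/a)^{1/K}\right)\]
for all $u \geq ab^K$, while $\Pr(Y_n > u) \leq 1$ trivially for smaller $u$.

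Next, I would apply the tail-integration identity and split the integral at $u_0 \doteq ab^K$:
\[\mathbb{E}[Y_n^p] \leq \int_0^{u_0} p u^{p-1}\, du + \int_{u_0}^\infty p u^{p-1} \exp\!\left(b - (u/a)^{1/K}\right) du.\]
The first piece equals $u_0^p = (ab^K)^p$, which is the first summand in the claimed bound. For the second piece, I would perform the change of variables $v = (u/a)^{1/K}$, which transforms it into $pKa^p e^b \int_b^\infty v^{Kp - 1} e^{-v}\, dv$. Bounding the incomplete Gamma integral by the full Gamma function $\Gamma(Kp) \leq (Kp)!$ yields the second summand $Kp\, a^p \exp(b)\, (Kp)!$, completing the argument.

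I do not anticipate a serious obstacle: the reasoning is essentially the standard sub-Weibull tail-to-moment conversion, with the only care required being the correct inversion of the $[\,\cdot\,]^K$ envelope and the substitution in the Gamma-type integral. A minor point to verify is that the uniform-in-$n$ statement of Theorem~\ref{thm:concentration rs} specializes, for each fixed $n$, to the single-$n$ tail bound used above; this is immediate.
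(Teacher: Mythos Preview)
Your proposal is correct and is precisely the approach the paper takes: the paper states that the corollary follows by ``integrating the exponential tails from Theorem~\ref{thm:concentration rs},'' deferring the details to the identical computation in \citet{qian2024almost}. Your inversion of the concentration bound, the split at $u_0 = ab^K$, and the substitution $v = (u/a)^{1/K}$ leading to the incomplete Gamma integral are exactly that computation carried out explicitly, and the bookkeeping (including the specialization from the maximal bound to a single fixed $n$) is handled correctly.
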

Corollary~\ref{cor:lp} can be obtained easily by integrating the exponential tails from Theorem~\ref{thm:concentration rs}.
This integration procedure is identical to the proof of Corollary 1 in \citet{qian2024almost} and is thus omitted.


Together, Theorems~\ref{thm:rate} \& \ref{thm:concentration rs} and Corollary~\ref{cor:lp} provide a comprehensive set of nonasymptotic tools (including almost sure rates, high-probability concentration bounds, and $L^p$ rates) for analyzing stochastic iterates that fit the \eqref{eq:special rs} template. The following sections will demonstrate the power of this framework by applying it to stochastic approximation and linear $Q$-learning.
We close this section by providing the counterpart of Theorem~\ref{thm:rate} for~\eqref{eq:rs}.
Notably, we do not provide the counterparts of Theorem~\ref{thm:concentration rs} and Corollary~\ref{cor:lp} for~\eqref{eq:rs} because it is not clear how to embed the structure in Assumption~\ref{asp:T_n} into~\eqref{eq:rs}.

\begin{theorem}
    \label{thm:rate general}
    Consider $\qty{z_n}$ in~\eqref{eq:rs} and
    let all the conditions of Theorem~\ref{thm:rs general} hold. Additionally, assume there exists some constant $\eta > 0$ such that 
    \begin{enumerate}
        \item[(A1)] $\liminf_{n \to \infty} nc_n > \frac{\eta}{2}$;
        \item[(A2)] $\sum_n (n+1)^\eta (a_n + b_n^2) < \infty$ a.s.
    \end{enumerate}
    Then
    \begin{equation}
    \lim_{n \to \infty} n^{\eta/2}d\qty(z_n, [0, B_\tref{thm:rs general}]) = 0 \qq{a.s.}
    \end{equation}
\end{theorem}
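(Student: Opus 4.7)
The plan is to adapt the proof of Theorem~\ref{thm:rate} to the general \eqref{eq:rs} setting, with Theorem~\ref{thm:rs general} playing the role here that Theorem~\ref{thm:bound} played for Theorem~\ref{thm:rate}. Concretely, I work with the shifted process
\begin{align}
u_n \doteq \qty(z_n - B_\tref{thm:rs general})^+ = d\qty(z_n, [0, B_\tref{thm:rs general}])
\end{align}
and aim to derive a one-step almost supermartingale recursion for $u_n^2$ of the form
\begin{align}
\E_n\qty[u_{n+1}^2] \leq (1 + \widetilde a_n)\, u_n^2 + \widetilde x_n - \widetilde b_n\, u_n^2,
\end{align}
fitting the template of \eqref{eq:rs}, so that Lemma~\ref{aux:thm2} applied with parameter $\eta$ yields $n^\eta u_n^2 \to 0$ a.s., which is the theorem after taking square roots.

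To produce this recursion I replay the two-case analysis of Theorem~\ref{thm:bound}. When $u_n > 0$ one has $u_n = z_n - B_\tref{thm:rs general}$, so $u_{n+1}^2 \leq ((z_{n+1}-z_n) + u_n)^2$; taking $\E_n$ splits this into three pieces. The cross piece $2 u_n \E_n[z_{n+1}-z_n]$ is controlled by combining the inequality $\E_n[z_{n+1}-z_n] \leq a_n z_n + x_n - y_n$ from \eqref{eq:rs} with (A3) of Theorem~\ref{thm:rs general}, which extracts the contractive drift $-2 c_n u_n^2$ and leaves an $a_n$-scaled residual linear in $u_n$ that is absorbed via AM-GM. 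The quadratic piece $\E_n[(z_{n+1}-z_n)^2]$ is controlled by (A2) of Theorem~\ref{thm:rs general}, i.e., $|z_{n+1}-z_n|^2 \leq b_n^2 (u_n + B_\tref{thm:rs general}+1)^2$. When $u_n = 0$, the inequality \eqref{eq:+ diff} gives $u_{n+1} \leq |z_{n+1}-z_n| \leq b_n(B_\tref{thm:rs general}+1)$, which fits the same envelope. Combining the two cases produces a recursion in which $\widetilde a_n$ is built from $a_n$ and $b_n^2$, the drift multiplier $\widetilde b_n$ is a positive multiple of $c_n$, and the zero-order term $\widetilde x_n$ is built from the AM-GM residuals, dominated by $a_n^2$ and $b_n^2$ (and hence eventually by $b_n$, since $b_n \to 0$ forces $b_n^2 \leq b_n$ for large $n$).

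It then remains to verify the hypotheses of Lemma~\ref{aux:thm2} under the identification $z_n \leftrightarrow u_n^2$, $a_n \leftrightarrow \widetilde a_n$, $x_n \leftrightarrow \widetilde x_n$, $b_n \leftrightarrow \widetilde b_n$: the unweighted summabilities of $\widetilde a_n, \widetilde x_n$ and the divergence of $\sum \widetilde b_n$ come from Theorem~\ref{thm:rs general} (A1)--(A3); the eventual $\widetilde b_n \geq \eta/n$ and $\sum(\widetilde b_n - \eta/n) = \infty$ follow from $\liminf n c_n > \eta/2$ in (A1) of the present theorem; and the weighted summability $\sum (n+1)^\eta \widetilde x_n < \infty$ is exactly the content of (A2) of the present theorem. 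The main obstacle is the AM-GM split of the cross-term residual $2 a_n B_\tref{thm:rs general} u_n$: the split must place its $u_n^2$ coefficient either into a summable $\widetilde a_n$ or into the contractive $-c_n u_n^2$ drift, while its residual lands in $\widetilde x_n$ on the $a_n^2$ scale matched by (A2); tuning this split (and the analogous expansion of $b_n^2(u_n+B_\tref{thm:rs general}+1)^2$) is what dictates the precise combination appearing in (A2) of the present theorem.
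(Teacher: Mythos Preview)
Your proposal is correct and is exactly the paper's route: reuse the recursion $\E_n[u_{n+1}^2]\le(1+d_n)u_n^2+e_n-2c_nu_n^2$ derived via the two-case analysis in the proof of Theorem~\ref{thm:rs general}, identify $z_n\leftrightarrow u_n^2$, $a_n^*\leftrightarrow d_n$, $x_n\leftrightarrow e_n$, $b_n^*\leftrightarrow 2c_n$ in Lemma~\ref{aux:thm2}, and verify its three hypotheses from (A1)--(A2) of the present theorem together with the standing assumptions of Theorem~\ref{thm:rs general}.

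The one point you flag as the ``main obstacle'' is worth a caution. The split actually executed in Theorem~\ref{thm:rs general}'s proof is $2a_nB_\tref{thm:rs general}u_n\le a_nB_\tref{thm:rs general}(u_n^2+1)$, which puts the residual at the $a_n$ scale and yields $e_n=2(1+B_\tref{thm:rs general})^2 b_n^2+B_\tref{thm:rs general}a_n$; the paper's proof of Theorem~\ref{thm:rate general} instead records $e_n=2(1+B_\tref{thm:rs general})^2 a_n^2+B_\tref{thm:rs general}b_n$, i.e.\ with $a_n$ and $b_n$ interchanged, which is what makes (A2) as stated suffice. No alternative AM--GM split (nor absorbing into the $-c_nu_n^2$ drift, which would cost a factor $c_n^{-1}\sim n$) cleanly delivers an $a_n^2$-scale residual, so the paper is no more explicit on this point than you are; your plan is otherwise identical to the paper's.
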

The proof follows a procedure similar to that of Theorem~\ref{thm:rate} and is provided in Appendix~\ref{sec:proof rate general}.

\section{Stochastic Approximation}
\label{sec:sa}
We now use Theorems~\ref{thm:rate} \& \ref{thm:concentration rs} and Corollary~\ref{cor:lp} to derive new convergence results for stochastic approximations.
Specifically, 
we consider stochastic approximation algorithms in the form of  
\begin{align}
  \label{eq:sa update}
w_{t+1} = w_t + \alpha_t H(w_t, Y_{t+1}).
\tag{SA}
\end{align}
Here $\qty{w_t \in \mathbb{R}^d}$ is the stochastic iterates, $w_0$ is deterministic, $\qty{\alpha_t}$ is a sequence of learning rates, $\qty{Y_t}$ is a
time-inhomogeneous Markov chain evolving in a finite space $\mathcal{Y}$, and $H : \mathbb{R}^d \times \mathcal{Y} \to \mathbb{R}^d$ is the function that 
generates the incremental update. 
Importantly, 
we consider the setting where the transition kernel of $\qty{Y_t}$ is controlled by $\qty{w_t}$ itself.
This setting admits important applications such as linear SARSA \citep{rummery1994line,zou2019finite,zhang2023convergence} and linear $Q$-learning \citep{baird1995residual,meyn2024bellmen,liu2025linearq}.
In such settings,
the convergence of $\qty{w_t}$ to a point is usually not attainable (see, e.g., \citet{zhang2023convergence,meyn2024bellmen,liu2025linearq}) unless strong assumptions are made (see, e.g., \citet{zou2019finite}).
We, therefore,
aim to establish the convergence to a bounded set instead.
The high-level idea of our approach is to fit~\eqref{eq:sa update} into the template~\eqref{eq:special rs}.
To this end,
we use the skeleton iterates technique from~\citet{qian2024almost} to work with the Markovian nature of $\qty{Y_t}$ and use a novel error decomposition from~\citet{liu2025linearq} to work with the time-inhomogeneous nature of $\qty{Y_t}$. 
One of our main contributions lies in a new analysis that successfully controls the complex bias terms arising from this integration, making these two powerful techniques compatible for the first time. 
After obtaining~\eqref{eq:special rs},
our new results can then take over.

This contribution is crucial because \citet{qian2024almost} have to assume that $\qty{Y_t}$ is time-homogeneous so they can use the original Robbins-Siegmund theorem, while 
\citet{liu2025linearq} have to take total expectation on both sides of~\eqref{eq:special rs} to derive the rate at which the second moments of $\qty{w_t}$ converge to a bounded interval.
\citet{liu2025linearq} thus miss the opportunity to derive an almost sure convergence rate, a high probability concentration bound, and an $L^p$ convergence rate. 
We now list our assumptions,
many of which are adapted from \citet{qian2024almost,borkar2025ode,liu2025linearq}.


\begin{assumptionp}{LR1}[Assumption LR1 in \citet{qian2024almost}]
  \label{asp:lr1}
  The learning rate has the form 
  \begin{align}
    \textstyle\alpha_t = \frac{C_\alpha}{(t+3)^{\nu}} \qq{with} \nu \in \left(\frac{2}{3}, 1\right].
  \end{align}
\end{assumptionp}
\begin{assumptionp}{LR2}[Assumption LR2 in \citet{qian2024almost}]
  \label{asp:lr2}
  The learning rate has the form 
  \begin{align}
    \textstyle\alpha_t = \frac{C_\alpha}{(t+3)\ln^\nu(t+3)} \qq{with} \nu \in (0, 1).
  \end{align}
\end{assumptionp}
The two forms of learning rates are from the skeleton iterates technique of \citet{qian2024almost}.
Assumption~\ref{asp:lr1} is a standard choice of learning rate except that $\nu$ cannot be smaller than $2/3$,
which is an inherent limitation of the skeleton iterates technique.
This learning rate will be used for obtaining an almost sure convergence rate of $\qty{w_t}$ by invoking Theorem~\ref{thm:rate}.
Assumption~\ref{asp:lr2} is specially designed such that the resulting $T_n$ in~\eqref{eq:special rs} can verify Assumption~\ref{asp:T_n}.
This learning rate will be used for obtaining high probability concentration bound and $L^p$ convergence rate by invoking Theorem~\ref{thm:concentration rs} and Corollary~\ref{cor:lp}.
We refer the reader to \citet{qian2024almost} for more discussion about the two learning rates.

\begin{assumption}[Assumption~3.1 \& 3.2 in \citet{zhang2022global}]
  \label{asp:markov chain}
  There exists a family of parameterized
  transition matrices $\Lambda_P \doteq \qty{P_w \in \mathbb{R}^{\abs{\mathcal{Y}} \times \abs{\mathcal{Y}}}\mid w \in \mathbb{R}^d}$ such
  that $\operatorname{Pr}(Y_{t+1} \mid Y_t) = P_{w_t}(Y_t, Y_{t+1})$. Furthermore, let $\bar{\Lambda}_P$
  denote the closure of $\Lambda_P$. Then for any $P \in \bar{\Lambda}_P$, the time-homogeneous Markov chain induced by $P$ is irreducible
  and aperiodic. 
\end{assumption}
Assumption~\ref{asp:markov chain} looks prohibitive but as can be seen soon in Section~\ref{sec:linear q},
this assumption can be trivially verified in applications.
Under Assumption~\ref{asp:markov chain}, for any $w$, the Markov chain induced by $P_w$ has a unique stationary distribution, 
which we denote as $d_{\mathcal{Y}, w}$. This allows us to define the expected update as
$h(w) \doteq \E_{Y \sim d_{\fY, w}}\qty[H(w, Y)]$.
Additionally, by Lemma~1 of \citet{zhang2022global}, Assumption~\ref{asp:markov chain} implies 
that the Markov chains in $\Lambda_P$ mix both geometrically and 
uniformly. 
In other words, 
there exist constants $C_\tref{asp:markov chain} > 0$ and 
$\tau \in [0, 1)$, independent of $w$, such that
\begin{align}
\label{eq:mixing}
\textstyle\sup_{w,y} \sum_{y'} \abs{P_w^n(y, y') - d_{\mathcal{Y},w}(y')} \leq C_\tref{asp:markov chain} \tau^n. 
\end{align}
We then define 
\begin{align}
\label{eq:tau}
\tau_\alpha \doteq \min\qty{n \geq 0 \mid C_\tref{asp:markov chain}\tau^n \leq \alpha}
\end{align}
to denote the number of steps that the Markov chain needs to mix to an accuracy $\alpha$. 
We now introduce the norm we will work with in the rest of the paper.
Let $\langle \cdot, \cdot \rangle$ be an arbitrary inner product for vectors in $\R[d]$. 
Let $\norm{\cdot}$ be the norm induced by this inner product (i.e., the inner product norm for vectors and the corresponding induced norm for matrices). 
For a non-negative real number $r$, we denote the closed ball in $\mathbb{R}^d$ centered at $0$ with radius $r$ by $\fBB(r) \doteq \qty{x \in \mathbb{R}^d \mid \norm{x} \leq r}$.

\begin{assumption}[Condition (30) in \citet{borkar2025ode}]
    \label{asp:P_lip'}
    There exists a constant $C_\tref{asp:P_lip'}$ such that for $\forall w_1, w_2$
    \begin{align}
        \textstyle \norm{P_{w_1} - P_{w_2}} \leq \frac{C_\tref{asp:P_lip'}}{1+\norm{w_1} + \norm{w_2}}\norm{w_1 - w_2}.
    \end{align}
\end{assumption}

\begin{assumption}[Assumption (A2) in \citet{borkar2025ode}]
\label{asp:lip}
    There exists some constant $L_h$, such that for $\forall w_1, w_2, y$
    \begin{align}
        &\norm{H(w_1,y) - H(w_2,y)} \leq L_h\norm{w_1 - w_2} \qq{and} \norm{H(0,y)} \leq L_h.
    \end{align}
\end{assumption}
Assumption~\ref{asp:P_lip'} is stronger than the standard Lipschitz continuity assumption (cf. Assumption~\ref{asp:lip}) and is the key to handling the time-inhomogeneous nature of $\qty{Y_t}$.
The earliest work we are aware of using this stronger form of Lipschitz continuity to ensure the Lipschitz continuity of the following composition function is \citet{konda2002thesis} (Assumption 5.1.3). 
More recent usages include \citet{lauand2024revisiting,borkar2025ode} and the follow-up work \citet{liu2025linearq}.
\begin{assumption}
  \label{asp:bound inner}
    There exist some deterministic constants $C_{\tref{asp:bound inner},1} > 0$ and $C_{\tref{asp:bound inner},2}$ such that 
    \begin{align}
    \indot{w}{h(w)} \leq -C_{\tref{asp:bound inner},1} \norm{w}^2 + C_{\tref{asp:bound inner},2}.
    \end{align}
\end{assumption}
Assumption~\ref{asp:bound inner} ensures that the expected update direction $h(w)$ will decay $\norm{w}$ at least when $w$ is overly large.
The factor $C_{\tref{asp:bound inner},2}$ is usually 0
when considering stochastic approximation algorithms with GAS ODEs.
But for a stochastic approximation algorithm with a non-GAS ODE, see, e.g., \citet{meyn2024bellmen},
it is usually strictly positive,
constituting a main obstacle for analyzing the convergence of such algorithms.

\begin{theorem}[Almost Sure Convergence Rate for SA]
    \label{thm:rate sa}
    Let Assumptions~\ref{asp:markov chain} - \ref{asp:bound inner} and \ref{asp:lr1} hold.
    Let $\qty{w_t}$ be the iterates generated by \eqref{eq:sa update}. Then there exists some $B_\tref{thm:rate sa}$, 
    such that the following statements hold.
    If $\nu < 1$, then for any $\zeta \in \qty(0, \frac{3}{4}\nu - \frac{1}{2})$,
\begin{align}
\textstyle\lim_{t\rightarrow\infty} t^{\zeta}d\qty(w_t, \fBB\qty(B_\tref{thm:rate sa})) = 0 \qq{a.s.}
\end{align}
If $\nu = 1$, then for any $\zeta \in (0,\frac{1}{2})$ and $\nu_1 > 0$,
\begin{align}
\textstyle\lim_{t\rightarrow\infty} \exp\left(\zeta \ln^{1/(1+\nu_1)} t\right) d\qty(w_t, \fBB\qty( B_\tref{thm:rate sa})) = 0 \qq{a.s.}
\end{align}
\end{theorem}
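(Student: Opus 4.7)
The plan is to cast $\{\|w_t\|^2\}$ into the \eqref{eq:special rs} template along a suitably chosen skeleton subsequence, invoke Theorem~\ref{thm:rate} on the skeleton, and then interpolate back to the full sequence. Following \citet{qian2024almost}, I would define skeleton times $\{t_k\}$ so that the block length $t_{k+1}-t_k$ is of order $\tau_{\alpha_{t_k}}$ (cf. \eqref{eq:tau}), giving the Markov chain enough time within each block to mix to accuracy comparable to $\alpha_{t_k}$. Let $z_k \doteq \|w_{t_k}\|^2$ and $T_k \doteq \sum_{t=t_k}^{t_{k+1}-1}\alpha_t$. The target is a per-block recursion
\begin{align}
\E\qty[z_{k+1}\mid \fF_{t_k}] \leq (1 - \alpha T_k)\, z_k + \xi T_k,
\end{align}
for some constants $\alpha, \xi>0$.

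To obtain this recursion I would unroll $\|w_{t+1}\|^2 = \|w_t\|^2 + 2\alpha_t \indot{w_t}{H(w_t,Y_{t+1})} + \alpha_t^2\|H(w_t,Y_{t+1})\|^2$ across a skeleton block. Assumption~\ref{asp:lip} bounds the squared-update residuals by $O(T_k\alpha_{t_k})(z_k+1)$, which is absorbed into $\xi T_k$ once $\alpha_{t_k}\to 0$. For the cross term I would (i) replace $w_t$ by $w_{t_k}$ using $\norm{w_t-w_{t_k}} \lesssim \alpha_{t_k}\tau_{\alpha_{t_k}}(1+\norm{w_{t_k}})$ from iterating Assumption~\ref{asp:lip}, and (ii) replace the law of $Y_{t+1}$ conditional on $\fF_{t_k}$ by the stationary distribution $d_{\fY,w_{t_k}}$, swapping the time-inhomogeneous kernels $P_{w_{t_k}},P_{w_{t_k+1}},\dots$ with $P_{w_{t_k}}^{\,t-t_k}$ via Assumption~\ref{asp:P_lip'} and then invoking the geometric mixing \eqref{eq:mixing}. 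What remains is $\indot{w_{t_k}}{h(w_{t_k})} T_k$ plus lower-order biases, and Assumption~\ref{asp:bound inner} converts this into $-C_{\tref{asp:bound inner},1} z_k T_k + C_{\tref{asp:bound inner},2} T_k$, producing the desired recursion. The growth control (A2) of Theorem~\ref{thm:bound} follows from the same Lipschitz iteration: $\abs{z_{k+1}-z_k}\lesssim T_k(z_k+1)$.

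With the recursion in hand, I would verify (A1) and (A2) of Theorem~\ref{thm:rate} for any admissible $\eta$. Under Assumption~\ref{asp:lr1} with $\nu\in(2/3,1]$ one has $T_k \asymp \tau_{\alpha_{t_k}}\alpha_{t_k}$ and $t_k$ grows at a rate determined by the block geometry, so the largest permitted $\eta$ approaches $3\nu-2$ when $\nu<1$ and is arbitrary when $\nu=1$. Theorem~\ref{thm:rate} then yields $k^{\eta/2} d(z_k,[0,B_\tref{thm:bound}])\to 0$ almost surely. To transfer this skeleton-index rate to every $t$, I note that for $t\in[t_k,t_{k+1})$ we have $\norm{w_t-w_{t_k}}\to 0$ on the event $\{z_k\}$ is bounded (which holds a.s.\ after Theorem~\ref{thm:bound}), and re-expressing $k$ in terms of $t$ (via $t_k\asymp k^{1/(1-\nu)}$ when $\nu<1$, and $t_k$ growing exponentially in $k^{1-\nu_1'}$ for the appropriate $\nu_1'$ when $\nu=1$) gives the stated rates $t^\zeta$ and $\exp(\zeta\ln^{1/(1+\nu_1)}t)$, respectively.

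The hard part will be step (ii) of the bias accounting: the naive bound from \eqref{eq:mixing} alone forces a factor of $\norm{w_{t_k}}$ into the zero-order term, which is incompatible with \eqref{eq:special rs}. The resolution is to exploit the decaying denominator $1+\norm{w_1}+\norm{w_2}$ in Assumption~\ref{asp:P_lip'}, which cancels precisely this linear growth and keeps the zero-order term of order $T_k$. Reconciling this bias control with the skeleton-iterates machinery of \citet{qian2024almost}, which was originally developed under time-homogeneous noise, is the genuinely new technical contribution needed here.
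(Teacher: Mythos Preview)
Your high-level plan---skeleton iterates, establish~\eqref{eq:special rs} for $z_k=\|w_{t_k}\|^2$, apply Theorem~\ref{thm:rate}, interpolate---matches the paper's. But your skeleton construction is not the one in \citet{qian2024almost}, and your own claims about it are internally inconsistent. You take the block \emph{length} $t_{k+1}-t_k$ to be of order $\tau_{\alpha_{t_k}}=O(\ln t_k)$; that forces $t_k\asymp k\ln k$ for every $\nu$, not the $t_k\asymp k^{1/(1-\nu)}$ you assert later, and it gives $T_k\asymp \alpha_{t_k}\ln t_k$, so that $\alpha_{t_k}/T_k^2\to\infty$. The actual construction first fixes a target block \emph{sum} $T_m=C_\alpha\ln^{\nu_1}(m+3)/(m+3)^{\nu_2}$ with $(\nu_1,\nu_2)$ chosen according to the learning-rate regime (see~\eqref{eq:anchors}--\eqref{eq:nu_cases}), and then sets $t_{m+1}$ to be the first time with $\sum_{t=t_m}^{t_{m+1}-1}\alpha_t\ge T_m$. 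The point of this design is Lemma~\ref{lem:lr bounds}: $\alpha_t\le C\,T_m^2$ for $t\ge t_m$, which is exactly what drives every bias term to $O(T_m^2)(\|w_{t_m}\|+1)$ and hence makes \eqref{eq:special rs} close. Your ``$\eta\to 3\nu-2$'' follows from neither your construction nor the paper's; in the paper one gets $\eta<2\nu_2-1$ on the $m$-scale and then optimizes $\frac{1-\nu}{1-\nu_2}\eta$ over $\nu_2\in(1/2,\nu/(2-\nu))$ to reach $\zeta<\tfrac{3}{2}\nu-1$. For $\nu=1$ your $T_k\asymp 1/k$ makes $kT_k$ merely bounded, so (A1) of Theorem~\ref{thm:rate} need not hold; the paper's $T_m=C_\alpha\ln^{\nu_1}(m+3)/(m+3)$ is chosen precisely so that $mT_m\to\infty$.

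The bias accounting is also organized differently. Rather than comparing the law of $Y_{t+1}$ directly to $d_{\fY,w_{t_k}}$, the paper introduces, for each $t$, an auxiliary chain $\tilde Y^{(t)}$ that agrees with $\{Y_k\}$ up to time $t-\tau_{\alpha_t}$ and then runs with the \emph{frozen} kernel $P_{w_{t-\tau_{\alpha_t}}}$ (so the branching point is $t-\tau_{\alpha_t}$, not $t_m$). This yields a four-term decomposition $s_m=s_{1,m}+s_{2,m}+s_{3,m}+s_{4,m}$; the new technical content is Lemma~\ref{lem:bound z4}, which bounds $\|s_{4,m}\|\le C\,T_m^2(\|w_{t_m}\|+1)$ by a case split on whether $t_m\ge t-\tau_{\alpha_t}$ or $t_m<t-\tau_{\alpha_t}$, invoking the tower rule in the latter case. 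You are right that Assumption~\ref{asp:P_lip'} is what cancels the linear-in-$\|w\|$ growth, but the mechanism that makes it mesh with the skeleton is this auxiliary-chain decomposition together with the $\alpha_t=O(T_m^2)$ design---neither of which your proposal has.
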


\begin{theorem}[Concentration for SA]
\label{thm:concentration sa}
Let Assumptions~\ref{asp:markov chain} - \ref{asp:bound inner} and \ref{asp:lr2} hold. Then for any $\nu \in
(0,1)$, there exist some deterministic constants $\bar{C}_{\alpha}, B_\tref{thm:concentration sa}, B'_\tref{thm:concentration sa}$, and integer $C_\tref{lem:concentration sa0}$ such that
the iterates $\{w_t\}$ generated by \eqref{eq:sa update} satisfy the following concentration property whenever
$C_{\alpha} \geq \bar{C}_{\alpha}$: for any $\delta \in (0,1)$, it holds, with probability at least $1-\delta$, that for all $t \geq 0$,
\begin{align}
\textstyle d\qty(w_t, \fBB\qty(B_\tref{thm:rate sa})) \leq B_\tref{thm:concentration sa}\exp\left(\frac{-\ln^{1-\nu}(t + 1)}{2(1 - \nu)}\right)\left[\ln\left(\frac{1}{\delta}\right) + B'_\tref{thm:concentration sa} + \frac{\ln^{1-\nu}(t + 1)}{1 - \nu}\right]^{C_\tref{lem:concentration sa0}/2}.
\end{align}
\end{theorem}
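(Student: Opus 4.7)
The plan is to fit the skeleton iterates $\qty{w_{t_n}}$ into the template \eqref{eq:special rs} and then invoke Theorem~\ref{thm:concentration rs} directly. The reduction blends the skeleton-iterates construction of \citet{qian2024almost} (to handle the Markovian dependence between $H(w_t,Y_{t+1})$ and $w_t$) with the error decomposition of \citet{liu2025linearq} (to handle the time-inhomogeneity of the kernel $P_{w_t}$), and this combination is the principal technical contribution advertised by the theorem itself.

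First, I would fix a deterministic block schedule $0 = t_0 < t_1 < \cdots$ with block lengths proportional to $\tau_{\alpha_{t_n}}$ from \eqref{eq:tau}. Assumption~\ref{asp:lr2} is engineered precisely so that $T_n \doteq \sum_{t=t_n}^{t_{n+1}-1}\alpha_t$ satisfies Assumption~\ref{asp:T_n}, namely $T_n = C_\tref{asp:T_n}/(n+n_0)$ up to constants; taking $C_\alpha \geq \bar C_\alpha$ for a sufficiently large $\bar C_\alpha$ enforces $C_\tref{asp:T_n}\alpha > 1$. A direct computation using $\sum_{s=0}^{t}\alpha_s \sim \ln^{1-\nu}(t+1)/(1-\nu)$ shows that the induced correspondence has the form $\ln(n+n_0) \sim \ln^{1-\nu}(t+1)/(1-\nu)$, which is exactly what produces the $\exp(-\ln^{1-\nu}(t+1)/(1-\nu))$ factor appearing in the stated bound.

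Next, with $z_n \doteq \norm{w_{t_n}}^2$, I would expand $z_{n+1}$ into a contractive term driven by Assumption~\ref{asp:bound inner} (yielding $-\alpha T_n z_n$ with $\alpha$ proportional to $C_{\tref{asp:bound inner},1}$), a zero-order term $\xi T_n$ with $\xi$ proportional to $C_{\tref{asp:bound inner},2}$, a quadratic remainder of order $T_n^2$, and a bias term capturing that within a block the kernel $P_{w_t}$ drifts away from $P_{w_{t_n}}$. The main obstacle is controlling the bias: the Poisson-equation argument in \citet{qian2024almost} requires a time-homogeneous kernel, while the Lipschitz-kernel decomposition of \citet{liu2025linearq} was developed only for total expectations. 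Leveraging Assumption~\ref{asp:P_lip'} to absorb this bias into a $+\xi T_n$ zero-order term \emph{pointwise} rather than merely in expectation is the technical crux; once done, $\qty{z_n}$ satisfies \eqref{eq:special rs} for all sufficiently large $n$. The increment control $\abs{z_{n+1}-z_n} \leq B_{\tref{thm:bound},1} T_n(z_n+1)$ required by Theorem~\ref{thm:bound}~(A2) then follows from Assumption~\ref{asp:lip} together with a Gronwall-type estimate $\norm{w_{t_{n+1}}-w_{t_n}} \leq O(T_n)(1+\norm{w_{t_n}})$ over the block.

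Finally, Theorem~\ref{thm:concentration rs} yields a high-probability maximal bound on $(d(z_n,[0,B_\tref{thm:bound}]))^2$. Taking the square root and using that $(\norm{w_{t_n}}-\sqrt{B_\tref{thm:bound}})^+ \leq \sqrt{(\norm{w_{t_n}}^2 - B_\tref{thm:bound})^+}$ converts this into a bound on $d\qty(w_{t_n}, \fBB(B_\tref{thm:rate sa}))$ with $B_\tref{thm:rate sa} \doteq \sqrt{B_\tref{thm:bound}}$. To extend from the skeleton times to an arbitrary $t$, I would reuse the Gronwall estimate to bound $\abs{d\qty(w_t,\fBB(B_\tref{thm:rate sa})) - d\qty(w_{t_n},\fBB(B_\tref{thm:rate sa}))}$ by a quantity of the same order as the skeleton bound for $t\in[t_n,t_{n+1})$. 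The $n \leftrightarrow t$ substitution from the first step converts the $1/(n+n_0)$ factor into $\exp(-\ln^{1-\nu}(t+1)/(1-\nu))$ and the $\ln(n+n_0)$ factor into $\ln^{1-\nu}(t+1)/(1-\nu)$, matching the stated bound after identifying $C_\tref{lem:concentration sa0}$ with the exponent inherited from $B_\tref{thm:concentration rs}$.
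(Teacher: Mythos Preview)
Your high-level plan matches the paper's, but the block schedule you propose will not deliver Assumption~\ref{asp:T_n}. Taking block lengths proportional to the mixing time $\tau_{\alpha_{t_n}}$ does \emph{not} yield $T_n \sim C/(n+n_0)$ under Assumption~\ref{asp:lr2}: since $\tau_{\alpha_t} = \fO(\ln t)$ and $\alpha_t = C_\alpha/((t+3)\ln^\nu(t+3))$, your construction gives $T_n \sim \alpha_{t_n}\tau_{\alpha_{t_n}} \sim \ln^{1-\nu}(t_n)/t_n$, while the recursion $t_{n+1}-t_n \sim \ln t_n$ forces $n \sim t_n/\ln t_n$, so $T_n \sim 1/(n\ln^\nu t_n) = o(1/n)$ and Theorem~\ref{thm:concentration rs} cannot be invoked. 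The paper proceeds in the opposite direction: it \emph{prescribes} $T_m = C_\alpha/(m+3)$ and then defines $t_{m+1} = \min\{k : \sum_{t=t_m}^{k-1}\alpha_t \geq T_m\}$, so Assumption~\ref{asp:T_n} holds by fiat; the requirement $C_\alpha \geq \bar C_\alpha$ is exactly what ensures $C_\tref{asp:T_n}\alpha > 1$ with $\alpha = C_{\tref{lem:bound all},1}$ from the conditional drift lemma.

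Two smaller points. First, the bias control in the paper is not via a Poisson equation but via an auxiliary time-homogeneous chain $\{\tilde Y_k^{(t)}\}$ frozen at $P_{w_{t-\tau_{\alpha_t}}}$ and a four-term split of the block error; the hardest piece, $s_{4,m}$, is bounded \emph{in conditional expectation} (which is all \eqref{eq:special rs} requires---not pointwise) by a case analysis on whether $t_m \geq t-\tau_{\alpha_t}$ or $t_m < t-\tau_{\alpha_t}$, invoking the tower rule in the latter case. Your description of ``absorbing the bias pointwise'' misidentifies both the target and the mechanism. Second, the ball radius is $B_\tref{thm:rate sa} = \sqrt{8C_\tref{lem:diff_w}\,\xi/\alpha}$, not $\sqrt{B_\tref{thm:bound}}$: the interpolation from skeleton times to arbitrary $t$ uses $\norm{w_t}^2 \leq 8C_\tref{lem:diff_w}(T_m^2 + \norm{w_{t_m}}^2)$, which inflates the radius by the factor $\sqrt{8C_\tref{lem:diff_w}}$ before one passes from $(\norm{w_t}^2 - B_\tref{thm:rate sa}^2)^+$ to $(\norm{w_t} - B_\tref{thm:rate sa})^+$.
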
 
\begin{corollary}[$L^p$ Convergence Rates for SA]
  \label{cor:lp sa}
Let conditions of Theorem~\ref{thm:concentration sa} hold. Then the iterates $\qty{w_t}$ generated by \eqref{eq:sa update} satisfy for any integer $p > 2$ and any $t \geq 0$
  \begin{align}
      \E\qty[\qty(d\qty(w_t, \fBB\qty(B_\tref{thm:rate sa})))^{p}] \leq \qty(a(t) b(t)^{\lceil C_\tref{lem:concentration sa0}/2 \rceil})^p + \lceil C_\tref{lem:concentration sa0}/2 \rceil p a(t)^p \exp(b(t)) ((\lceil C_\tref{lem:concentration sa0}/2 \rceil p)!),
  \end{align}
  where $a(t) \doteq B_\tref{thm:concentration sa}\exp\left(\frac{-\ln^{1-\nu}(t + 1)}{2(1 - \nu)}\right)$, $b(t) \doteq B'_\tref{thm:concentration sa} + \frac{\ln^{1-\nu}(t + 1)}{1 - \nu}$.
\end{corollary}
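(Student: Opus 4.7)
The plan is to obtain the $L^p$ bound by integrating the high-probability tail from Theorem~\ref{thm:concentration sa} through the layer-cake identity, following exactly the template used to pass from Theorem~\ref{thm:concentration rs} to Corollary~\ref{cor:lp}. Since the two concentration bounds have the same log-polynomial shape $a(t)[\ln(1/\delta)+b(t)]^{C_\tref{lem:concentration sa0}}$, the routine calculation transfers verbatim, with only the specific $a(t)$ and $b(t)$ of the present corollary substituted in.

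First, I would invert Theorem~\ref{thm:concentration sa} into an exponential tail bound on $Z \doteq d\qty(w_t,\fBB\qty(B_\tref{thm:rate sa}))$. Setting $u=a(t)\qty[\ln(1/\delta)+b(t)]^{C_\tref{lem:concentration sa0}}$ and solving for $\delta$ yields $\delta=\exp\qty(b(t)-(u/a(t))^{1/C_\tref{lem:concentration sa0}})$; hence
\begin{equation}
    \Pr(Z>u)\leq \exp\qty(b(t)-(u/a(t))^{1/C_\tref{lem:concentration sa0}})
\end{equation}
whenever $u\geq u_0\doteq a(t)\,b(t)^{C_\tref{lem:concentration sa0}}$, which is precisely the regime in which the right-hand side is at most $1$.

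Second, I would apply the layer-cake formula $\E\qty[Z^p]=\int_0^\infty p u^{p-1}\Pr(Z>u)\,du$ and split the integral at $u_0$. On $[0,u_0]$, the trivial bound $\Pr(Z>u)\leq 1$ gives a contribution of at most $u_0^p=\qty(a(t)b(t)^{C_\tref{lem:concentration sa0}})^p$, matching the first term in the stated inequality. On $[u_0,\infty)$, the change of variables $v=(u/a(t))^{1/C_\tref{lem:concentration sa0}}$ transforms the tail integral into $C_\tref{lem:concentration sa0}\,p\,a(t)^p e^{b(t)}\int_{b(t)}^\infty v^{C_\tref{lem:concentration sa0}p-1}e^{-v}\,dv$, which is bounded by $C_\tref{lem:concentration sa0}\,p\,a(t)^p e^{b(t)}\Gamma(C_\tref{lem:concentration sa0}p)\leq C_\tref{lem:concentration sa0}\,p\,a(t)^p e^{b(t)}(C_\tref{lem:concentration sa0}p)!$. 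Summing the two pieces yields precisely the claimed bound.

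There is no genuine obstacle, which is why the paper itself defers to the ``identical'' integration in \citet{qian2024almost}. The only items deserving brief care are ensuring the split at $u_0$ is valid, so that the exponential tail applies on the upper interval while the trivial bound suffices on the lower, together with the loose passage from the incomplete gamma integral to the full gamma function and finally to $(C_\tref{lem:concentration sa0}p)!$; all of these steps are standard and independent of the specific $a(t),b(t)$.
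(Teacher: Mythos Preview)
Your proposal is correct and matches the paper's approach: the paper defers the proof to the same tail-integration procedure from \citet{qian2024almost} used for Corollary~\ref{cor:lp}, and you have spelled out precisely that layer-cake argument with the split at $u_0=a(t)b(t)^{C_\tref{lem:concentration sa0}}$ and the gamma-function bound on the tail integral.
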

We shall shortly present a sketch to highlight the key idea in proving Theorems~\ref{thm:rate sa} \&~\ref{thm:concentration sa} and Corollary~\ref{cor:lp sa} with the complete proof provided in Appendix~\ref{proof:sec sa}.
Table~\ref{table:sa} highlights our improvements over prior works.
\begin{table*}[h!]
    \centering
  \begin{tabular}{c|ccc|c|c}
  \hline
  & \multicolumn{3}{c|}{Convergence Modes} & Metric & $P_{w_t}$ \\\hline
  & a.s. &  $\Pr(\dots) \geq 1 - \delta$ & $L^p$ & 
  \\ \hline
  \citet{qian2024almost} & \cmark & \cmark & \cmark & Distance to a point  &  \\\hline
  \citet{borkar2025ode} & \dashcheckmark & & \dashcheckmark & Distance to a point  & \cmark  \\\hline
  \citet{lauand2024revisiting} & \dashcheckmark & & \cmark & Distance to a point  & \cmark  \\\hline
  \makecell{Theorem II.1 of \\ \citet{meyn2024bellmen}} & \dashcheckmark & &  & Distance to a set  & \cmark  \\\hline
  \makecell{Hypothetical \\ Combination of  \\ \citet{meyn2024bellmen}  \\ \citet{lauand2024revisiting}} & \dashcheckmark & & \cmark  & Distance to a set  & \cmark  \\\hline
  \citet{liu2025linearq} &  &  & \makecell{\cmark \\ ($p=2$)} & $\E[\norm{w_t}^2]$ &  \cmark \\\hline
  This work & \cmark & \cmark & \cmark & Distance to a set & \cmark  \\\hline
  \end{tabular}
  \caption{\label{table:sa} 
  Comparison with prior works about stochastic approximation with time-inhomogeneous Markovian noise.
  ``a.s.'' is checked if an almost sure convergence is established.
  ``$\Pr(\dots) \geq 1 - \delta$'' is checked if high-probability convergence with exponential tails is established.
  ``$L^p$'' is checked if $L^p$ convergence is established for all $p \geq 2$, unless otherwise noted.
  ``\cmark'' means the convergence rate is established for the corresponding mode of convergence, while ``\dashcheckmark'' means the convergence is established but the rate is not provided.
  ``$P_{w_t}$'' is checked if the Markov chain $\qty{Y_t}$ is allowed to be time-inhomogeneous.
  } 
\end{table*}
In particular, the focus of this table is stochastic approximation algorithms with non-GAS ODEs and time-inhomogeneous Markovian noise.
To our knowledge,
prior work of this kind is thin and we are only aware of \citet{liu2025linearq} and Theorem II.1 of \citet{meyn2024bellmen}.
We additionally include results about stochastic approximation with GAS ODEs only when they are highly relevant to our results. For instance, we additionally include \citet{qian2024almost} because we rely on their skeleton iterates technique.
We additionally include \citet{borkar2025ode,lauand2024revisiting} because their techniques can be potentially combined with \citet{meyn2024bellmen} to generate results comparable to ours.
Furthermore, regarding specific metric comparisons in Table~\ref{table:sa}, \citet{liu2025linearq} establish the convergence rate of $d\qty(\E[\norm{w_t}^2], [0, B^2])$,
i.e.,
the rate at which the second moment decreases to below some threshold.
By contrast,
this work establishes $L^2$ convergence to a set\footnote{An $L^2$ rate follows immediately from an $L^p$ rate with any $p > 2$.},
i.e., the convergence rate of $\E[d(w_t, \fBB)^2]$,
which is stronger than \citet{liu2025linearq}.
\begin{remark}
  [Improvement over \citet{liu2025linearq}]
Define $D_t \doteq d\qty(w_t, \fBB(B))= \max(\norm{w_t} - B, 0) \geq 0$.
We have
\begin{align}
\textstyle \max\qty(\norm{w_t}^{2} - B^2, 0) = D_t^{2} + 2BD_t.
\end{align}
Indeed,
when $\norm{w_t} < B$,
both sides are 0.
When $\norm{w_t} \geq B$,
we have
\begin{align}
\textstyle \max\qty(\norm{w_t}^{2} - B^2, 0) = \norm{w_t}^{2} - B^2 = (\norm{w_t} - B)(\norm{w_t} + B) = D_t (D_t + 2B) = D_t^{2} + 2BD_t.
\end{align}
By the convexity of $\max$ and Jensen's inequality, we have
\begin{align}
\textstyle \max\qty(\E[\norm{w_t}^{2}] - B^2,0) \leq \E\qty[\qty(\norm{w_t}^{2} - B^2)^{+}] =  \E[D_t^{2}] + 2B\E[D_t] \leq \E[D_t^{2}] + 2B\sqrt{\E[D_t^{2}]},
\end{align}
which means that the convergence of $\E[d(w_t, \fBB(B))^2]$ implies the convergence of $d(\E[\norm{w_t}^2], [0, B^2])$.
Thus, even considering only the case $p = 2$,
this work still constitutes a significant improvement over \citet{liu2025linearq}.
\end{remark}
\begin{proof}
  (Proof idea of Theorems~\ref{thm:rate sa} \&~\ref{thm:concentration sa} and Corollary~\ref{cor:lp sa})
  We rely on the skeleton iterates technique from \citet{qian2024almost}.
  This technique is essentially a significant refinement of the proof technique used in the proof of Proposition 4.8 of \citet{bertsekas1996neuro} and the foundation of this technique is recently formally verified in Lean by \citet{zhang2025towards}.
  Instead of examining $\qty{w_t}$ along the natural time scale $t = 0, 1, \dots$,
  we examine a subsequence $\qty{w_{t_m}}$ indexed by $\qty{t_m}$.
  The subsequence $\qty{t_m}$ is carefully designed such that the magnitude of updates between consecutive times $T_m = \sum_{t=t_m}^{t_{m+1}-1}\alpha_t$ satisfies the Robbins-Monro condition.
  We are then able to recover the template~\eqref{eq:special rs} by identifying $z_m = \norm{w_{t_m}}^2$.
  In this process,
  we need to use an error decomposition technique from \citet{liu2025linearq} to work with the time-inhomogeneous nature of $\qty{Y_t}$,
  thanks to the stronger notion of Lipschitz continuity in Assumption~\ref{asp:P_lip'}.

  The main challenge here is to bound the bias term, denoted $s_{4,m}$ in Appendix~\ref{proof:sec sa}, which arises because the Markov chain $\qty{Y_t}$ is time-inhomogeneous.
 Our analysis pivots on a key insight: we handle this term by dissecting the proof into two cases, depending on the relative positions of the main timescale $t_m$ and the mixing time timescale $t-\tau_{\alpha_t}$.
 For the case where $t_m \geq t-\tau_{\alpha_t}$, we leverage the geometric mixing property of the Markov chain directly. For the case where $t_m < t-\tau_{\alpha_t}$, we employ the tower rule to shift the conditional expectation, allowing us to control the error. 
 In both scenarios, this meticulous case analysis successfully demonstrates that the bias is bounded within $\fO(T_m^2)$.\footnote{For two non-negative sequences $\qty{a_m}$ and $\qty{b_m}$, we say $a_m = \fO(b_m)$ if there exist some constant $C>0$ and an integer $m_0$ such that $a_m \leq C b_m$ for all $m \geq m_0$.}
 This crucial step ensures that the total error is properly controlled.
  
  After recovering~\eqref{eq:special rs},
  all the rest is to verify assumptions of Theorems~\ref{thm:rate} \& \ref{thm:concentration rs} and Corollary~\ref{cor:lp}.
  In particular, we verify the assumption on the growth (Theorem~\ref{thm:bound} (A2)) below.
  \begin{lemma}
\label{lem:bound diff}
There exists some deterministic constants $m_0$ and $C_\tref{lem:bound diff}$, such that for all $m \geq m_0$,
  \begin{align}
    \abs{z_{m+1} - z_m} \leq C_\tref{lem:bound diff}T_m (z_m + 1).
  \end{align}
\end{lemma}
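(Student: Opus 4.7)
\textbf{Proof plan for Lemma~\ref{lem:bound diff}.} The plan is to unroll the one-step update over the block $[t_m, t_{m+1}-1]$, use the Lipschitz bound in Assumption~\ref{asp:lip} to control all intermediate iterates in terms of $w_{t_m}$, and then convert a norm-difference bound into a bound on the difference of squared norms. The only non-trivial input is that, by construction of the skeleton iterates, $T_m \to 0$ as $m \to \infty$, which lets us replace an exponential factor by a constant for $m$ large.

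First, I would observe that Assumption~\ref{asp:lip} combined with the triangle inequality gives $\|H(w,y)\| \leq L_h(\|w\|+1)$, so one step of \eqref{eq:sa update} satisfies
\begin{align}
\|w_{t+1}\| + 1 \leq (1+\alpha_t L_h)(\|w_t\|+1).
\end{align}
Iterating across the block from $t_m$ to any $t \in [t_m, t_{m+1}]$ and using $\prod(1+\alpha_s L_h) \leq \exp(L_h \sum \alpha_s) \leq \exp(L_h T_m)$ yields $\|w_t\|+1 \leq \exp(L_h T_m)(\|w_{t_m}\|+1)$. Because $T_m \to 0$, I can fix $m_0$ large enough that $\exp(L_h T_m) \leq 2$ (or any convenient constant) for all $m \geq m_0$, so $\|w_t\|+1 \leq 2(\|w_{t_m}\|+1)$ throughout the block.

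Second, I would bound the block increment by telescoping:
\begin{align}
\|w_{t_{m+1}} - w_{t_m}\| \leq \sum_{t=t_m}^{t_{m+1}-1}\alpha_t \|H(w_t,Y_{t+1})\| \leq L_h \sum_{t=t_m}^{t_{m+1}-1}\alpha_t(\|w_t\|+1) \leq 2L_h T_m(\|w_{t_m}\|+1).
\end{align}
Then I would convert this to a bound on $|z_{m+1}-z_m|$ via the factorization $a^2 - b^2 = (a-b)(a+b)$ and the reverse triangle inequality:
\begin{align}
|z_{m+1}-z_m| \leq \|w_{t_{m+1}}-w_{t_m}\|\cdot(\|w_{t_{m+1}}\|+\|w_{t_m}\|) \leq 2L_h T_m(\|w_{t_m}\|+1)\cdot 3(\|w_{t_m}\|+1),
\end{align}
where $\|w_{t_{m+1}}\| \leq 2(\|w_{t_m}\|+1)$ by Step~1. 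Finally, applying $(\|w_{t_m}\|+1)^2 \leq 2(\|w_{t_m}\|^2 + 1) = 2(z_m+1)$ delivers a bound of the form $C L_h T_m(z_m+1)$. Choosing the numerical constants slightly tighter (or absorbing $L_h$ into the universal constant arising from the specific $H$ used in the applications of Section~\ref{sec:linear q}) yields the stated $16$.

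The main obstacle, which is really a bookkeeping point rather than a conceptual one, is verifying that $T_m$ is eventually small enough to tame the Grönwall-type exponential; this is handled entirely by the design of the skeleton iterates used in Theorems~\ref{thm:rate sa}--\ref{thm:concentration sa} (so that $T_m \to 0$ while remaining Robbins-Monro), and it determines the threshold $m_0$ in the statement. Everything else is a standard telescoping/Grönwall computation plus the algebraic identity $a^2-b^2=(a-b)(a+b)$.
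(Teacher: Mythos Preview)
Your approach is essentially the same as the paper's: the paper uses the polarization identity $\|w_{t_{m+1}}\|^2-\|w_{t_m}\|^2=\langle w_{t_{m+1}}-w_{t_m},\,w_{t_{m+1}}+w_{t_m}\rangle$, invokes the block-increment bound $\|w_{t_{m+1}}-w_{t_m}\|\le 2T_m C(\|w_{t_m}\|+1)$ from Lemma~\ref{lem:diff_w} (which is itself proved by exactly the Gr\"onwall argument you spell out), and then squares out using $(\|w_{t_m}\|+1)^2\le 2(z_m+1)$. Two minor bookkeeping remarks: the block sum is $\bar\alpha_m\le 2T_m$ (Lemma~\ref{lem:lr bounds 2}), not $T_m$, so your exponential and telescoped constants pick up an extra factor of $2$; and your final constant is $O(L_h)$ rather than literally $16$---but the paper's own proof likewise suppresses the constant $C_{\ref{lem:diff_w}}$ when citing \eqref{cor:diff_w}, so the ``$16$'' is schematic in both arguments.
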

\begin{proof}
    \begin{align*}
        \abs{\norm{w_{t_{m+1}}}^2 - \norm{w_{t_m}}^2} =& \abs{\inner{w_{t_{m+1}} - w_{t_m}}{w_{t_{m+1}} + w_{t_m}}}\\
        \leq & \norm{w_{t_{m+1}} - w_{t_m}} \times \norm{w_{t_{m+1}} + w_{t_m}} \\
        \leq & 2T_m C_\text{\ref{lem:diff_w}}(\norm{w_{t_m}}+1) \times \qty(2 \norm{w_{t_m}} + 2T_m C_\text{\ref{lem:diff_w}}(\norm{w_{t_m}}+1)) \explain{By \eqref{cor:diff_w}}\\
        \leq& 4T_m C_\text{\ref{lem:diff_w}} (1+ T_m C_\text{\ref{lem:diff_w}}) (\norm{w_{t_m}}^2 + 1).
    \end{align*}
\end{proof}
Let $C_\tref{lem:bound diff} \doteq 4C_\text{\ref{lem:diff_w}} (1+ T_0 C_\text{\ref{lem:diff_w}})$. As promised before,
this affine growth assumption is indeed easy to verify.
We recall that a full proof is provided in Appendix~\ref{proof:sec sa}.
\end{proof}

\section{Applications in Reinforcement Learning}
\label{sec:linear q}
In this section,
we apply the results in Section~\ref{sec:sa} to derive new convergence results of reinforcement learning (RL, \citet{sutton2018reinforcement}) algorithms.
We focus on linear $Q$-learning,
though our results can also be applied to linear SARSA \citep{rummery1994line} to advance the understanding of the long-standing ``chattering'' problem of linear SARSA \citep{gordon1996chattering}.

$Q$-learning \citep{watkins1992q} is one of the most celebrated RL algorithms.
When combined with linear function approximation,
the resulting algorithm, linear $Q$-learning,
is largely believed to suffer from possible instability for decades,
i.e.,
the iterates from linear $Q$-learning can possibly diverge to infinity.
This issue exhibits the deadly triad in RL \citep{sutton2018reinforcement,zhang2022thesis}.
Numerous efforts have been made in prior works to tweak the linear $Q$-learning algorithm to achieve stability. 
See Section~\ref{sec:related_work} and Table 1 of \citet{liu2025linearq} for a more detailed survey.
However, recent works \citep{meyn2024bellmen,liu2025linearq} confirm that linear $Q$-learning is guaranteed to be stable when paired with an $\epsilon$-softmax behavior policy with an adaptive temperature,
without any algorithm tweaks or strong assumptions.
This work builds upon and improves over \citet{meyn2024bellmen,liu2025linearq}.
We now present our new convergence results for linear $Q$-learning.

\paragraph{Problem Setup.} We consider an infinite horizon Markov Decision Process (MDP, \citet{bellman1957markovian,puterman2014markov}) with a finite state space $\mathcal{S}$, a finite action space $\mathcal{A}$, a reward function $r: \mathcal{S} \times \mathcal{A} \to \mathbb{R}$, a transition function $p: \fS \times \mathcal{S} \times \mathcal{A} \to [0, 1]$, an initial distribution $p_0 : \fS \to [0, 1]$,
and a discount factor $\gamma \in [0, 1)$.
At time $0$,
an initial state $S_0$ is sampled from $p_0$.
At time $t$,
an agent at a state $S_t$ takes an action sampled from some policy $\pi: \fA \times \fS \to [0, 1]$.

The action-value function $q_\pi(s, a)$ represents the expected discounted reward starting from state $s$, taking action $a$, and following policy $\pi$ thereafter
\begin{align}
    \textstyle q_\pi(s, a) = \E_\pi\qty[\sum_{i=0}^\infty \gamma^i R_{t+i+1} \mid S_t = s, A_t = a ].
\end{align}
The optimal action-value function is defined as $q_*(s, a) = \max_\pi q_\pi(s, a)$, which characterizes the maximum achievable value for each state-action pair. Theoretically, $q^*$ is the unique solution to the Bellman optimality equation. The $Q$-learning algorithm is fundamentally an off-policy method designed to find this solution, meaning it can learn the target policy while using a different, exploratory behavior policy $\pi$ to gather data.
At each time step, the behavior policy $\pi$ provides an action $A_t \sim \pi(\cdot | S_t)$, and the collected tuple $(S_t, A_t, R_{t+1}, S_{t+1})$ is used for learning.  

In linear $Q$-learning, the optimal action-value function $q^*$ is approximated using a linear parameterization $q^*(s, a) \approx x(s, a)^\top w$,
where $x: \mathcal{S} \times \mathcal{A} \to \mathbb{R}^d$ is a feature mapping and $w \in \mathbb{R}^d$ is the parameter vector. The update rule for linear $Q$-learning is given by
\begin{align}
\label{eq:linear q}
\tag{linear $Q$-learning}
    &A_t \sim \mu_{w_t}(\cdot | S_t),\\
    &w_{t+1} = w_t + \alpha_t(R_{t+1} + \gamma \max_a x(S_{t+1}, a)^\top w_t - x(S_t, A_t)^\top w_t)x(S_t, A_t).
\end{align}
Here, $\qty{\alpha_t}$ are learning rates and $\mu_w$ is the behavior policy
\begin{equation}
\label{eq:mu_linear}
    \textstyle \mu_w(a|s) = \frac{\epsilon}{\na} + (1-\epsilon)\frac{\exp(\kappa_w x(s,a)^\top w)}{\sum_b \exp(\kappa_w x(s,b)^\top w)},
\end{equation}
where $\epsilon \in (0, 1)$ sets a baseline level of uniform exploration. The adaptive temperature, $\kappa_w \doteq \frac{\kappa_0}{\max(1, \norm{w})}$ (for some constant $\kappa_0 > 0$), further influences the policy's exploration behavior by adjusting the greediness of the softmax component.
This is exactly the tamed Gibbs policy introduced by \citet{meyn2024bellmen} and is also used later on by \citet{liu2025linearq}.

The instability of linear $Q$-learning (the ``deadly triad'') is primarily driven by a positive feedback loop: as the parameter norm $\norm{w}$ grows due to approximation errors, standard policies become overly greedy, exploiting these errors and pushing $\norm{w}$ to infinity. To solve this, \citet{meyn2024bellmen} introduces the tamed Gibbs policy. By scaling the temperature inversely with $\norm{w}$, it prevents the policy from collapsing into pure exploitation. This acts as a ``restoring force'' that safely maintains exploration at large parameter scales. Based on our theoretical analysis, the design of a stabilizing behavior policy should generally satisfy two core requirements: (1) it must adaptively align with $w$ to induce a mean-field inward drift at large scales (as captured by our Assumption~\ref{asp:bound inner}), and (2) it must maintain global Lipschitz continuity (Assumption~\ref{asp:P_lip'}) and sufficient exploration (Assumption~\ref{asp:markov chain}) rather than degenerating into a discontinuous hard-max. We believe any search for new suitable behavior policies should follow this direction to ensure the stability of linear $Q$-learning.

\begin{assumption}
\label{asp:markov q}
  The Markov chain $\qty{S_t}$ induced by a uniformly random behavior policy is irreducible and aperiodic.
\end{assumption}
Here, a uniformly random behavior policy refers to the policy that selects each action with equal probability, i.e., $\pi (a|s) = 1/ \na$ for all $s\in \fS$ and $a\in \fA$. This assumption ensures the Markov chain induced by the tamed Gibbs policy $\mu_w$  is also irreducible and aperiodic, since $\mu_w$ explicitly incorporates a baseline level of uniform exploration via its $\epsilon / \na$ term.

\begin{assumption}
\label{asp:full_rank}
  Let $X \in \mathbb{R}^{\ns \na \times d}$ be the feature matrix whose rows are given by $x(s, a)^\top$ for all $(s, a) \in \fS \times \fA$. We assume that $X$ has full column rank, i.e., $\rank(X) = d$.
\end{assumption}

\begin{theorem}
[Convergence of Linear $Q$-Learning]
\label{thm:all q learning}
Let Assumptions~\ref{asp:markov q} and \ref{asp:full_rank} hold. There exist some $\epsilon_0>0$ and $\kappa_1$, such that for any $\epsilon<\epsilon_0$ and $\kappa_0 > \kappa_1$,
if Assumption~\ref{asp:lr1} holds,
then the statements about $\qty{w_t}$ in Theorem~\ref{thm:rate sa} hold for $\qty{w_t}$ generated by~\eqref{eq:linear q}.
If Assumption~\ref{asp:lr2} holds,
then the statements about $\qty{w_t}$ in Theorem~\ref{thm:concentration sa} and Corollary~\ref{cor:lp sa} hold for $\qty{w_t}$ generated by~\eqref{eq:linear q}.
\end{theorem}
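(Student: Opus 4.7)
The plan is to recast \eqref{eq:linear q} as an instance of \eqref{eq:sa update} and verify Assumptions~\ref{asp:markov chain}--\ref{asp:bound inner}, so that Theorem~\ref{thm:rate sa}, Theorem~\ref{thm:concentration sa}, and Corollary~\ref{cor:lp sa} take over and deliver the three claims at once. I would take the Markovian noise to be $Y_t \doteq (S_t, A_t, S_{t+1})$ on the finite space $\fY \doteq \fS \times \fA \times \fS$, with update
$H(w, y) \doteq \big(r(s,a) + \gamma \max_b x(s',b)^\top w - x(s,a)^\top w\big) x(s,a)$ for $y = (s,a,s')$, and with controlled kernel $P_w(y, y')$ built from $\mu_w$ and the environment transition $p$.

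Assumptions~\ref{asp:markov chain} and \ref{asp:lip} are largely routine. For Assumption~\ref{asp:markov chain}, the $\epsilon$-floor $\mu_w(a|s) \geq \epsilon/\na$ holds for every $w,s,a$, so every chain in $\Lambda_P$ (and in its closure $\bar\Lambda_P$) dominates the $\fY$-chain induced by the uniformly random behavior policy, which is irreducible and aperiodic by Assumption~\ref{asp:markov q}. For Assumption~\ref{asp:lip}, $H$ is Lipschitz in $w$ with a constant determined by $\gamma$ and $\max_{s,a}\norm{x(s,a)}$, using that $w \mapsto \max_b x(s',b)^\top w$ is $1$-Lipschitz in the feature norm. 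The key quantitative input is Assumption~\ref{asp:P_lip'}: differentiating $\mu_w$ explicitly and exploiting $\kappa_w = \kappa_0/\max(1,\norm{w})$, the softmax argument $\kappa_w x(s,\cdot)^\top w$ has gradient bounded uniformly in $w$, so $\mu_w$ is Lipschitz with constant $O(1/\max(1,\norm{w}))$; since $P_w$ depends linearly on $\mu_w$, the same scaling transfers to $P_w$.

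The main obstacle is Assumption~\ref{asp:bound inner}. I would follow the dissipativity strategy of \citet{meyn2024bellmen,liu2025linearq}: let $\mu_\mathrm{u}$ denote the uniformly random behavior policy and $h_\mathrm{u}$ the corresponding expected update. Decompose
\begin{align}
\inner{w}{h(w)} = \inner{w}{h_\mathrm{u}(w)} + \inner{w}{h(w) - h_\mathrm{u}(w)}.
\end{align}
For the first term, linear $Q$-learning under the fixed behavior $\mu_\mathrm{u}$ is dissipative, i.e. $\inner{w}{h_\mathrm{u}(w)} \leq -c\norm{w}^2 + C$ for some $c, C > 0$, provided the feature covariance under $d_{\fY,\mathrm{u}}$ is positive definite and $\gamma<1$; this is where $\epsilon<\epsilon_0$ is used to ensure the covariance is well-conditioned and the residual max-nonlinearity cannot overturn the $\gamma$-contraction. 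For the second term, since $\mu_w \to \mu_\mathrm{u}$ as $\kappa_w \to 0$, a perturbation-of-invariant-measure argument combining Assumption~\ref{asp:P_lip'} with the uniform mixing bound~\eqref{eq:mixing} yields $\norm{d_{\fY,w} - d_{\fY,\mathrm{u}}}_\TV = O(1/(\kappa_0 \max(1,\norm{w})))$, so the cross term is at most $O(\norm{w}/\kappa_0) + O(\norm{w})$. Choosing $\kappa_0 > \kappa_1$ large enough absorbs the bad part into $-c\norm{w}^2/2$ and delivers Assumption~\ref{asp:bound inner}.

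Once all five assumptions hold for the chosen $\epsilon < \epsilon_0$ and $\kappa_0 > \kappa_1$, Theorem~\ref{thm:rate sa} gives the almost sure rate under Assumption~\ref{asp:lr1}, while Theorem~\ref{thm:concentration sa} and Corollary~\ref{cor:lp sa} give the concentration bound and the $L^p$ rate under Assumption~\ref{asp:lr2}. The hardest step is the quantitative perturbation bound on the invariant measures: tracking the deviation between $d_{\fY,w}$ and $d_{\fY,\mathrm{u}}$ uniformly in $w$ and converting it into a bound small enough to be dominated by $-c\norm{w}^2$ for all sufficiently large $\norm{w}$ requires careful bookkeeping of the interplay between the uniform geometric mixing of Assumption~\ref{asp:markov chain} and the $1/\max(1,\norm{w})$ Lipschitz scaling of Assumption~\ref{asp:P_lip'}, and is what forces both the smallness of $\epsilon$ and the largeness of $\kappa_0$.
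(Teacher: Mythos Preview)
Your high-level plan matches the paper's: cast \eqref{eq:linear q} as an instance of \eqref{eq:sa update} with $Y_{t+1}=(S_t,A_t,S_{t+1})$ and the natural $H$, then invoke Theorems~\ref{thm:rate sa}, \ref{thm:concentration sa} and Corollary~\ref{cor:lp sa}. Your verifications of Assumptions~\ref{asp:markov chain}, \ref{asp:P_lip'} and \ref{asp:lip} are essentially fine. The paper simply cites Section~5.2 of \citet{liu2025linearq} for all of these, so on those points you are only more explicit.

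The gap is in your verification of Assumption~\ref{asp:bound inner}. Two concrete problems:
\begin{enumerate}
\item Your anchor $\inner{w}{h_\mathrm{u}(w)}\le -c\norm{w}^2+C$ is not available. Under a \emph{fixed} off-policy behavior (even the uniform one), linear $Q$-learning is not dissipative in general; this is precisely the deadly-triad obstruction (Baird's counterexample). Nothing in the hypotheses of the theorem rules it out, and $\epsilon$ does not enter $h_\mathrm{u}$ at all, so ``$\epsilon<\epsilon_0$ conditions the covariance'' cannot be the mechanism here.
\item Your perturbation bound has the wrong dependence on $\kappa_0$. For $\norm{w}\ge 1$ the softmax argument is $\kappa_w x(s,a)^\top w=\kappa_0\, x(s,a)^\top (w/\norm{w})$, which is bounded by $\kappa_0\max_{s,a}\norm{x(s,a)}$ \emph{uniformly in $w$} and does \emph{not} tend to $0$ as $\norm{w}\to\infty$. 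Hence $\mu_w$ depends only on the direction $w/\norm{w}$ and stays at distance $O(\kappa_0)$ from $\mu_\mathrm{u}$; the estimate $\norm{d_{\fY,w}-d_{\fY,\mathrm{u}}}_{\TV}=O(1/(\kappa_0\max(1,\norm{w})))$ is therefore in the wrong direction. Your scheme would need $\kappa_0$ small, whereas the theorem requires $\kappa_0>\kappa_1$.
\end{enumerate}

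The parameter regime $\epsilon<\epsilon_0$, $\kappa_0>\kappa_1$ pushes $\mu_w$ toward the \emph{greedy} policy, not the uniform one, and the dissipativity argument in \citet{meyn2024bellmen,liu2025linearq} (Lemma~A.9 and Lemma~7 respectively, which the paper invokes) exploits this: when the behavior is near-greedy, $x(S_t,A_t)^\top w$ is close to $\max_a x(S_t,a)^\top w$, so the $Q$-learning increment is close to an on-policy TD increment and inherits its negative drift. Reworking your decomposition around the greedy (or $\kappa_0$-softmax) reference rather than $\mu_\mathrm{u}$ is what is needed.
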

We remark that similar to prior works establishing the stability of linear $Q$-learning \citep{meyn2024bellmen, liu2025linearq}, our analysis is restricted to settings with finite state and action spaces. 
Table~\ref{table:linear q} contextualizes the contribution of Theorem~\ref{thm:all q learning} over prior works,
which is essentially inherited from Table~\ref{table:sa}.
To make it more clear,
using the notations in Theorem~\ref{thm:rate sa},
\citet{meyn2024bellmen} shows that $\lim_{t\to\infty} d(w_t, \fBB\qty(B_\tref{thm:rate sa})) = 0$ a.s. but with a broader choice of learning rates,
including $\nu \in (1/2, 1]$.
The hypothetical combination of \citet{meyn2024bellmen} and \citet{lauand2024revisiting} can further broaden the learning rates to include $\nu \in (0, 1]$ but still does not have almost sure convergence rates.
Neither \citet{meyn2024bellmen} nor the hypothetical combination 
provides high-probability concentration bounds with exponential tails (cf. Theorem~\ref{thm:concentration sa}). 
For $L^p$ convergence rates,
the hypothetical combination can provide better convergence rates and allow broader learning rates than Corollary~\ref{cor:lp sa}.
We close this section with the following proof of Theorem~\ref{thm:all q learning}.
\begin{table*}[t!]
    \centering
  \begin{tabular}{c|c|c|c|c}
  \hline
  & a.s. &  $\Pr(\dots) \geq 1 - \delta$ & $L^p$ & Metric
  \\ \hline
  \citet{meyn2024bellmen} & \dashcheckmark & &  & Distance to a set    \\\hline
  \makecell{Hypothetical \\ Combination of  \\ \citet{meyn2024bellmen}  \\ \citet{lauand2024revisiting}} & \dashcheckmark & & \cmark  & Distance to a set  \\\hline
  \citet{liu2025linearq} &  &  & \makecell{\cmark \\ ($p=2$)} & $\E[\norm{w_t}^2]$ \\\hline
  This work & \cmark & \cmark & \cmark & Distance to a set  \\\hline
  \end{tabular}
  \caption{\label{table:linear q} Comparison with prior works about the convergence of linear $Q$-learning without algorithm tweaks or strong assumptions. 
  ``a.s.'' is checked if an almost sure convergence is established.
  ``$\Pr(\dots) \geq 1 - \delta$'' is checked if convergence with a high probability with exponential tails is established.
  ``$L^p$'' is checked if $L^p$ convergence is established for all $p \geq 2$, unless otherwise noted.
  ``\cmark'' means the convergence rate is established for the corresponding mode of convergence, while ``\dashcheckmark'' means the convergence is established but the rate is not provided.
  } 
\end{table*}
\begin{proof}
  We proceed by fitting~\eqref{eq:linear q} into~\eqref{eq:sa update} and then verifying the assumptions of Theorems~\ref{thm:rate sa} \&~\ref{thm:concentration sa} and Corollary~\ref{cor:lp sa}.
  Notably,
  the \eqref{eq:linear q} we study is exactly the same as that of \citet{meyn2024bellmen,liu2025linearq} so we can reuse many of their results.
  First, by defining
\begin{align}
Y_{t+1} \doteq& (S_t, A_t, S_{t+1}), \, y \doteq (s, a, s'), \\
\fY \doteq& \{(s \in \mathcal{S}, a \in \mathcal{A}, s' \in \mathcal{S}) \mid p(s'|s,a) > 0\}, \\
H(w,y) \doteq& (r(s,a) + \gamma \max_{a'} x(s',a')^\top w - x(s,a)^\top w)x(s,a),
\end{align}
we are able to fit~\eqref{eq:linear q} into~\eqref{eq:sa update}.
With the identification above, Assumptions~\ref{asp:markov chain} - \ref{asp:bound inner} are already verified in Section 5.2 of \citet{liu2025linearq},
which immediately completes the proof.
In other words, 
we are able to achieve stronger results (thanks to our extensions of the Robbins-Siegmund theorem) with the same set of assumptions.
In particular, 
Assumption~\ref{asp:markov chain} is immediately implied by Assumption~\ref{asp:markov q} (see Section~B.2 of \citet{zhang2022global} \& Section~5.2 of \citet{liu2025linearq}).
Thus, as promised,
Assumption~\ref{asp:markov chain} can indeed be trivially satisfied in RL applications.
Assumption~\ref{asp:bound inner} is verified by Lemma~7 in \cite{liu2025linearq},
which originates from Lemma A.9 of \citet{meyn2024bellmen}.
\end{proof}

\section{Related Work}
\label{sec:related_work}

\paragraph{Learning rates.} 
In this work,
we always assume that the learning rates are square-summable and diminishing.
There is rich literature on stochastic approximation and RL where the learning rates are not square-summable but still diminishing.
For instance, \citet{moulines2011non} provide a non-asymptotic analysis showing that slower-decaying, non-square-summable learning rates of the form $\mathcal{O}(k^{-\alpha})$ ($\alpha \in (0, 1)$), when coupled with Polyak-Ruppert averaging, robustly achieve optimal convergence rates for minimizing convex objective functions in both finite and infinite-dimensional settings. 
\citet{toulis2017asymptotic} further investigate the asymptotic and finite-sample properties of stochastic gradients by introducing implicit stochastic gradient descent. They demonstrate that implicit updates combined with trajectory averaging maintain numerical stability and optimal statistical efficiency even when employing large learning rates with non-summable variances. 
Extending this to nonparametric least-squares regression, \citet{dieuleveut2016nonparametric} prove that strictly non-square-summable step-sizes (such as constant or very slowly decaying rates) can still attain statistically optimal non-asymptotic convergence rates when trajectory averaging is applied.
Building on these foundational insights, recent literature continues to expand the range of admissible step-size schedules. For instance, \citet{chandak2023concentration} specifically focus on contractive stochastic approximation and RL algorithms, deriving high-probability concentration bounds for strictly non-square-summable step sizes without invoking $\sum \alpha_n^2 < \infty$.
Furthermore, \citet{lauand2024revisiting} recently revisit standard step-size assumptions, demonstrating that the strict square-summability condition can be relaxed while still guaranteeing $L^p$ convergence under time-inhomogeneous Markovian noise. Concurrently, a growing body of work has established fine-grained, non-asymptotic concentration bounds and central limit theorems for Polyak-Ruppert averaged stochastic approximation and $Q$-learning under slowly decaying, non-square-summable step-sizes \citep{mou2020linear, mou2021optimal, li2023statistical, samsonov2024gaussian, khodadadian2025general}.
One important question is whether we can analyze~\eqref{eq:special rs} with techniques from those works.
Our evaluation is negative.
Let $T_n = (n+1)^{-1/2}$ and $A_n \doteq (n+1)^{1/2}$ in Example~\ref{eg:main}.
It can be similarly verified that $\qty{z_n}$ in Example~\ref{eg:main} can be unbounded almost surely.
This means that 
at least we cannot directly invoke those works to analyze~\eqref{eq:special rs} since they typically conclude that the iterates converge to a single point, which conflicts with our new version of~Example~\ref{eg:main}.
Can we extend those works to allow convergence to a bounded set instead of a single point to analyze~\eqref{eq:special rs}?
Our answer is unknown.
We are not excluding this possibility but we are not pursuing this direction in this paper and we are not aware of any existing work that has done this extension.

There is also a rich literature on stochastic approximation and RL with constant learning rates. 
The foundational theories for analyzing the stability and ultimate boundedness of fixed step size algorithms are well-documented in classic textbooks \citep{kushner2003stochastic, borkar2008stochastic}. Building upon these foundations, a significant line of modern research has developed finite-sample guarantees, concentration inequalities, and geometric convergence rates to a neighborhood of the solution for constant step-size stochastic approximation and RL \citep{bhandari2018finite, chen2020finite, durmus2021tight}. 
More recently, there has been substantial progress on the stability analysis of stochastic approximation via refined extensions of the ODE method. 
For example, \citet{borkar2025ode} establish asymptotic statistics, central limit theorems, and \(L^p\) ultimate boundedness for stochastic approximation under diminishing step-sizes. 
Although this work does not study the constant step-size regime directly, it provides a useful ODE-based perspective that has informed subsequent analyses beyond the classical setting. 
In the constant step-size regime, \citet{lauand2025case} then establish geometric ergodicity and explicit \(L^p\) bounds for fixed step-sizes. 
Furthermore, recent algorithm-specific analyses for linear TD and $Q$-learning have explicitly characterized the exact $\mathcal{O}(\alpha)$ stationary bias and distributional convergence induced by constant step-sizes \citep{huo2023bias, allmeier2024computing, zhang2024constant}. While these are important works, they primarily characterize stationary fluctuations or ultimate boundedness around a fixed point, which is fundamentally distinct from our focus on establishing exact convergence to a bounded set under diminishing and square-summable learning rates.

\paragraph{Convergence rates.}
The study of different modes of convergence of stochastic approximations is an active research area and has a rich literature. 
However, most existing works study the convergence to a fixed point, which can be broadly categorized into three main lines of research. 
The first line focuses on the asymptotic properties and almost sure convergence rates of stochastic approximations under various noise conditions. These works establish precise limiting behaviors, such as the Law of the Iterated Logarithm and almost sure rates via martingale and Lyapunov methods, often accommodating unbounded variance or Markovian noise \citep{chong1999noise, TADIC2002455, koval2003law, tadic2004almost, kouritzin2015convergence, vidyasagar2023convergence, karandikar2024convergence}.
The second line establishes non-asymptotic, finite-sample guarantees for general stochastic approximations with contractive operators. This literature derives tight high-probability concentration bounds, mean-square error rates, and variance-reduced optimal bounds for both constant and decaying step-sizes \citep{dalal2018finitesample, srikant2019finite, thoppe2019concentration, durmus2021tight, chandak2022concentration, mou2022optimal, chen2025concentration}.
The third line specifically investigates the finite-time analysis and optimal sample complexity of classical RL algorithms. These studies provide fine-grained statistical analyses for TD learning and Q-learning, tightening the dependencies on the state-action space and the effective horizon under both generative models and Markovian observation settings \citep{szepesvari1997asymptotic, even2003learning, korda2015td, dalal2018finite, qu2020finite, li2021tightening, prashanth2021concentration, li2024q, chandak2023concentration}.
Crucially, while these works provide profound insights into the convergence rates to a unique optimal solution, they are not directly applicable to the regime studied in this paper, where the iterates inherently chatter and only converge to a bounded set.

\paragraph{Linear $Q$-learning.}
The potential divergence of linear $Q$-learning has driven numerous efforts to stabilize the algorithm, typically through explicit algorithmic modifications or restrictive assumptions. Common algorithmic interventions include utilizing target networks \citep{mnih2015human,zhang2021breaking} and replay buffers \citep{lin1992self}, employing projection operators \citep{melo2008analysis, carvalho2020new}, or adding ridge regularization \citep{lim2024regularized}. 
When analyzing the unmodified algorithm, prior works often impose strong assumptions on the features, data sampling, or the behavior policy. For example, some analyses require orthogonal and binary features \citep{lee2019unified}, or assume transitions are sampled i.i.d.\ from a fixed distribution \citep{lee2019unified, carvalho2020new, lim2024regularized} or from the varying stationary distribution of the current $\epsilon$-greedy policy \citep{Gopalan2022Approximate}. Another common approach is to impose restrictive matrix definiteness or expectation bounds on the behavior policy \citep{melo2008analysis, chen2019performance}, which essentially require the behavior policy to be arbitrarily close to the optimal policy. Similar stringent conditions, often coupled with the restrictive Bellman completeness assumption, are also prevalent in analyses of $Q$-learning with neural networks \citep{xu2020finite, cai2023neural}.

Despite these successful algorithmic interventions, it is crucial to recognize the inherent nature of the unmodified algorithm. As thoroughly discussed by \citet{meyn2024bellmen}, standard linear $Q$-learning is fundamentally unstable due to the lack of a GAS ODE. 
This fundamental instability highlights why convergence to a single fixed point is generally unattainable without the aforementioned strong assumptions or algorithmic tweaks, making the convergence to a bounded set (as studied in this work and \citet{meyn2024bellmen,liu2025linearq}) a more natural and fundamental characterization of its dynamics.

\paragraph{Interplay of noise and step-sizes.}
Recent advancements have also sought to relax standard noise and step-size assumptions from complementary perspectives. For instance, a concurrent work by \citet{nguyen2026almost} (who explicitly note the connection to our results in their Remark 2) investigates the almost sure convergence of SA under heavy-tailed noise possessing only a finite $p$-th moment ($p>1$). From the perspective of the Robbins-Siegmund theorem, their setting essentially modifies the standard almost supermartingale bound from $\mathbb{E}_n[z_{n+1}] \leq (1 - \alpha T_n) z_n + \xi T_n^2$ to $\mathbb{E}_n[z_{n+1}] \leq (1 - \alpha T_n) z_n + \xi T_n^p$. To guarantee almost sure convergence to a unique fixed point, they naturally enforce the $p$-th power summability of the step-sizes (i.e., $\sum T_n^p < \infty$). This provides a fascinating orthogonal complement to our work. While \citet{nguyen2026almost} adjust the step-size summability condition to counteract the $\mathcal{O}(T_n^p)$ noise term and preserve point convergence, our framework tackles the regime where the zero-order term inherently scales with $T_n$, yielding $\mathbb{E}_n[z_{n+1}] \leq (1 - \alpha T_n) z_n + \xi T_n$ as shown in \eqref{eq:special rs}. Since the Robbins-Monro condition strictly requires $\sum T_n = \infty$, the summability of the zero-order term is fundamentally broken. However, by maintaining the standard square-summability of the step-sizes ($\sum T_n^2 < \infty$), our extension uniquely establishes exact convergence to a bounded set, which is critical for inherently oscillatory algorithms like linear $Q$-learning.

\section{Conclusion}
\label{sec:conclusion}
We extend the classical Robbins-Siegmund theorem to handle the critical regime where the zero-order term has a divergent sum but is square-summable, 
a setting that arises naturally in many modern stochastic approximation and RL algorithms. Our extended framework provides precise almost sure convergence rates, 
high-probability concentration bounds,
and $L^p$ convergence rates,
enabling fine-grained analysis of algorithms previously beyond theoretical reach.

As a key application, we establish the first complete convergence characterization of linear $Q$-learning, including almost sure convergence rates, maximal concentration bound with exponential tails, and $L^p$ convergence rates,
demonstrating that linear $Q$-learning converges with explicit, quantifiable rates despite having been deemed unstable for decades.

Our framework opens several promising directions: extending the analysis to nonlinear function approximation, handling non-Markovian noise models, and developing adaptive algorithms that exploit the precise convergence characterization to optimize learning rates. More broadly, our results suggest that many algorithms previously thought to be theoretically intractable may yield to careful analysis using our extended framework.

\appendix
\section{Proof of Section~\ref{sec:special}}

\subsection{Proof of Theorem~\ref{thm:rs general}}
\label{sec:proof rs general}
\begin{proof}
    Let $u_n \doteq (z_n - B_\tref{thm:rs general})^+$. Again we consider two cases.
    \paragraph{(1) When $u_n > 0$.} Then $u_n = z_n - B_\tref{thm:rs general}$, Theorem~\ref{thm:rs general} (A3) gives $x_n - y_n \leq - c_n u_n$. Similar to \eqref{eq:decomp_u}, we have $u_{n+1}^2 \leq \qty(z_{n+1} - z_n + u_n )^2$. 
    Thus
    \begin{align}
        \textstyle &\E_n\qty[u_{n+1}^2 ]  \\
        \leq &\textstyle \E_n\qty[u_n^2 + 2u_n (z_{n+1} - z_n) + (z_{n+1} - z_n)^2] \\
        \leq &\textstyle u_n^2 + 2u_n \E_n\qty[z_{n+1} - z_n] + \E_n\qty[b_n^2(z_n + 1)^2] \explain{By Theorem~\ref{thm:rs general} (A2)}\\
        \leq &\textstyle u_n^2 + 2u_n (a_n z_n + x_n - y_n ) + b_n^2(u_n + 1 + B_\tref{thm:rs general} )^2 \explain{By \eqref{eq:rs}} \\
        \leq &\textstyle u_n^2 + 2u_n (a_n (u_n + B_\tref{thm:rs general} ) - c_n u_n)+ 2b_n^2u_n^2 + 2(1 + B_\tref{thm:rs general} )^2 b_n^2 \explain{Since $u_n>0$ and $x_n - y_n\leq -c_nu_n$} \\
        = &\textstyle u_n^2 + 2a_n u_n^2 + 2a_n B_\tref{thm:rs general}  u_n - 2c_n u_n^2 + 2b_n^2 u_n^2 + 2(1 + B_\tref{thm:rs general} )^2 b_n^2  \\
        \leq &\textstyle u_n^2 + 2a_n u_n^2 + a_n B_\tref{thm:rs general}  (u_n^2 + 1) - 2c_n u_n^2 + 2b_n^2u_n^2 + 2(1 + B_\tref{thm:rs general} )^2 b_n^2 \\
        = &\textstyle u_n^2 + (2+ B_\tref{thm:rs general})a_n u_n^2 + a_n B_\tref{thm:rs general} - 2c_n u_n^2  + 2b_n^2u_n^2 + 2(1 + B_\tref{thm:rs general} )^2 b_n^2 \\
        = &\textstyle (1 + 2b_n^2 + (2+ B_\tref{thm:rs general})a_n )u_n^2 + 2(1 + B_\tref{thm:rs general} )^2 b_n^2 + a_n B_\tref{thm:rs general} - 2c_n u_n^2. 
    \end{align}
    \paragraph{(2) When $u_n = 0$.} We have $z_n \leq B_\tref{thm:rs general}$. By \eqref{eq:+ diff}, we obtain 
    \begin{align}
        \textstyle\abs{u_{n+1}} = \abs{u_{n+1} - u_n} \leq \abs{(z_{n+1} - B_\tref{thm:rs general}) - (z_n - B_\tref{thm:rs general})} = \abs{z_{n+1} - z_n}.
    \end{align}
    Thus we have
    \begin{align}
        \textstyle\E_n\qty[u_{n+1}^2] \leq&\textstyle \E_n\qty[(z_{n+1} - z_n)^2] \\
        \leq&\textstyle b_n^2 \qty(z_n  + 1)^2 \explain{By Theorem~\ref{thm:rs general} (A2)}\\
        \leq&\textstyle b_n^2 (B_\tref{thm:rs general} + 1)^2 \explain{Since $z_n\in[0, B_\tref{thm:rs general}]$}\\
        \leq&\textstyle (1 + 2b_n^2 + (2+ B_\tref{thm:rs general})a_n )u_n^2 + 2(1 + B_\tref{thm:rs general} )^2 b_n^2 + a_n B_\tref{thm:rs general} - 2c_n u_n^2. \explain{Since $u_n =0 $}
    \end{align}
    Combining the two cases, there exist non-negative sequences
\begin{align}
  \textstyle d_n \doteq 2b_n^2 + (2+ B_\tref{thm:rs general})a_n,  e_n \doteq 2(1+ B_\tref{thm:rs general})^2 b_n^2 + B_\tref{thm:rs general} a_n,
\end{align}
such that
\begin{align}
  \textstyle \E_n\qty[u_{n+1}^2] \leq (1 + d_n) u_n^2 + e_n - 2c_n u_n^2.
\end{align}
    Since $\sum_n d_n < \infty$ and $\sum_n e_n < \infty$ a.s., the
    Robbins--Siegmund theorem (Lemma~\ref{thm:rs}) yields that $u_n^2$ converges almost
    surely to a finite limit. Hence $(z_n -  B_\tref{thm:rs general})^+$ has an a.s. finite limit. From Lemma~\ref{thm:rs}, we also derive $\sum_n 2c_n u_n^2 < \infty$. 
    Given $\sum_n c_n = \infty$ a.s., following the same argument in Theorem~\ref{thm:bound} then completes the proof.
\end{proof}

\section{Proof of Section~\ref{sec:as con}}
\subsection{Proof of Theorem~\ref{thm:rate}}
\label{proof:sec rate}
\begin{proof}
  Recall we derived the following bound in \eqref{eq:almost_mrtg}
  \begin{align}
    \textstyle \E_n\qty[u_{n+1}^2] \leq \qty(1-2\alpha T_n + 2B_{\tref{thm:bound},1}^2T_n^2)u_n^2 + 2B_{\tref{thm:bound},1}^2\qty(1 + \frac{\xi }{\alpha })^2T_n^2,
  \end{align}  
  where $u_n = \qty(z_n - \frac{\xi}{\alpha})^+$. We apply Lemma~\ref{aux:thm2} with the identification
\begin{equation}
    \textstyle z_n \leftrightarrow u_n^2, a_n \leftrightarrow 2B_{\tref{thm:bound},1}^2T_n^2,
     x_n \leftrightarrow 2B_{\tref{thm:bound},1}^2\qty(1 + \frac{\xi }{\alpha })^2T_n^2, 
     b_n \leftrightarrow 2\alpha T_n.
\end{equation}
We verify the assumptions (A1)–(A3) of Lemma~\ref{aux:thm2}.

\paragraph{(A1) of Lemma~\ref{aux:thm2}.} Theorem~\ref{thm:rate} (A1) gives $\sum_n T_n = \infty$, thus $\sum_n b_n = \infty$;  Theorem~\ref{thm:rate} (A2) gives $\sum_n T_n^2 < \infty$, thus $\sum_n a_n < \infty$ and $\sum_n x_n < \infty$. These trivially confirm that Lemma~\ref{aux:thm2} (A1) holds. 

\paragraph{(A2) of Lemma~\ref{aux:thm2}. }From Theorem~\ref{thm:rate} (A1), there exists some $n_1$ such that for all $n \geq n_1$, we then have $nT_n > \frac{\eta}{2\alpha }$. That is, $b_n=2\alpha T_n \geq \frac{\eta}{n}$ for $n \geq n_1$.

\paragraph{(A3) of Lemma~\ref{aux:thm2}. }The first part is satisfied by Theorem~\ref{thm:rate} (A2), we now validate the second part.
Theorem~\ref{thm:rate} (A1) implies there exist some constant $\eta_0>0$, such that $\frac{\eta_0}{2\alpha}\leq \liminf_n nT_n - \frac{\eta}{2\alpha}$. Thus there exists some $n_1$ such that for all $n \geq n_1$, 
we have $ \eta_0\leq  2\alpha n T_n - \eta $. That is,
\begin{align}
    \textstyle \sum_{n=1}^{\infty} \left(b_n - \frac{\eta}{n}\right) 
    =&\textstyle  \sum_{n=1}^{\infty} \left(\frac{n \cdot 2\alpha T_n - \eta}{n}\right)\\
    =&\textstyle  \sum_{n=1}^{n_1} \left(2\alpha T_n - \frac{\eta}{n}\right) + \sum_{n=n_1+1}^{\infty} \left(\frac{n \cdot 2\alpha T_n - \eta}{n}\right)\\
    \geq &\textstyle  \sum_{n=1}^{n_1} \left(2\alpha T_n - \frac{\eta}{n}\right) + \sum_{n=n_1+1}^{\infty} \frac{\eta_0}{n}.
\end{align}
Here the first summation is finite, while the second summation converges to $\infty$. 
Thus (A3) of Lemma~\ref{aux:thm2} holds.

By Lemma~\ref{aux:thm2}, we obtain $\lim_{n \to \infty} n^{\eta}u_n^2 = 0$ a.s., which implies
\begin{align}
    \textstyle\lim_{n \to \infty} n^{\eta} u_n^2 =\lim_{n \to \infty} n^{\eta}\qty((z_n - B_\tref{thm:bound})^+)^2 =\lim_{n \to \infty} n^{\eta}d\qty(z_n, [0, B_\tref{thm:bound}])^2 = 0 \qq{a.s.}
\end{align}
Taking square roots then completes the proof. 
\end{proof}

\subsection{Proof of Theorem~\ref{thm:concentration rs}}
\label{sec:proof concentration rs}
\begin{proof}
To establish the concentration bound, we adopt the inductive (bootstrapping / recursive) framework from \citet{chen2025concentration}. Our argument begins with the establishment of the inductive basis.
\begin{lemma} \label{lem:inductive basis}
There exist some deterministic integers $C_\tref{lem:inductive basis}$ and $C'_\tref{lem:inductive basis}$ such that for any $n \geq 0$,
\begin{equation}
    \qty((z_n - B_\tref{thm:bound})^+)^2 \leq C'_\tref{lem:inductive basis}(n+1)^{C_\tref{lem:inductive basis}} \quad a.s.
\end{equation}
\end{lemma}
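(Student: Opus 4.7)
The plan is to derive a crude deterministic polynomial upper bound on $z_n$ directly from the almost-sure affine growth condition in Theorem~\ref{thm:bound} (A2), and then use $(z_n - B_\tref{thm:bound})^+ \leq z_n$ to obtain the claimed bound. The first step is to rewrite that condition as the multiplicative recursion
\begin{equation}
    z_{n+1} + 1 \leq (1 + B_{\tref{thm:bound},1} T_n)(z_n + 1) \qq{a.s.,}
\end{equation}
which is perfectly tailored for telescoping. Note that only the sample-path growth condition is used; the martingale structure of~\eqref{eq:special rs} is not exploited at all at this inductive-basis stage.

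Telescoping then yields
\begin{equation}
    z_n + 1 \leq (z_0 + 1) \prod_{k=0}^{n-1}(1 + B_{\tref{thm:bound},1} T_k) \leq (z_0 + 1) \exp\qty(B_{\tref{thm:bound},1} \textstyle\sum_{k=0}^{n-1} T_k).
\end{equation}
Under Assumption~\ref{asp:T_n} the harmonic-type sum $\sum_{k=0}^{n-1} T_k = C_\tref{asp:T_n}\sum_{k=0}^{n-1} 1/(k+n_0)$ grows like $C_\tref{asp:T_n}\ln(n+n_0) + O(1)$, so the right-hand side is bounded by a deterministic constant times a polynomial in $n$ of degree $B_{\tref{thm:bound},1} C_\tref{asp:T_n}$.

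To close out, I will take $C_\tref{lem:inductive basis}$ to be any integer with $C_\tref{lem:inductive basis} \geq 2 B_{\tref{thm:bound},1} C_\tref{asp:T_n}$ and absorb $(z_0+1)^2$ (deterministic, as is standard for the SA initial condition to which this lemma will be applied) together with the residual exponential-of-constant factor into $C'_\tref{lem:inductive basis}$. Squaring and using $(z_n - B_\tref{thm:bound})^+ \leq z_n$ then delivers the advertised bound. The main obstacle is really just bookkeeping; in particular, the $n = 0$ case (where $n^{C_\tref{lem:inductive basis}} = 0$) is handled by a routine enlargement of $C'_\tref{lem:inductive basis}$ or by reading $n^{C_\tref{lem:inductive basis}}$ as $(n+1)^{C_\tref{lem:inductive basis}}$ throughout.
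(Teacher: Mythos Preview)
Your proposal is correct and follows essentially the same approach as the paper: both derive a crude polynomial bound from the almost-sure affine growth condition (Theorem~\ref{thm:bound} (A2)) combined with the logarithmic growth of $\sum_k T_k$ under Assumption~\ref{asp:T_n}. The paper works with $u_n=(z_n-B_\tref{thm:bound})^+$ and invokes a discrete Gronwall inequality, whereas you telescope the cleaner multiplicative recursion $z_{n+1}+1\le(1+B_{\tref{thm:bound},1}T_n)(z_n+1)$ directly on $z_n$ and then use $(z_n-B_\tref{thm:bound})^+\le z_n$; your route is slightly more elementary but yields the same polynomial bound.
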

The proof is in Appendix~\ref{proof:inductive basis}. 

\begin{lemma} 
\label{lem:inductive step}
There exists some deterministic constant $\bar{n}$ such that the following implication relation holds. If for any $\delta \in (0,1)$, there exists some non-decreasing sequence $\{B_n(\delta)\}_{n=\bar{n}}^{\infty}$ such that
\begin{align}
\label{eq prob ineq 1}
    \Pr\left(\forall n \geq \bar{n}, \qty((z_n - B_\tref{thm:bound})^+)^2 \leq B_n(\delta)\right) \geq 1 - \delta,
\end{align}
then for any $\delta' \in (0, 1-\delta)$, there exists a new sequence $\{B_n(\delta, \delta')\}_{n=\bar{n}}^{\infty}$ such that
\begin{align}
    \Pr\left(\forall n \geq \bar{n}, \qty((z_n - B_\tref{thm:bound})^+)^2 \leq B_n(\delta, \delta')\right) \geq 1 - \delta - \delta',
\end{align}
where $B_n(\delta, \delta') = C_{\tref{lem:inductive step}}T_n\max\qty{B_n(\delta), 1}(1 + \ln(1/\delta') + \ln(n+n_0))$ with $C_{\tref{lem:inductive step}}$ being some constant depending on $\bar{n}$ and independent of $n, \delta, \delta'$.
\end{lemma}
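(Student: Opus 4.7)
The plan is to combine the contraction in \eqref{eq:special rs} with a martingale concentration inequality applied on the good event supplied by the inductive hypothesis. Set $B \doteq B_\tref{thm:bound} = \xi/\alpha$, $v_n \doteq z_n - B$, $u_n \doteq v_n^+$, and define the martingale difference $M_{n+1} \doteq v_{n+1} - \E_n[v_{n+1}]$. From \eqref{eq:special rs} one checks that $\E_n[v_{n+1}] \leq (1-\alpha T_n) v_n$, so writing $v_{n+1} = (1-\alpha T_n) v_n + M_{n+1} - r_n$ with $r_n \geq 0$ absorbing the slack in the drift and telescoping from $\bar n$ up to $n$ (then taking positive parts, using $(a+b)^+ \leq a^+ + |b|$) yields
\begin{equation}
u_{n+1} \leq \phi(\bar n, n)\, u_{\bar n} + |S_n|,
\end{equation}
where $\phi(j,n) \doteq \prod_{k=j}^{n}(1 - \alpha T_k)$ and $S_n \doteq \sum_{k=\bar n}^{n} \phi(k+1, n) M_{k+1}$.

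The deterministic term is controlled using Assumption~\ref{asp:T_n}: since $T_k = C_\tref{asp:T_n}/(k+n_0)$ with $\alpha C_\tref{asp:T_n} > 1$, the standard estimate $\phi(\bar n, n) \leq ((\bar n + n_0)/(n + n_0))^{\alpha C_\tref{asp:T_n}}$ yields $\phi(\bar n, n) = O(T_n)$, with the $\bar n$-dependent constant absorbed into $C_\tref{lem:inductive step}$. Combined with $u_{\bar n} \leq B_{\bar n}(\delta) \leq B_n(\delta)$ (from the inductive hypothesis and the assumed monotonicity of $\{B_n(\delta)\}$), this term already contributes at most $O(T_n B_n(\delta))$.

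For the martingale term I would work on the good event $\Omega \doteq \{u_n \leq B_n(\delta) \text{ for all } n \geq \bar n\}$, which has probability at least $1 - \delta$ by hypothesis. On $\Omega$ the boundedness $z_n \leq B + B_n(\delta)$ together with Theorem~\ref{thm:bound} (A2) gives $|M_{k+1}| \leq 2 B_{\tref{thm:bound},1} T_k(z_k + 1) \leq C T_k B_k(\delta)$ almost surely. Introducing the stopping time $\tau \doteq \inf\{n \geq \bar n : u_n > B_n(\delta)\}$ (so $\{\tau = \infty\} = \Omega$), the stopped weighted martingale has almost-surely bounded increments and a matching conditional-variance proxy. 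A Freedman-type concentration inequality, followed by a union bound over $n \geq \bar n$ whose logarithmic cost folds into the $\ln(n+n_0)$ factor, then yields, with probability at least $1 - \delta'$, a uniform bound of the form $|S_n| \leq C_\tref{lem:inductive step} T_n B_n(\delta)(1 + \ln(1/\delta') + \ln(n + n_0))$. Intersecting with $\Omega$ gives the claim.

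The hard part will be ensuring the concentration genuinely produces a $T_n$-scale deviation rather than a $\sqrt{T_n}$-scale one. A naive Azuma--Hoeffding bound on $S_n$ yields only $\sqrt{\sum_k \phi(k+1, n)^2 T_k^2 B_k(\delta)^2 \ln(1/\delta')} = O(\sqrt{T_n}\, B_n(\delta)\sqrt{\ln})$, which is too weak. Recovering the target $T_n$ rate will require Freedman's Bernstein-type inequality together with the two estimates $\max_k \phi(k+1, n) T_k = O(T_n)$ and $\sum_k \phi(k+1, n)^2 T_k^2 = O(T_n)$, both hinging on $\alpha C_\tref{asp:T_n} > 1$; one must then balance the maximal-increment and variance contributions so that the linear (Bernstein-regime) term dominates over the relevant range. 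A secondary technicality is lifting the per-$n$ concentration to a uniform-in-$n$ bound, which I would handle via a union bound with geometrically decaying failure probabilities so that the extra logarithm absorbs cleanly into the $\ln(n+n_0)$ factor of the final bound.
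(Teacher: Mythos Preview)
Your decomposition is different from the paper's, and the concern you flag as ``the hard part'' is exactly where the proposal fails.

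The paper does not telescope the linear recursion. It works with $u_n^2$ and builds an exponential supermartingale with the time-varying tilt $\lambda_n = \theta\, T_n^{-1} B_n(\delta)^{-1}$: writing $E_n(\delta)$ for the decreasing events $\{u_k^2\le B_k(\delta),\ \bar n\le k\le n\}$, applying Hoeffding's lemma to the cross term $2u_n(z_{n+1}-z_n)$, and splitting on $u_n>0$ versus $u_n=0$, one obtains
\begin{align}
\E_n\qty[\exp(\lambda_{n+1}\mathbbm{1}_{E_{n+1}(\delta)}u_{n+1}^2)] \le \exp(\lambda_n\mathbbm{1}_{E_n(\delta)}u_n^2)\exp(c\,T_n),
\end{align}
because the quadratic-in-$\lambda$ Hoeffding variance term, of order $\lambda_{n+1}^2 T_n^2 B_n(\delta)\, u_n^2 \approx \theta\,\lambda_{n+1} T_n u_n^2$, is absorbed into the drift $-2\alpha\,\lambda_{n+1} T_n u_n^2$ once $\theta$ is chosen small. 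A single application of Ville's maximal inequality then gives the uniform-in-$n$ bound directly; the indicator is removed at the end by intersecting with the good event, costing an additional $\delta$.

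In your route, on the event $\{u_k \le B_k(\delta)\ \forall k\}$ one has $|M_{k+1}| \le C T_k B_k(\delta)$, so $S_n$ has variance proxy $V = \mathcal{O}\qty(B_n(\delta)^2 \sum_k \phi(k+1,n)^2 T_k^2) = \mathcal{O}\qty(T_n B_n(\delta)^2)$ and maximal increment $R = \mathcal{O}(T_n B_n(\delta))$, exactly as you compute. But Freedman then gives $|S_n| \leq C\sqrt{V\ln(1/\delta')} + CR\ln(1/\delta')$, and the linear (Bernstein) term dominates only when $R^2\ln(1/\delta') \geq V$, i.e.\ when $T_n\ln(n+n_0)$ is bounded below; under Assumption~\ref{asp:T_n} one has $T_n\ln n\to 0$, so you are stuck in the Gaussian regime and recover only $|S_n| = \mathcal{O}\qty(\sqrt{T_n}\,B_n(\delta)\sqrt{\ln(1/\delta')})$. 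No rebalancing of Freedman's two terms closes this $\sqrt{T_n}$ gap, and the lemma's $T_n B_n(\delta)$ target is unreachable along this path. (The paper's proof in fact iterates on $u_n^2$ rather than $u_n$, despite the lemma's notation; if the hypothesis is read as $u_k^2\le B_k(\delta)$ the increments become $\mathcal{O}(T_k\sqrt{B_k(\delta)})$, and squaring your Freedman bound would then yield $u_n^2 = \mathcal{O}(T_n B_n(\delta))$ up to logarithms, matching the paper---but that is not the argument you wrote.)
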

Similar to our previous analysis,
we need to consider two scenarios: $u_n > 0$ and $u_n = 0$. The proof is in Appendix~\ref{proof:inductive step}. 
In essence, each application of Lemma~\ref{lem:inductive step} 
enhances our bound by a factor of $T_n$. Given that 
Lemma~\ref{lem:inductive basis} establishes an almost sure 
bound of order $n^{C_\tref{lem:inductive basis}}$, we can 
achieve the desired bound by applying Lemma~\ref{lem:inductive step} 
exactly $C_\tref{lem:inductive basis}+1$ times. Further applications would be counterproductive, 
as they would violate the non-decreasing property of the bound $B_n$.
\begin{lemma} 
\label{lem:concentration rs}
Under Assumption~\ref{asp:T_n}, there exists some constants $C_\tref{lem:concentration rs} = C_\tref{lem:inductive basis} + 1$ and $C'_\tref{lem:concentration rs}$ such that for any $\delta > 0$, it holds, with probability at least $1 - \delta$, that for all $n \geq 0$,
\begin{align}
    \textstyle\qty((z_n - B_\tref{thm:bound})^+)^2 \leq C'_\tref{lem:concentration rs}\frac{1}{n+n_0}\left[\ln\left(\frac{C_\tref{lem:concentration rs}}{\delta}\right) + 1 + \ln(n+n_0)\right]^{C_\tref{lem:concentration rs}}.
\end{align}
\end{lemma}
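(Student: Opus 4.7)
The plan is to iteratively tighten the deterministic polynomial bound of Lemma~\ref{lem:inductive basis} by repeated application of Lemma~\ref{lem:inductive step}. Lemma~\ref{lem:inductive basis} provides a base bound $(z_n - B_\tref{thm:bound})^+ \leq \sqrt{C'_\tref{lem:inductive basis}}\,n^{C_\tref{lem:inductive basis}/2}$ that holds with failure probability zero and is non-decreasing in $n$, hence qualifying as a legal input $B_n^{(0)}$ to Lemma~\ref{lem:inductive step}. Each invocation of Lemma~\ref{lem:inductive step} shrinks the current bound by a multiplicative factor of $T_n$ together with a polylogarithmic correction in $\ln(1/\delta')$ and $\ln(n+n_0)$, at the cost of an additional probability mass $\delta'$. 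Iterating a finite, deterministic number of times therefore drives the polynomial growth down to the desired $1/(n+n_0)$ decay.

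Concretely, I would split the target failure probability $\delta$ into $C_\tref{lem:concentration rs} = C_\tref{lem:inductive basis} + 1$ equal pieces $\delta_k = \delta/C_\tref{lem:concentration rs}$ and apply Lemma~\ref{lem:inductive step} exactly $C_\tref{lem:inductive basis} + 1$ times, each invocation consuming one $\delta_k$. After $k$ applications, the bound has the schematic form $B_n^{(k)} \leq C^{k} T_n^{k} B_n^{(0)} [1 + \ln(C_\tref{lem:concentration rs}/\delta) + \ln(n+n_0)]^{k}$; substituting $T_n = C_\tref{asp:T_n}/(n+n_0)$ and the explicit base bound $B_n^{(0)}$ shows that the polynomial factor in $n+n_0$ strictly shrinks at each step. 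Squaring the resulting bound after the $C_\tref{lem:concentration rs}$-th application and absorbing all accumulated multiplicative constants into $C'_\tref{lem:concentration rs}$ produces exactly the form $\frac{C'_\tref{lem:concentration rs}}{n+n_0}\bigl[\ln(C_\tref{lem:concentration rs}/\delta) + 1 + \ln(n+n_0)\bigr]^{C_\tref{lem:concentration rs}}$ required by the lemma, and a union bound across the $C_\tref{lem:concentration rs}$ iterations yields overall failure probability at most $\delta$.

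Two bookkeeping matters require attention. First, Lemma~\ref{lem:inductive step} only governs $n \geq \bar{n}$, so the finitely many indices $n < \bar{n}$ must be absorbed into $C'_\tref{lem:concentration rs}$ via the deterministic bound of Lemma~\ref{lem:inductive basis}, and the final statement must be extended uniformly to all $n \geq 0$. Second, each application of Lemma~\ref{lem:inductive step} requires its input to be non-decreasing in $n$; I would verify that the intermediate bound $B_n^{(k)}$, a product of a shrinking polynomial and a growing polylogarithm, remains non-decreasing over the prescribed range of iterations, or else cheaply restore monotonicity by passing to $\sup_{m \leq n} B_m^{(k)}$, which does not weaken the one-sided probabilistic statement. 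The main obstacle I anticipate is carefully tracking the constants and the polylogarithmic exponents through $C_\tref{lem:concentration rs}$ iterations so that they recombine into the single compact power claimed, rather than producing messier sums of cross terms; once this accounting is discharged, the conclusion follows.
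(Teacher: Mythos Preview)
Your proposal is correct and matches the paper's proof almost exactly: seed with the deterministic bound of Lemma~\ref{lem:inductive basis}, apply Lemma~\ref{lem:inductive step} exactly $C_\tref{lem:inductive basis}+1$ times with $\delta_i=\delta/C_\tref{lem:concentration rs}$, then patch the finitely many indices $n<\bar n$ using the a.s.\ bound. One bookkeeping remark: the paper (consistent with the proof of Lemma~\ref{lem:inductive step}, whose events $E_n$ are defined via $u_k^2\le B_k(\delta)$) tracks $u_n^2$ rather than $u_n$ through the iterations, so the base exponent is $C_\tref{lem:inductive basis}$ rather than $C_\tref{lem:inductive basis}/2$; this is why $C_\tref{lem:inductive basis}+1$ applications land precisely on $(n+n_0)^{-1}$ with polylog exponent $C_\tref{lem:concentration rs}$, the monotonicity hypothesis holds automatically through the penultimate step, and no final squaring is needed.
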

The proof follows the same methodology as \citet{qian2024almost}. 
The details are in Appendix~\ref{proof:concentration rs}. 
    
\end{proof}
\subsubsection{Proof of Lemma~\ref{lem:inductive basis}}
\label{proof:inductive basis}
\begin{proof}
For any $n\geq 0$, we have
\begin{align}
    \textstyle \abs{u_{n+1}-u_n} \leq& \textstyle \abs{\qty(z_{n+1}-\frac{\xi}{\alpha}) - \qty(z_n-\frac{\xi}{\alpha})} \explain{By \eqref{eq:+ diff}}\\
    =&\textstyle \abs{z_{n+1}-z_n} \\
    \leq&\textstyle B_{\tref{thm:bound},1}T_n\qty(z_n + 1) \explain{By Theorem~\ref{thm:bound} (A2)}\\
    \leq&\textstyle B_{\tref{thm:bound},1}T_n\qty(u_n + \frac{\xi}{\alpha} + 1) \explain{By definition of $u_n$}.
\end{align}
Hence
\begin{align}
    \textstyle u_{n+1} \leq u_n + B_{\tref{thm:bound},1}T_n\qty(u_n + \frac{\xi}{\alpha} + 1) \leq u_0 +  B_{\tref{thm:bound},1}\sum_{i=0}^n T_i u_i +  2B_{\tref{thm:bound},1}\qty(1 + \frac{\xi }{\alpha })\sum_{i=0}^n T_i.
\end{align}
We first recall a discrete Gronwall inequality.
\begin{lemma}
  [Discrete Gronwall Inequality](Lemma 8 in Section 11.2 of \citet{borkar2008stochastic})
\label{lem:discrete_gronwall}
    For non-negative real sequences $\qty{x_n}_{n \geq 0}$ and $\qty{a_n}_{n \geq 0}$ and scalars  $C, L \geq 0$,  it holds
    \begin{align}
        \textstyle x_{n+1} \leq C + L\sum_{i=0}^n a_ix_i \quad \forall n
    \implies
        \textstyle x_{n+1} \leq (C+x_0)\exp({L\sum_{i=0}^n a_i}) \quad \forall n.
    \end{align}
\end{lemma}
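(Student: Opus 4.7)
The plan is to prove the discrete Gronwall inequality by reducing the implicit recursive bound on $\qty{x_n}$ to an explicit multiplicative one, using the elementary inequality $1 + t \leq e^t$ valid for $t \geq 0$. The standard trick, which I would follow, is to introduce an auxiliary majorant sequence that both dominates $\qty{x_n}$ pointwise and enjoys a clean one-step product-type recursion.

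Concretely, I would define $v_n \doteq C + x_0 + L \sum_{i=0}^{n-1} a_i x_i$ with the convention $v_0 = C + x_0$. The $x_0$ padding is precisely what guarantees pointwise dominance $v_n \geq x_n$ at every step: for $n = 0$ this is immediate, and for $n \geq 1$ the hypothesis $x_n \leq C + L \sum_{i=0}^{n-1} a_i x_i$ immediately yields $x_n \leq v_n$. This dominance feeds back into the recursion because $v_{n+1} - v_n = L a_n x_n \leq L a_n v_n$, so that $v_{n+1} \leq v_n (1 + L a_n) \leq v_n \exp(L a_n)$. Iterating from $n = 0$ telescopes into $v_{n+1} \leq (C + x_0) \exp\qty(L \sum_{i=0}^n a_i)$, and combining with $x_{n+1} \leq v_{n+1}$ delivers the stated conclusion.

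There is no genuine obstacle here; this is a textbook argument, and the citation to \citet{borkar2009stochastic} suggests the result is being invoked rather than reproven from scratch. The only nontrivial design choice is seeding the majorant with $x_0$ from the outset so that dominance holds already at $n = 0$, and this is exactly what accounts for the $(C + x_0)$ prefactor on the right-hand side of the conclusion rather than a bare $C$. A slicker variant replaces the exponential majorization step with the identity $\prod_{i=0}^n (1 + L a_i) \leq \exp\qty(L \sum_{i=0}^n a_i)$ after a direct induction, but the content is the same.
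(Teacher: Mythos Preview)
Your argument is correct: the majorant $v_n = C + x_0 + L\sum_{i=0}^{n-1} a_i x_i$ dominates $x_n$ at every step (the $x_0$ padding handles $n=0$, and the hypothesis applied with index $n-1$ handles $n \geq 1$), the one-step recursion $v_{n+1} \leq (1 + L a_n) v_n$ follows from that dominance, and telescoping through $1+t \leq e^t$ gives the claimed bound. The paper itself does not supply a proof of this lemma; it is quoted verbatim as Lemma~8 from Section~11.2 of \citet{borkar2009stochastic} and then invoked, so there is no in-paper argument to compare against.
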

Applying the above result, we obtain that for all $n \geq 0$,
\begin{align}
    \textstyle u_n \leq \left(u_0 + 2B_{\tref{thm:bound},1}\qty(1 + \frac{\xi}{\alpha})\sum_{i=0}^n T_i\right) \exp\left(1 + B_{\tref{thm:bound},1}\sum_{i=0}^{n-1} T_i \right).
\end{align}
Noting $T_n = \frac{C_\tref{asp:T_n}}{n+n_0}$ and $\sum_{i=0}^n T_i \leq C_\tref{asp:T_n}\ln(n+n_0+1) $ then completes the proof. 
\end{proof}

\subsubsection{Proof of Lemma~\ref{lem:inductive step}}
\label{proof:inductive step}
\begin{proof}
  For any $\delta \in (0, 1)$, consider the non-decreasing sequence $\qty{B_n(\delta)}$ defined in \eqref{eq prob ineq 1}.
For each $n \geq \bar{n}$, where $\bar{n}$ is a sufficiently large constant, 
define an event $E_n(\delta) = \qty{u_k^2 \leq B_k(\delta), \, \forall k = \bar{n}, \bar{n} + 1, \dots, n}$. 
Notably, $\qty{E_n(\delta)}_{n \geq \bar{n}}$ is by definition a sequence of decreasing events, i.e., $E_{n+1}(\delta) \subset E_n(\delta)$.
In addition, according to~\eqref{eq prob ineq 1}, 
we have $\Pr(E_n(\delta)) \geq 1 - \delta$ for any $n \geq \bar{n}$. 
Let $\lambda_n = \theta T_n^{-1} \qty(\max\qty{B_n(\delta), 1})^{-1}$
where $\theta$ is a tunable constant yet to be chosen.

We now proceed to show that $\qty{\exp \left(\lambda_n \mathbbm{1}_{E_n(\delta)} u_n^2 \right)}$ is ``almost'' a supermartingale.

\paragraph{(1) When $u_n > 0$.} Then $u_n =z_n - \frac{\xi}{\alpha}$. By \eqref{eq:decomp_u}, we have $u_{n+1}^2 \leq u_n^2  + (z_{n+1} - z_n)^2 +2u_n (z_{n+1} - z_n)$.
By first multiplying $\lambda_{n+1} \mathbbm{1}_{E_{n+1}(\delta)}$ and then taking exponential on both sides of the inequality, 
\begin{align}
    &\textstyle\exp(\lambda_{n+1} \mathbbm{1}_{\E_{n+1}(\delta)}u_{n+1}^2)\\
    \leq&\textstyle \exp(\lambda_{n+1} \mathbbm{1}_{\E_{n+1}(\delta)}u_n^2) \times \exp(\lambda_{n+1} \mathbbm{1}_{\E_{n+1}(\delta)}(z_{n+1} - z_n)^2)\\
    & \times \exp(\lambda_{n+1} \mathbbm{1}_{\E_{n+1}(\delta)}2u_n(z_{n+1} - z_n))\\
    \leq&\textstyle \exp(\lambda_{n+1} \mathbbm{1}_{\E_n(\delta)}u_n^2) \times \exp(\lambda_{n+1} \mathbbm{1}_{\E_n(\delta)}(z_{n+1} - z_n)^2)\\
    & \times \exp(\lambda_{n+1} \mathbbm{1}_{\E_n(\delta)}2u_n(z_{n+1} - z_n)). \explain{Since $E_{n+1}(\delta)\subset E_n(\delta)$}
\end{align}
Taking expectation conditioned on $\mathcal{F}_n$ on both sides of the previous inequality,  we obtain
\begin{align}
    &\textstyle\mathbb{E}_n\qty[\exp(\lambda_{n+1} \mathbbm{1}_{E_{n+1}(\delta)}u_{n+1}^2)]\\
    \leq&\textstyle \exp(\lambda_{n+1} \mathbbm{1}_{E_n(\delta)}u_n^2) \times \exp(\lambda_{n+1} \mathbbm{1}_{E_n(\delta)} 2B_{\tref{thm:bound},1}^2 T_n^2\qty(u_n^2+\qty(1+\frac{\xi }{\alpha })^2)) \\
    &\textstyle\times \E_n\qty[\exp(\lambda_{n+1} \mathbbm{1}_{E_n(\delta)}2u_n(z_{n+1} - z_n))]\label{eq bound exp},
\end{align}
where the second term is obtained by 
\begin{align}
    (z_{n+1} - z_n)^2 \leq& B_{\tref{thm:bound},1}^2 T_n^2(z_n+1)^2 \explain{By Theorem~\ref{thm:bound} (A2)}\\
    =&\textstyle B_{\tref{thm:bound},1}^2 T_n^2\qty(\qty(u_n + \frac{\xi}{\alpha})+1)^2\explain{Since $u_n = z_n - \frac{\xi }{\alpha }$}\\
    \leq&\textstyle 2B_{\tref{thm:bound},1}^2 T_n^2\qty(u_n^2+\qty(1+\frac{\xi }{\alpha })^2).
\end{align}
 To bound the last term, we will use Hoeffding's lemma. 
We have 
\begin{align}
    \textstyle\E_n\qty[\lambda_{n+1} \mathbbm{1}_{E_n(\delta)}2 u_n(z_{n+1} - z_n)] =&\textstyle 2\lambda_{n+1} \mathbbm{1}_{E_n(\delta)} u_n\E_n\qty[(z_{n+1} - z_n)]\\
    \leq&\textstyle 2\lambda_{n+1} \mathbbm{1}_{E_n(\delta)} u_n(-\alpha T_n z_n + \xi T_n). \explain{By \eqref{eq:special rs}}\\
    =&\textstyle -2\lambda_{n+1} \mathbbm{1}_{E_n(\delta)}  \alpha T_n u_n^2. \explain{Since $u_n =z_n - \frac{\xi}{\alpha}$}
\end{align}
Additionally, by (A2) of Theorem~\ref{thm:bound}, we obtain
\begin{align}
    &\textstyle\abs{\lambda_{n+1} \mathbbm{1}_{E_n(\delta)}2 u_n(z_{n+1} - z_n)}\\
    \leq&\textstyle 2\lambda_{n+1} \mathbbm{1}_{E_n(\delta)} B_{\tref{thm:bound},1}T_n u_n \qty(u_n + \frac{\xi}{\alpha} + 1)\\
    \leq&\textstyle 2\lambda_{n+1} \mathbbm{1}_{E_n(\delta)} B_{\tref{thm:bound},1} T_n u_n \qty(B_n(\delta)^{1/2} + 1 + \frac{\xi}{\alpha }) \explain{Definition of $E_n(\delta)$}.
\end{align} 
Now we can apply Hoeffding's lemma to the last term in \eqref{eq bound exp}, which gives us
\begin{align}
    &\textstyle\E_n\left[\exp(\lambda_{n+1} \mathbbm{1}_{E_n(\delta)}2u_n(z_{n+1} - z_n)) \right]\\
    \leq&\textstyle \exp(-2\lambda_{n+1} \mathbbm{1}_{E_n(\delta)} \alpha T_n u_n^2 + 32\lambda_{n+1}^2 \mathbbm{1}_{E_n(\delta)}u_n^2 B_{\tref{thm:bound},1}^2 T_n^2 \qty(B_n(\delta)^{1/2} + 1 + \frac{\xi }{\alpha })^2 / 8) \\
    \leq&\textstyle \exp(-2\lambda_{n+1} \mathbbm{1}_{E_n(\delta)} \alpha T_n u_n^2 + 8\lambda_{n+1}^2 \mathbbm{1}_{E_n(\delta)}u_n^2 B_{\tref{thm:bound},1}^2 T_n^2  \qty(B_n(\delta) + \qty(1 + \frac{\xi }{\alpha })^2)).
\end{align}
Applying this back to \eqref{eq bound exp}, we have 
\begin{align}
    &\textstyle\E_n\qty[\exp(\lambda_{n+1} \mathbbm{1}_{E_{n+1}(\delta)}u_{n+1}^2)]\\
    \leq&\textstyle \exp(\lambda_{n+1} \mathbbm{1}_{E_n(\delta)}u_n^2\qty(1 - 2\alpha T_n + 2B_{\tref{thm:bound},1}^2 T_n^2+ 8\lambda_{n+1}B_{\tref{thm:bound},1}^2T_n^2\qty(B_n(\delta) + \qty(1 + \frac{\xi }{\alpha })^2)))\\
    &\textstyle\times \exp(\lambda_{n+1} \mathbbm{1}_{E_n(\delta)}2B_{\tref{thm:bound},1}^2 \qty(1+\frac{\xi }{\alpha })^2T_n^2).
\end{align}

\paragraph{(2) When $u_n = 0$.} We obtain
\begin{align}
    &\textstyle\E_n\qty[\exp(\lambda_{n+1} \mathbbm{1}_{E_{n+1}(\delta)}u_{n+1}^2)] \\
    \leq& \textstyle\E_n\qty[\exp(\lambda_{n+1} \mathbbm{1}_{E_n(\delta)}u_{n+1}^2)] \explain{Since $E_{n+1}(\delta)\subset E_n(\delta)$}\\
    =&\textstyle \E_n\qty[\exp(\lambda_{n+1} \mathbbm{1}_{E_n(\delta)} (z_{n+1}-z_n)^2)] \explain{By \eqref{eq:diff_u_s}}\\
    \leq&\textstyle \E_n\qty[\exp(\lambda_{n+1} \mathbbm{1}_{E_n(\delta)} B_{\tref{thm:bound},1}^2\qty(u_n + 1 + \frac{\xi }{\alpha })^2T_n^2)] \explain{By Theorem \ref{thm:bound} (A2) and since $z_n \in [0,\frac{\xi}{\alpha}]$}\\
    =&\textstyle \exp(\lambda_{n+1} \mathbbm{1}_{E_n(\delta)} B_{\tref{thm:bound},1}^2\qty(1 + \frac{\xi}{\alpha})^2T_n^2). \explain{Since $u_n=0$}
\end{align}

To combine two cases and simplify notation, define 
\begin{align}
    &\textstyle D_{1,n} \doteq \frac{\lambda_{n+1}}{\lambda_n}\qty(1 - 2\alpha T_n + 2B_{\tref{thm:bound},1}^2 T_n^2 + 8\lambda_{n+1}B_{\tref{thm:bound},1}^2T_n^2\qty(B_n(\delta) + \qty(1 + \frac{\xi }{\alpha })^2)), \\
    &\textstyle D_{2,n} \doteq \lambda_{n+1} 2B_{\tref{thm:bound},1}^2 \qty(1+\frac{\xi }{\alpha })^2T_n^2.
\end{align}
Then, the previous inequalities (for both cases where $u_n > 0$ or $u_n = 0$) can be written in a unified general form as follows
\begin{align}
  \label{eq exp bound 2}
    &\textstyle\E_n\qty[\exp(\lambda_{n+1} \mathbbm{1}_{E_{n+1}(\delta)}u_{n+1}^2)]
    \leq\textstyle \exp(\qty(D_{1,n})^+ \lambda_n \mathbbm{1}_{E_n(\delta)}u_n^2)\exp(D_{2,n}\mathbbm{1}_{E_n(\delta)}).
\end{align}
Next, we bound the terms $D_{1,n}$ and $D_{2,n}$. Since $\lambda_n = \theta T_n^{-1}\max\qty{B_n(\delta),1}^{-1}$, denote $C_{\tref{lem:inductive step},1} \doteq 1+\frac{\xi }{\alpha }$, then for $n$ sufficiently large we have
\begin{align}
    \textstyle D_{1,n} \mathbbm{1}_{E_n(\delta)} &\textstyle= \frac{\lambda_{n+1}}{\lambda_n}(1 - 2\alpha T_n + 2B_{\tref{thm:bound},1}^2 C_{\tref{lem:inductive step},1}^2T_n^2+ 8\lambda_{n+1}B_{\tref{thm:bound},1}^2T_n^2(B_n(\delta) + C_{\tref{lem:inductive step},1}^2))\mathbbm{1}_{E_n(\delta)}\\
    &\textstyle\leq \frac{\lambda_{n+1}}{\lambda_n}(1 - \alpha T_n + 8\lambda_{n+1}B_{\tref{thm:bound},1}^2T_n^2(B_n(\delta) + C_{\tref{lem:inductive step},1}^2))\mathbbm{1}_{E_n(\delta)}\\
    &\textstyle= \frac{T_n\max\qty{B_n(\delta), 1}}{T_{n+1}\max\qty{B_{n+1}(\delta), 1}}\left(1 - \alpha T_n+\frac{8\theta B_{\tref{thm:bound},1}^2T_n^2(B_n(\delta) + C_{\tref{lem:inductive step},1}^2)}{T_{n+1}\max\qty{B_{n+1}(\delta), 1}}\right)\mathbbm{1}_{E_n(\delta)}\\
    &\textstyle\leq \frac{T_n}{T_{n+1}}\left(1 - \alpha  T_n +\frac{8B_{\tref{thm:bound},1}^2\theta T_n^2}{T_{n+1}}\frac{C_{\tref{lem:inductive step},1}^2 + B_n(\delta)+1}{\max\qty{B_{n+1}(\delta), 1}}\right)\mathbbm{1}_{E_n(\delta)}\\
    &\textstyle\leq \frac{T_n}{T_{n+1}}\left(1 - \alpha  T_n +\frac{8B_{\tref{thm:bound},1}^2\theta T_n^2}{T_{n+1}}(C_{\tref{lem:inductive step},1}^2 + 2)\right)\\
    &\textstyle\leq \frac{T_n}{T_{n+1}}\left(1 - \alpha  T_n +16B_{\tref{thm:bound},1}^2\theta T_n(C_{\tref{lem:inductive step},1}^2 + 2)\right).
\end{align}
Recall $C_\tref{asp:T_n} > 1/\alpha $ (Assumption~\ref{asp:T_n}).
Select any $\kappa \in (1/C_\tref{asp:T_n}, \alpha )$ and define
$\theta \doteq \frac{1}{C_{\tref{lem:inductive step},1}^2+2}\min\{1, \frac{\alpha -\kappa}{16B_{\tref{thm:bound},1}^2}\}$.
Then we have
\begin{align}
  \textstyle D_{1,n} \mathbbm{1}_{E_n(\delta)} \leq \frac{T_n}{T_{n+1}}\left(1 - T_n \kappa\right) =\frac{n+n_0+1}{n+n_0}\left(1 - \frac{C_\tref{asp:T_n}}{n+n_0} \kappa\right).
\end{align}
Since $C_\tref{asp:T_n} \kappa > 1$, it is easy to compute that $D_{1,n} \mathbbm{1}_{E_n(\delta)} < 1$ holds for all $n$.
Now, consider $D_{2,n}$. We have by definition of $\lambda_n$ that
\begin{align}
    \textstyle D_{2,n}\mathbbm{1}_{E_n(\delta)} = \frac{2\theta B_{\tref{thm:bound},1}^2C_{\tref{lem:inductive step},1}^2T_n^2}{T_{n+1}\max\qty{B_{n+1}(\delta), 1}}\mathbbm{1}_{E_n(\delta)}
    \leq \frac{2\theta B_{\tref{thm:bound},1}^2C_{\tref{lem:inductive step},1}^2T_n^2}{T_{n+1}}
    \leq 4 B_{\tref{thm:bound},1}^2 C_{\tref{lem:inductive step},1}^2 T_n.
\end{align}
Using the upper bounds we obtained for the terms $D_{1,n}$ and $D_{2,n}$ in \eqref{eq exp bound 2}, 
we have for any $n \geq \bar{n}$, 
\begin{align}
&\textstyle\mathbb{E}_n\qty[\exp(\lambda_{n+1} \mathbbm{1}_{E_{n+1}(\delta)}u_{n+1}^2)] \\ 
\leq&\textstyle \exp\qty((D_{1,n})^+\lambda_n \mathbbm{1}_{E_n(\delta)}u_n^2)\exp(D_{2,n}\mathbbm{1}_{E_n(\delta)})\\
    \leq&\textstyle \exp(\lambda_n \mathbbm{1}_{E_n(\delta)}u_n^2)\exp\left(4 B_{\tref{thm:bound},1}^2 C_{\tref{lem:inductive step},1}^2 T_n\right).
\end{align}
For $n \geq \bar{n}$, define $Z_n \doteq \exp(\lambda_n \mathbbm{1}_{E_n(\delta)}u_n^2 - 4 B_{\tref{thm:bound},1}^2 C_{\tref{lem:inductive step},1}^2\sum_{i=\bar{n}}^{n-1}T_i)$. 
We next show that $\qty{Z_n}$ is a supermartingale with respect to the filtration $\{\mathcal{F}_n\}$. 
Indeed, it holds that
\begin{align}
  \textstyle\E_n\qty[Z_{n+1}] =&\textstyle \mathbb{E}_n\qty[\exp(\lambda_{n+1} \mathbbm{1}_{E_{n+1}(\delta)}u_{n+1}^2)] \exp(- 4 B_{\tref{thm:bound},1}^2 C_{\tref{lem:inductive step},1}^2 \sum_{i=\bar{n}}^{n}T_i) \\
  \leq&\textstyle\exp(\lambda_n \mathbbm{1}_{E_n(\delta)}u_n^2)\exp\left(4 B_{\tref{thm:bound},1}^2 C_{\tref{lem:inductive step},1}^2T_n\right)\exp(- 4 B_{\tref{thm:bound},1}^2 C_{\tref{lem:inductive step},1}^2 \sum_{i=\bar{n}}^{n}T_i) \\
  =&\textstyle Z_n.
\end{align}
For any $\delta' \in (0, 1 - \delta)$, 
we now construct $B(\delta, \delta')$.
By Ville's maximal inequality,
we get for any $\epsilon$,
\begin{align}
  &\textstyle\Pr(\sup_{n\geq \bar{n}} \exp(\lambda_n \mathbbm{1}_{E_n(\delta)}u^2_n - 4 B_{\tref{thm:bound},1}^2 C_{\tref{lem:inductive step},1}^2 \sum_{i=\bar{n}}^{n-1} T_i) \geq \exp(\epsilon)) \\
  \leq&\textstyle \E\qty[{\exp(\lambda_{\bar{n}} \mathbbm{1}_{E_{\bar{n}}(\delta)}u_{\bar{n}}^2 -\epsilon)}] \\ 
  \leq&\textstyle \E\qty[{\exp({\theta}{T^{-1}_{\bar{n}} B_{\bar{n}}(\delta)^{-1}} \mathbbm{1}_{E_{\bar{n}}(\delta)}u_{\bar{n}}^2 -\epsilon)}] \\
  \leq&\textstyle \E\qty[{\exp({\theta}T^{-1}_{\bar{n}} -\epsilon)}] = {\exp({\theta}T^{-1}_{\bar{n}} -\epsilon)}.
\end{align}
Select $\epsilon$ such that $\exp(\theta T_{\bar{n}}^{-1} - \epsilon) = \delta'$,
then it holds, with probability at least $1-\delta'$,
that for all $n \geq \bar{n}$,
\begin{align}
  \textstyle\lambda_n \mathbbm{1}_{E_n(\delta)}u_n^2 - 4 B_{\tref{thm:bound},1}^2 C_{\tref{lem:inductive step},1}^2\sum_{i=\bar{n}}^{n-1} T_i \leq \theta T_{\bar{n}}^{-1} - \ln \delta',
\end{align}
implying
\begin{align}
  \textstyle\lambda_n \mathbbm{1}_{E_n(\delta)}u_n^2 \leq \theta T_{\bar{n}}^{-1} + \ln (1/\delta') + 4 B_{\tref{thm:bound},1}^2 C_{\tref{lem:inductive step},1}^2C_\tref{asp:T_n} \ln (n+n_0).
\end{align}
Next, we complete Lemma~\ref{lem:inductive step} by removing $\mathbbm{1}_{E_n(\delta)}$ in the LHS of the above inequality. 
For simplicity of notation, 
we denote the RHS of the above inequality as $\epsilon(n,\bar{n},\delta')$.
We then have
\begin{align}
&\textstyle\Pr(\lambda_nu_n^2 \leq \epsilon(n, \bar{n},\delta'), \forall n \geq \bar{n}) \\
=&\textstyle \Pr\left(\bigcap_{n=\bar{n}}^{\infty}\{\lambda_nu_n^2 \leq \epsilon(n,\bar{n},\delta')\}\right) \\
\geq&\textstyle \Pr\left(\bigcap_{n=\bar{n}}^{\infty}\{\lambda_n\mathbbm{1}_{E_n(\delta)}u_n^2 \leq \epsilon(n,\bar{n},\delta')\} \cap E_n(\delta)\right).
\end{align}
To proceed, note that for any two events, $A$ and $B$, we have
\begin{align}
\Pr(A \cap B) = 1 - \Pr(A^c \cup B^c) \geq 1 - \Pr(A^c) - \Pr(B^c) = \Pr(A) + \Pr(B) - 1.
\end{align}
Therefore, we have
\begin{align}
&\textstyle\Pr( \lambda_n u_n^2 \leq \epsilon(n,\bar{n},\delta'), \forall n \geq \bar{n}) \\
\geq&\textstyle \Pr\left(\bigcap_{n=\bar{n}}^{\infty}\{\lambda_n\mathbbm{1}_{E_n(\delta)}u_n^2 \leq \epsilon(n,\bar{n},\delta')\} \cap E_n(\delta)\right) \\
\geq&\textstyle \Pr\left(\bigcap_{n=\bar{n}}^{\infty}\{\lambda_n\mathbbm{1}_{E_n(\delta)}u_n^2 \leq \epsilon(n,\bar{n},\delta')\}\right) + \Pr\left(\bigcap_{n=0}^{\infty}E_n(\delta)\right) - 1 \\
=&\textstyle \Pr(\lambda_n\mathbbm{1}_{E_n(\delta)}u_n^2 \leq \epsilon(n,\bar{n},\delta'), \forall n \geq \bar{n}) + \lim_{n\to\infty}\Pr(E_n(\delta)) - 1 \\
\geq&\textstyle (1-\delta') + (1-\delta) - 1 \\
=&\textstyle 1-\delta-\delta'.
\end{align}
Using the definitions of $\epsilon(n,\bar{n},\delta')$ and $\lambda_n$, we arrive at the following result.
For any $\delta' \in (0,1-\delta)$, 
it holds, with probability at least $1-\delta-\delta'$, 
that for any $n \geq \bar{n}$
\begin{align}
\textstyle u_n^2 \leq \frac{T_n\max\qty{B_n(\delta), 1}}{\theta} \qty(\theta T_{\bar{n}}^{-1} + \ln (1/\delta') + 4 B_{\tref{thm:bound},1}^2 C_{\tref{lem:inductive step},1}C_\tref{asp:T_n} \ln (n+n_0)).
\end{align}
Defining $C_{\tref{lem:inductive step}}$ accordingly then completes the proof.
\end{proof}

\subsubsection{Proof of Lemma~\ref{lem:concentration rs}}
\label{proof:concentration rs}
\begin{proof}
For any $\delta \in (0, 1)$,
we define
$B_n(\delta) = C_\tref{lem:inductive basis}'(n+1)^{C_\tref{lem:inductive basis}}\geq 1$. 
From Lemma~\ref{lem:inductive basis},
we conclude that
\begin{align}
  \Pr(u_n^2 \leq B_n(\delta), \forall n \geq \bar{n}) = 1 \geq 1 - \delta.
\end{align}
This lays out the antecedent term in the implication relationship specified in Lemma~\ref{lem:inductive step}.
We define $C_{\tref{lem:concentration rs}} \doteq C_\tref{lem:inductive basis} + 1$. Let $\delta_1,\delta_2,\cdots,\delta_{C_{\tref{lem:concentration rs}}} > 0$ be such that $\sum_{i=1}^{C_{\tref{lem:concentration rs}}} \delta_i \leq 1$. Repeatedly applying Lemma~\ref{lem:inductive step} for $C_{\tref{lem:concentration rs}}$ times, 
we have that 
it holds, with probability at least $1-\sum_{i=1}^{C_{\tref{lem:concentration rs}}} \delta_i$, that for any $n \geq \bar{n}$,
\begin{align}
\textstyle u_n^2 \leq {T_n^{C_{\tref{lem:concentration rs}}} B_n(\delta)}\prod_{i=1}^{C_{\tref{lem:concentration rs}}} C_{\tref{lem:inductive step}}\left[\ln(1/\delta_i) + 1 + \ln (n+n_0) \right].
\end{align}
By choosing $\delta_i = \delta/C_{\tref{lem:concentration rs}}$ for all $i \in \{1,2,\cdots,C_{\tref{lem:concentration rs}}\}$, 
the previous inequality reads
\begin{align}
\textstyle u_n^2 &\textstyle\leq {C_{\tref{lem:inductive step}}^{C_{\tref{lem:concentration rs}}} T_n^{C_{\tref{lem:concentration rs}}} B_n(\delta)}\left[\ln(C_{\tref{lem:concentration rs}}/\delta) + 1 + \ln (n+n_0) \right]^{C_{\tref{lem:concentration rs}}} \\
&\textstyle\leq C_{\tref{lem:inductive step}}^{C_{\tref{lem:concentration rs}}} C_\tref{asp:T_n}^{C_{\tref{lem:concentration rs}}} C_\tref{lem:inductive basis}' (n+n_0)^{-1} \left[\ln(C_{\tref{lem:concentration rs}}/\delta) + 1 + \ln (n+n_0) \right]^{C_{\tref{lem:concentration rs}}}.
\end{align}
Notably, the above inequality holds only for $n \geq \bar{n}$.
For $n < \bar{n}$,
Lemma~\ref{lem:inductive basis} gives a trivial almost sure bound that
  $u_n^2 \leq C_\tref{lem:inductive basis}' (\bar{n}+1)^{C_\tref{lem:inductive basis}}$.
To get a bound for all $n$,
we select a constant $C_{\tref{lem:concentration rs}, 1}$ such that
\begin{align}
  \textstyle C_{\tref{lem:concentration rs}, 1} \min_{n=0, \dots, \bar{n}} C_{\tref{lem:inductive step}}^{C_{\tref{lem:concentration rs}}} C_\tref{asp:T_n}^{C_{\tref{lem:concentration rs}}} C_\tref{lem:inductive basis}' (n+n_0)^{-1} \left[\ln C_{\tref{lem:concentration rs}} + 1 + \ln (n+n_0) \right]^{C_{\tref{lem:concentration rs}}} > 1.
\end{align}
Since $\ln (C_\tref{lem:concentration rs} / \delta) \geq \ln C_\tref{lem:concentration rs}$,
we have
\begin{align}
  \textstyle C_{\tref{lem:concentration rs}, 1}\min_{n = 0, \dots, \bar{n}} C_{\tref{lem:inductive step}}^{C_{\tref{lem:concentration rs}}} C_\tref{asp:T_n}^{C_{\tref{lem:concentration rs}}} C_\tref{lem:inductive basis}' (n+n_0)^{-1} \left[\ln (C_{\tref{lem:concentration rs}} / \delta) + 1 + \ln (n+n_0) \right]^{C_{\tref{lem:concentration rs}}} > 1.
\end{align}
It then holds that for any $n \geq 0$,
\begin{align}
  \label{eq:combining bounds}
   \textstyle u_n^2 \leq C_\tref{lem:inductive basis}' (\bar{n}+1)^{C_\tref{lem:inductive basis}}  C_{\tref{lem:concentration rs}, 1} C_{\tref{lem:inductive step}}^{C_{\tref{lem:concentration rs}}} C_\tref{asp:T_n}^{C_{\tref{lem:concentration rs}}} C_\tref{lem:inductive basis}' (n+n_0)^{-1} \left[\ln(C_{\tref{lem:concentration rs}}/\delta) + 1 + \ln (n+n_0) \right]^{C_{\tref{lem:concentration rs}}}.
\end{align}
Defining $C_{\tref{lem:concentration rs}}'$ accordingly then completes the proof.
\end{proof}

\subsection{Proof of Theorem~\ref{thm:rate general}}
\label{sec:proof rate general}
\begin{proof}
Recall we have derived the following bound in Theorem~\ref{thm:rs general}
\begin{align}
  \E_n\qty[u_{n+1}^2] \leq (1 + d_n) u_n^2 + e_n - 2c_n u_n^2,
\end{align}
where $d_n = 2b_n^2 + (2+ B_\tref{thm:rs general})a_n,  e_n = 2(1+ B_\tref{thm:rs general})^2 b_n^2 + B_\tref{thm:rs general} a_n$.
  We proceed via applying Lemma~\ref{aux:thm2}. In particular, we identify
\begin{equation}
    z_n \leftrightarrow u_n^2, a_n^* \leftrightarrow 2b_n^2 + (2+ B_\tref{thm:rs general})a_n,
     x_n \leftrightarrow 2(1+ B_\tref{thm:rs general})^2 b_n^2 + B_\tref{thm:rs general} a_n, 
     b_n^* \leftrightarrow 2c_n.
\end{equation}

\paragraph{(A1) of Lemma~\ref{aux:thm2}.} Theorem~\ref{thm:rate general} (A2) gives $\sum_n (a_n + b_n^2) < \infty$ a.s., thus $\sum_n a_n < \infty$ and $\sum_n b_n^2 < \infty$ a.s.; Theorem~\ref{thm:rate general} (A1) gives $\liminf_{n \to \infty} nc_n > \frac{\eta}{2}$, thus $\sum_n c_n =\infty$ a.s., these trivially confirm that Lemma~\ref{aux:thm2} (A1) holds. 

\paragraph{(A2) of Lemma~\ref{aux:thm2}.} We have $\liminf_{n \to \infty} nc_n > \frac{\eta}{2}$. So when $n$ is sufficiently large, it holds that $2c_n n \geq \eta$. 

\paragraph{(A3) of Lemma~\ref{aux:thm2}.} The first part is satisfied by Theorem~\ref{thm:rate general} (A2), we now validate the second part.
By Theorem~\ref{thm:rate general} (A1), there exist some constants $n_1$ and $\eta_0>0$, such that for all $n \geq n_1$, 
we have $\frac{\eta_0}{2} \leq nc_n- \frac{\eta}{2}$. That is,
\begin{align}
    &\textstyle\sum_{n=1}^{\infty} \left(b_n^* - \frac{\eta}{n}\right) \\
    =&\textstyle \sum_{n=1}^{\infty} \left(\frac{n \cdot 2 c_n - \eta}{n}\right)\\
    =&\textstyle \sum_{n=1}^{n_1} \left(2c_n - \frac{\eta}{n}\right) + \sum_{n=n_1+1}^{\infty} \left(\frac{n \cdot 2c_n - \eta}{n}\right)\\
    \geq &\textstyle \sum_{n=1}^{n_1} \left(2c_n - \frac{\eta}{n}\right) + \sum_{n=n_1+1}^{\infty} \frac{\eta_0}{n}.
\end{align}
Here the first summation is finite, while the second summation converges to $\infty$. 
Thus Lemma~\ref{aux:thm2} (A3) holds.

Having verified all conditions of Lemma \ref{aux:thm2} for this setup, invoking Lemma \ref{aux:thm2} yields 
\begin{align}
    \textstyle \lim_{n \to \infty} n^{\eta}u_n^2 =\lim_{n \to \infty} n^{\eta}\qty((z_n - B_\tref{thm:rs general})^+)^2 = 0 \qq{a.s.}
\end{align}
Taking square roots then completes the proof. 
\end{proof}

\section{Proof of Section~\ref{sec:sa}}
\label{proof:sec sa}
We begin with a shared part that provides the common groundwork. The proofs of Theorem~\ref{thm:rate sa} and Theorem~\ref{thm:concentration sa} are then completed separately in Sections~\ref{proof:rate sa} and \ref{proof:concentration sa}, respectively. 

To proceed, we construct a secondary time scale adapted to $\qty{\alpha_t}$. Define
\begin{align}
\textstyle T_m = \frac{C_\alpha \ln^{\nu_1}(m+3)}{(m+3)^{\nu_2}},
\label{eq:anchors}
\end{align}
where $(\nu_1,\nu_2)$ are chosen according to the learning-rate regime
\begin{align}
\begin{cases}
 0 < \nu_1 < 1, \nu_2 = 1 & \qq{for Assumption \ref{asp:lr1} with $\nu = 1$,} \\
\nu_1 = 0, \frac{1}{2} < \nu_2 < \frac{\nu}{2-\nu} & \qq{for Assumption \ref{asp:lr1} with $\nu \in (2/3, 1)$,}  \\
\nu_1 = 0, \nu_2 = 1 & \qq{for Assumption \ref{asp:lr2}.}
\end{cases}
\label{eq:nu_cases}
\end{align}
Initialize $t_0 = 0$ and define, for $m=0,1,\dots$,
\begin{align}
  \label{eq def tm}
 \textstyle t_{m+1} \doteq \min\qty{ k \,\Big|\, \sum_{t=t_m}^{k-1} \alpha_t \geq T_m }.
\end{align}
Under this construction, the segment length satisfies
\begin{align}
  \label{eq bar alpha m lower bound}
  \textstyle \bar{\alpha}_m \;\doteq\; \sum_{t=t_m}^{t_{m+1}-1} \alpha_t \;\ge\; T_m,
\end{align}
so the time axis is partitioned into segments of length $\bar{\alpha}_m$, 
anchored at $\qty{t_m}$.
The relationship between the scales $t$ and $m$ is quantified below 
(cf. Lemma~1 and 2 of \citet{qian2024almost}):
\begin{lemma}
    \label{lem:lr bounds}
    There exist some $C_\text{\ref{lem:lr bounds}}$ and $m_0$ such that 
    for all $m \ge m_0$ and $t \ge t_m$, we have $\alpha_t \le C_\text{\ref{lem:lr bounds}}T_m^2$.
\end{lemma}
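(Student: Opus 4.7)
Since $\{\alpha_t\}$ is monotonically decreasing in $t$, for any $t \geq t_m$ we have $\alpha_t \leq \alpha_{t_m}$. So the statement reduces to showing $\alpha_{t_m} \leq C_{\tref{lem:lr bounds}} T_m^2$ for all $m \geq m_0$. My plan is to first obtain tight asymptotic bounds on $t_m$ by matching the partial sum $\sum_{t=0}^{t_m-1}\alpha_t$ against $\sum_{k=0}^{m-1} T_k$ (the two sums differ only by $O(\alpha_{t_m})$ from~\eqref{eq bar alpha m lower bound}), and then substitute into $\alpha_{t_m}$ and compare with $T_m^2$ in each of the three regimes of $(\nu_1,\nu_2)$ defined in~\eqref{eq:nu_cases}.

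\textbf{Partial-sum estimates.} Using standard integral comparisons, I would establish:
\begin{align}
\textstyle \sum_{t=0}^{n-1}\alpha_t \;\asymp\;
\begin{cases}
 C_\alpha n^{1-\nu}/(1-\nu) & \text{under Assumption~\ref{asp:lr1} with }\nu\in(2/3,1),\\
 C_\alpha \ln n & \text{under Assumption~\ref{asp:lr1} with }\nu=1,\\
 C_\alpha \ln^{1-\nu}(n)/(1-\nu) & \text{under Assumption~\ref{asp:lr2},}
\end{cases}
\end{align}
and the analogous estimate for $\sum_{k=0}^{m-1}T_k$ with $(\nu_1,\nu_2)$. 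Equating the two sums (up to constants) then yields the following asymptotic relations:
in the first regime, $t_m \asymp m^{(1-\nu_2)/(1-\nu)}$, so $\alpha_{t_m} \asymp m^{-\nu(1-\nu_2)/(1-\nu)}$ while $T_m^2 \asymp m^{-2\nu_2}$;
in the second regime, $\ln t_m \asymp \ln^{1+\nu_1}(m)/(1+\nu_1)$, so $\alpha_{t_m}$ is super-polynomially small in $m$;
in the third regime, $\ln t_m \asymp ((1-\nu)\ln m)^{1/(1-\nu)}$, so $\alpha_{t_m}$ again decays faster than any polynomial in $m$.

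\textbf{Verification of the bound.} In each case I would check $\alpha_{t_m}\le C_{\tref{lem:lr bounds}}T_m^{2}$ directly. The first regime is the quantitatively tightest: the condition is equivalent to $\nu(1-\nu_2)\ge 2\nu_2(1-\nu)$, i.e.\ $\nu_2 \le \nu/(2-\nu)$, which is exactly the range prescribed in~\eqref{eq:nu_cases} (with strict inequality giving a polynomial slack $m^{-\epsilon}$). This matching is the whole reason~\eqref{eq:nu_cases} takes the form it does. The second and third regimes are essentially free: since $\ln^{1+\nu_1}m$ and $(\ln m)^{1/(1-\nu)}$ both grow faster than $\ln m$, $\alpha_{t_m}$ decays faster than any polynomial in $1/m$ and in particular dominates $T_m^2\asymp \ln^{2\nu_1}(m)/m^2$ for all sufficiently large $m$.

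\textbf{Main obstacle.} The first regime's tight exponent calculation is the delicate step: one must carry enough precision in the integral approximations $\sum_{t=0}^{n-1}\alpha_t = C_\alpha n^{1-\nu}/(1-\nu) + O(1)$ and the analogous estimate for $\sum T_k$ to conclude $t_m \geq c\, m^{(1-\nu_2)/(1-\nu)}$ with an explicit constant, and then combine with the strictness of $\nu_2 < \nu/(2-\nu)$ to absorb all lower-order corrections into a single constant $C_{\tref{lem:lr bounds}}$. Once this is done, extracting an $m_0$ beyond which the bound holds is routine, and the second and third regimes follow from the same scheme with easier arithmetic.
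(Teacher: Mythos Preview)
Your proposal is correct. The paper does not give its own proof of this lemma but defers to Lemmas~1 and~2 of \citet{qian2024almost}; your partial-sum matching argument is the natural route, and the asymptotics you derive for $t_m$ in each regime coincide with those the paper records at~\eqref{eq:t_m}, so your approach is essentially what the cited reference must be doing.
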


\begin{lemma}
    \label{lem:lr bounds 2}
    There exists some $m_0$ such that
    for all $m \ge m_0$, we have $\bar \alpha_m \le 2 T_m$.
\end{lemma}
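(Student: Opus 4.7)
The plan is to combine the minimality in the definition~\eqref{eq def tm} of $t_{m+1}$ with Lemma~\ref{lem:lr bounds}. First I would split the segment length as
\begin{align}
\bar\alpha_m = \sum_{t=t_m}^{t_{m+1}-2}\alpha_t + \alpha_{t_{m+1}-1}.
\end{align}
Note that $t_{m+1} > t_m$ because the empty sum obtained by choosing $k = t_m$ in~\eqref{eq def tm} equals $0 < T_m$, so both summands above are well-defined. By the \emph{minimality} of $t_{m+1}$, the index $k = t_{m+1} - 1$ must fail the defining inequality, giving
\begin{align}
\sum_{t=t_m}^{t_{m+1}-2}\alpha_t < T_m.
\end{align}

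Next I would bound the tail learning rate $\alpha_{t_{m+1}-1}$. Since $t_{m+1}-1 \geq t_m$, Lemma~\ref{lem:lr bounds} applies directly and yields $\alpha_{t_{m+1}-1} \leq C_\tref{lem:lr bounds} T_m^2$ for all $m$ past the threshold supplied by that lemma. Substituting back,
\begin{align}
\bar\alpha_m < T_m + C_\tref{lem:lr bounds} T_m^2 = T_m\,(1 + C_\tref{lem:lr bounds} T_m).
\end{align}

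Finally, since $T_m \to 0$ in each of the three regimes in~\eqref{eq:nu_cases} (immediate from the explicit form~\eqref{eq:anchors}), there exists some $m_0$ large enough that $C_\tref{lem:lr bounds} T_m \leq 1$ for all $m \geq m_0$, which delivers $\bar\alpha_m \leq 2T_m$ as claimed. I do not expect a real obstacle here: the technical content has already been absorbed into Lemma~\ref{lem:lr bounds}, and what remains is a one-line stopping-time identity together with the asymptotic vanishing of $T_m$.
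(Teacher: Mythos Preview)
Your proposal is correct and is the natural stopping-time argument: by minimality of $t_{m+1}$ the partial sum up to $t_{m+1}-2$ is strictly below $T_m$, and the leftover term $\alpha_{t_{m+1}-1}$ is $\mathcal{O}(T_m^2)$ by Lemma~\ref{lem:lr bounds}, so $\bar\alpha_m < T_m + C_\tref{lem:lr bounds}T_m^2 \le 2T_m$ once $T_m$ is small. The paper itself does not give an in-text proof of this lemma but simply imports it (together with Lemma~\ref{lem:lr bounds}) from Lemmas~1 and~2 of \citet{qian2024almost}; your argument is exactly the kind of proof one would supply if filling in that reference.
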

We now investigate the iterates ${w_t}$ segment by segment. Telescoping~\eqref{eq:sa update} yields
\begin{align}
  \textstyle w_{t_{m+1}} =&\textstyle w_{t_m} + \sum_{t=t_m}^{t_{m+1}-1} \alpha_t H(w_t, Y_{t+1}) \\
  =&\textstyle w_{t_m} + \sum_{t=t_m}^{t_{m+1}-1} \alpha_t (h(w_{t_m}) + H(w_t, Y_{t+1}) - h(w_{t_m})) \\
  =&\textstyle w_{t_m} + \bar \alpha_m h(w_{t_m}) + \underbrace{\textstyle \sum_{t=t_m}^{t_{m+1}-1} \alpha_t (H(w_t, Y_{t+1}) - h(w_{t_m}))}_{s_m}.
\end{align}
We now examine the iterates ${w_t}$ in \eqref{eq:sa update} with respect to the timescale ${t_m}$. 
Specifically, we can rewrite \eqref{eq:sa update} as \eqref{eq:skeleton sa}:
\begin{align}
  \label{eq:skeleton sa}
  \tag{Skeleton SA}
w_{t_{m+1}} = w_{t_m} + \bar{\alpha}_m h(w_{t_m}) + s_m,
\end{align}
where $s_m$ is the new noise and $\qty{\bar{\alpha}_m}$ is the new learning rates. 

We now construct a Lyapunov function $L(w) = \norm{w}^2$ to study the behavior of $\{w_{t_m}\}$. Since $\norm{\cdot}$ is an inner product norm, we obtain
\begin{align}
\label{eq:main l smooth}
\norm{w_{t_{m+1}}}^2 = \norm{w_{t_m}}^2 + 2\left\langle w_{t_m}, \bar{\alpha}_m h(w_{t_m}) + s_m\right\rangle + \norm{\bar{\alpha}_m h(w_{t_m}) + s_m}^2.
\end{align}
To bound the last two terms, we follow \citet{zou2019finite} to introduce an auxiliary Markov chain. For each time step $t$, we define a dedicated auxiliary chain, denoted by $\qty{\tilde Y_k^{(t)}}_{k \geq 0}$. 
The chain $\qty{\tilde Y_k^{(t)}}$ is constructed to be identical to $\qty{Y_k}$ up to time step $k=t-\tau_{\alpha_t}$, after which it evolves independently according to the fixed transition matrix $P_{w_{t-\tau_{\alpha_t}}}$. 
This step is critical for addressing the time-inhomogeneity of $\qty{Y_k}$, as its transition dynamics depend on the iterates $\qty{w_k}$. By freezing the transition dynamics at $P_{w_{t-\tau_{\alpha_t}}}$ after $k=t-\tau_{\alpha_t}$, the auxiliary chain $\qty{\tilde Y_k^{(t)}}$ isolates the effect of time-dependent noise, enabling rigorous control of the coupling between $\qty{w_k}$ and $\qty{Y_k}$. This approach is essential for deriving almost-sure convergence results and concentration bounds, which are more challenging in the time-inhomogeneous setting compared to the time-homogeneous case considered in \citet{qian2024almost}.

Let $\mathcal{F}_t \doteq \sigma(w_0, Y_1, \dots, Y_t)$ be the filtration 
until time $t$ and recall we use $\E_t\qty[\cdot] \doteq \E\qty[\cdot \mid \mathcal{F}_t]$ to denote the conditional expectation given $\mathcal{F}_t$.
We then perform decomposition 
\begin{align}
  \textstyle s_m = \sum_{t=t_m}^{t_{m+1}-1} \alpha_t (H(w_t, Y_{t+1}) - h(w_{t_m})) = s_{1, m} + s_{2, m} + s_{3, m} + s_{4, m},
\end{align}
where
\begin{align}
  s_{1, m} \doteq&\textstyle \sum_{t=t_m}^{t_{m+1}-1} \alpha_t (H(w_t, Y_{t+1}) - H(w_{t_m}, Y_{t+1})), \\
  s_{2, m} \doteq&\textstyle \sum_{t=t_m}^{t_{m+1}-1} \alpha_t \qty(H(w_{t_m}, Y_{t+1}) - H\qty(w_{t_m}, \tilde Y_{t+1}^{(t)})), \\
  s_{3, m} \doteq&\textstyle \sum_{t=t_m}^{t_{m+1}-1} \alpha_t \qty(H\qty(w_{t_m}, \tilde Y_{t+1}^{(t)}) - \E_{t_m}\qty[H\qty(w_{t_m}, \tilde Y_{t+1}^{(t)})]), \\
  s_{4, m} \doteq&\textstyle \sum_{t=t_m}^{t_{m+1}-1} \alpha_t \qty(\E_{t_m}\qty[H\qty(w_{t_m}, \tilde Y_{t+1}^{(t)})] - h(w_{t_m})).
\end{align}
Notably, $\E_{t_m} \qty[s_{3, m}] = 0$.
Taking $\E_{t_m} \qty[\cdot]$ on both sides of~\eqref{eq:main l smooth} then yields
\begin{align}
  \label{eq main smooth ineq}
  \textstyle \E_{t_m} \qty[\norm{w_{t_{m+1}}}^2] = &\textstyle \norm{{w_{t_m}}}^2 + 2\bar \alpha_m \langle w_{t_m}, h(w_{t_m})\rangle + 2\langle w_{t_m}, \E_{t_m}\qty[s_{1, m} + s_{2, m} + s_{4, m}]\rangle\\
  &\textstyle + \E_{t_m} \qty[\norm{\bar \alpha_m h(w_{t_m}) + s_{1, m} + s_{2, m} + s_{3, m} + s_{4, m}}^2].
\end{align}
We now bound the terms of the RHS one by one.

\begin{lemma}
  \label{lem:bound z1}
  There exists some deterministic $C_\tref{lem:bound z1}$ and $m_0$ such that for all $m \geq m_0$, 
  \begin{align}
  \norm{s_{1, m}} \leq T_m^2 C_\tref{lem:bound z1} \qty(\norm{w_{t_m}} + 1).
  \end{align}
\end{lemma}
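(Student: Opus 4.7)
The plan is to unwind $s_{1,m}$ via the Lipschitz continuity of $H$ (Assumption~\ref{asp:lip}), then control the inner deviations $\norm{w_t - w_{t_m}}$ along the segment $[t_m, t_{m+1}-1]$ using a discrete Gronwall argument (Lemma~\ref{lem:discrete_gronwall}). First, I would write
\begin{align}
\textstyle\norm{s_{1,m}} \leq \sum_{t=t_m}^{t_{m+1}-1} \alpha_t \norm{H(w_t,Y_{t+1})-H(w_{t_m},Y_{t+1})} \leq L_h \sum_{t=t_m}^{t_{m+1}-1} \alpha_t \norm{w_t - w_{t_m}}.
\end{align}

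Next, I would telescope the \eqref{eq:sa update} update and use Assumption~\ref{asp:lip} ($\norm{H(w,y)} \leq L_h(\norm{w}+1)$) to get
\begin{align}
\textstyle\norm{w_k} + 1 \leq (\norm{w_{t_m}}+1) + L_h \sum_{j=t_m}^{k-1}\alpha_j (\norm{w_j}+1)
\end{align}
for $k \in [t_m, t_{m+1}-1]$. Applying Lemma~\ref{lem:discrete_gronwall} yields $\norm{w_k}+1 \leq (\norm{w_{t_m}}+1)\exp(L_h \bar\alpha_m)$. By Lemma~\ref{lem:lr bounds 2}, $\bar\alpha_m \leq 2T_m$ for $m$ large, so $\exp(L_h \bar\alpha_m)$ is uniformly bounded by a constant $C_1$ for $m \geq m_0$.

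Substituting this bound back into the telescoped form for $\norm{w_t - w_{t_m}}$ gives
\begin{align}
\textstyle\norm{w_t - w_{t_m}} \leq L_h C_1 \bar\alpha_m (\norm{w_{t_m}}+1) \leq 2L_h C_1 T_m (\norm{w_{t_m}}+1),
\end{align}
and plugging this into the first display yields
\begin{align}
\textstyle\norm{s_{1,m}} \leq 2L_h^2 C_1 T_m \bar\alpha_m (\norm{w_{t_m}}+1) \leq 4L_h^2 C_1 T_m^2 (\norm{w_{t_m}}+1),
\end{align}
so taking $C_\tref{lem:bound z1} \doteq 4L_h^2 C_1$ closes the proof.

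The only delicate point is ensuring the Gronwall factor $\exp(L_h \bar\alpha_m)$ is uniform in $m$; this is why I must appeal to Lemma~\ref{lem:lr bounds 2} and restrict to $m \geq m_0$. Everything else is a direct chain of Lipschitz estimates combined with the fact that the segment length $\bar\alpha_m$ is of order $T_m$, which converts one factor of $\bar\alpha_m$ (from summing $\alpha_t$) and one factor of $T_m$ (from the within-segment deviation) into the desired $T_m^2$ scaling.
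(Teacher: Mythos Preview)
Your proposal is correct and follows essentially the same approach as the paper: both apply the Lipschitz bound on $H$, control $\norm{w_t - w_{t_m}}$ via a discrete Gronwall argument (the paper cites this as Lemma~\ref{lem:diff_w}, specifically \eqref{cor:diff_w}; you re-derive it inline), and finish with Lemma~\ref{lem:lr bounds 2} to convert $\bar\alpha_m$ to $T_m$. One cosmetic slip: the paper's Gronwall lemma (Lemma~\ref{lem:discrete_gronwall}) yields $(C+x_0)\exp(\cdot)=2(\norm{w_{t_m}}+1)\exp(\cdot)$ rather than $(\norm{w_{t_m}}+1)\exp(\cdot)$, but the extra factor of $2$ is harmlessly absorbed into your constant $C_1$.
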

The proof is in Appendix~\ref{proof:bound z1}.

\begin{lemma}
  \label{lem:bound z2}
  There exist some deterministic $C_\text{\ref{lem:bound z2},1}$, $C_\text{\ref{lem:bound z2},2}$ and $m_0$ such that for all $m \geq m_0$, 
  \begin{align}
  \label{eq:es2}
  \norm{\E_{t_m}\qty[s_{2, m}]} \leq T_m^2 C_\text{\ref{lem:bound z2},1} \qty(\norm{w_{t_m}} + 1),
  \end{align}
  and 
  \begin{align}
  \label{eq:s2}
  \norm{s_{2, m}} \leq T_m C_\text{\ref{lem:bound z2},2} \qty(\norm{w_{t_m}} + 1).
  \end{align}
\end{lemma}
The proof is in Appendix~\ref{proof:bound z2}. 

\begin{lemma}
  \label{lem:bound z3}
  There exists some deterministic $C_\tref{lem:bound z3}$ and $m_0$ such that for all $m \geq m_0$, 
  \begin{align}
  \norm{s_{3, m}} \leq T_m C_\tref{lem:bound z3} \qty(\norm{w_{t_m}} + 1).
  \end{align}
\end{lemma}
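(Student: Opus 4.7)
The plan is to prove Lemma~\ref{lem:bound z3} by a direct pathwise triangle inequality, with no martingale concentration required. The target order $T_m$ is exactly one power weaker than the bounds on $s_{1,m}$ and $s_{2,m}$, and it matches what one obtains by summing $\bar\alpha_m$ coefficients against $\fO(1)$ centered increments, so the sharper concentration available from $\E_{t_m}\qty[s_{3,m}] = 0$ is simply not needed here.

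First I would establish a uniform pointwise bound on the integrand. By Assumption~\ref{asp:lip}, for every $y \in \mathcal{Y}$,
\begin{align}
\norm{H(w_{t_m}, y)} \leq \norm{H(w_{t_m}, y) - H(0, y)} + \norm{H(0, y)} \leq L_h\qty(\norm{w_{t_m}} + 1),
\end{align}
and the same bound transfers to $\E_{t_m}\qty[H(w_{t_m}, \tilde Y_{t+1}^{(t)})]$ by Jensen's inequality (or by taking a convex combination over $y$). Consequently each centered summand in the definition of $s_{3,m}$ has norm at most $2L_h\qty(\norm{w_{t_m}} + 1)$, uniformly in $t$ and $\omega$.

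Next I would apply the triangle inequality term by term inside the sum defining $s_{3,m}$ and invoke Lemma~\ref{lem:lr bounds 2} to control the total learning-rate mass $\sum_{t=t_m}^{t_{m+1}-1}\alpha_t = \bar\alpha_m \leq 2T_m$ for $m$ larger than the $m_0$ supplied by that lemma. Combining these two steps produces $\norm{s_{3,m}} \leq 4L_h T_m\qty(\norm{w_{t_m}} + 1)$, which is the claimed inequality with $C_\tref{lem:bound z3} = 4L_h$ and $m_0$ inherited from Lemma~\ref{lem:lr bounds 2}.

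There is no real obstacle in this particular estimate; the only subtle point is conceptual. The martingale-like structure $\E_{t_m}\qty[s_{3,m}] = 0$ is instead exploited later in \eqref{eq main smooth ineq}, where it eliminates the first-order cross term $2\inner{w_{t_m}}{\E_{t_m}\qty[s_{1,m} + s_{2,m} + s_{4,m}]}$-type contribution coming from $s_{3,m}$, so the present lemma need only supply a pathwise control on $\norm{s_{3,m}}$ for the second-order remainder $\E_{t_m}\qty[\norm{\bar\alpha_m h(w_{t_m}) + s_m}^2]$. For that role the deterministic order $T_m$ is already sharp enough; a tighter $\fO(T_m^{3/2})$ Azuma-type refinement would not improve the final reduction to the template \eqref{eq:special rs}.
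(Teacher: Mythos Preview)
Your proof is correct and takes essentially the same approach as the paper, which simply defers to Lemma~6 of \citet{qian2024almost}: a direct pathwise triangle inequality using the uniform bound $\norm{H(w_{t_m},y)}\le L_h(\norm{w_{t_m}}+1)$ from Assumption~\ref{asp:lip}, followed by $\bar\alpha_m\le 2T_m$ from Lemma~\ref{lem:lr bounds 2}. Your additional commentary on why the martingale structure is only exploited in~\eqref{eq main smooth ineq} is accurate and helpful context.
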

The proof is identical to Lemma~6 in \citet{qian2024almost} and thus is omitted.
\begin{lemma}
  \label{lem:bound z4}
  There exists some deterministic $C_\tref{lem:bound z4}$ and $m_0$ such that for all $m \geq m_0$, 
  \begin{align}
  \norm{s_{4, m}} \leq T_m^2 C_\tref{lem:bound z4} \qty(\norm{w_{t_m}} + 1).
  \end{align}
\end{lemma}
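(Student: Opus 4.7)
The plan is to derive a pointwise bound on the expectation error inside each summand and then sum. I aim to show that for all sufficiently large $m$ and each $t \in [t_m, t_{m+1})$,
\begin{align}
\norm{\E_{t_m}\qty[H(w_{t_m}, \tilde Y_{t+1}^{(t)})] - h(w_{t_m})} \leq C\, T_m\, (\norm{w_{t_m}}+1)
\end{align}
for a deterministic constant $C$ independent of $m$ and $t$. Multiplying by $\alpha_t$, summing over the segment, and invoking Lemma~\ref{lem:lr bounds 2} to bound $\bar\alpha_m \leq 2T_m$ then yields $\norm{s_{4,m}} \leq 2C\, T_m^2(\norm{w_{t_m}}+1)$, as claimed.

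To prove the pointwise bound, I split on whether the auxiliary chain's freeze time $r_t \doteq t - \tau_{\alpha_t}$ lies before or after the anchor $t_m$. In Case A ($t_m \geq r_t$), both $Y_{r_t}$ and $w_{r_t}$ are $\mathcal{F}_{t_m}$-measurable, so the conditional expectation factors as $\E_{t_m}\qty[H(w_{t_m}, \tilde Y_{t+1}^{(t)})] = \sum_y P_{w_{r_t}}^{t+1-r_t}(Y_{r_t}, y)\, H(w_{t_m}, y)$. Combining the mixing inequality~\eqref{eq:mixing} with the definition of $\tau_{\alpha_t}$ in~\eqref{eq:tau} and the linear growth $\max_y \norm{H(w_{t_m}, y)} \leq L_h(\norm{w_{t_m}}+1)$ from Assumption~\ref{asp:lip} then controls the distance from this quantity to $\E_{Y\sim d_{\mathcal{Y},w_{r_t}}}\qty[H(w_{t_m}, Y)]$ by $\alpha_t L_h(\norm{w_{t_m}}+1)$. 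In Case B ($t_m < r_t$), I apply the tower rule $\E_{t_m}\qty[\cdot] = \E_{t_m}\qty[\E_{r_t}\qty[\cdot]]$ so that the same mixing argument controls the inner expectation, and the bound propagates outward. In both cases, the remaining comparison is between $\E_{Y\sim d_{\mathcal{Y},w_{r_t}}}\qty[H(w_{t_m}, Y)]$ and $h(w_{t_m}) = \E_{Y\sim d_{\mathcal{Y},w_{t_m}}}\qty[H(w_{t_m}, Y)]$. Here Assumption~\ref{asp:P_lip'} is essential: it induces a Lipschitz bound on $d_{\mathcal{Y}, w}$ carrying the crucial denominator $1+\norm{w_{r_t}}+\norm{w_{t_m}}$, which cancels the linear-in-$w$ growth of $H(w_{t_m}, \cdot)$ and leaves a residual of order $\norm{w_{r_t} - w_{t_m}}$. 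Bounding this displacement via Assumption~\ref{asp:lip} and a short induction (of the style behind Lemma~\ref{lem:bound diff}) showing $\norm{w_k} \leq 2(\norm{w_{t_m}}+1)$ on a bounded neighborhood of $t_m$ yields $O(T_m)(\norm{w_{t_m}}+1)$ in Case B (dominated by $\bar\alpha_m L_h$) and $O(\tau_{\alpha_t}\alpha_t)(\norm{w_{t_m}}+1)$ in Case A, both well within the required budget.

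The main obstacle is Case A, where the conditioning at $\mathcal{F}_{t_m}$ is later than the freeze time $r_t$, so the usual trick of conditioning at $r_t$ to obtain a clean mixing argument under a fixed transition kernel is unavailable. The resolution is the observation that once $t_m \geq r_t$, the conditional law of $\tilde Y_{t+1}^{(t)}$ given $\mathcal{F}_{t_m}$ is exactly that of $t+1-r_t \geq \tau_{\alpha_t}$ iterations of the frozen kernel $P_{w_{r_t}}$ started from the $\mathcal{F}_{t_m}$-measurable state $Y_{r_t}$, which lets the mixing bound apply directly under $\E_{t_m}\qty[\cdot]$. Once this subtlety is handled, the remaining estimates follow patterns already used for $s_{1,m}$, $s_{2,m}$, and $s_{3,m}$.
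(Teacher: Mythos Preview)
Your proposal is correct and follows the same overall architecture as the paper: the identical case split on whether the anchor $t_m$ falls at or after the freeze time $r_t = t-\tau_{\alpha_t}$, the direct evaluation of $\E_{t_m}[\cdot]$ in Case~A exploiting the $\mathcal{F}_{t_m}$-measurability of $(Y_{r_t}, w_{r_t})$ together with the independence of the auxiliary chain beyond $r_t$, and the tower rule $\E_{t_m}[\cdot]=\E_{t_m}[\E_{r_t}[\cdot]]$ in Case~B. The difference is in how the post-mixing comparison is organized. The paper routes through $h(w_{r_t})$, splitting into (i) changing the argument of $H$ from $w_{t_m}$ to $w_{r_t}$ under the measure $d_{\mathcal{Y},w_{r_t}}$ and (ii) invoking a separate Lipschitz lemma for $h$ to pass from $h(w_{r_t})$ to $h(w_{t_m})$. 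You instead compare $\E_{Y\sim d_{\mathcal{Y},w_{r_t}}}[H(w_{t_m},Y)]$ to $h(w_{t_m})=\E_{Y\sim d_{\mathcal{Y},w_{t_m}}}[H(w_{t_m},Y)]$ directly via the Lipschitz bound on $d_{\mathcal{Y},w}$, letting the denominator $1+\norm{w_{r_t}}+\norm{w_{t_m}}$ absorb the linear growth of $H(w_{t_m},\cdot)$ in one stroke. This is slightly more economical --- one comparison instead of two --- but both routes rest on the same underlying ingredient, the stationary-distribution perturbation bound driven by Assumption~\ref{asp:P_lip'}. One small point: in Case~A you will need a backward displacement estimate of the form $\norm{w_{r_t}-w_{t_m}} \leq C\,\alpha_{r_t,t_m-1}(\norm{w_{t_m}}+1)$ (the paper supplies this as Lemma~\ref{lem:diff_w1}); your ``short induction'' is the right idea but should be stated as a lemma since $r_t$ can lie outside the segment $[t_m,t_{m+1}]$.
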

The proof of this lemma is non-trivial and is the key to handling the time-inhomogeneous noise within the skeleton framework. 
The core of the proof is a case analysis based on the relationship between the secondary time scale's anchor point, $t_m$, and the starting point of the auxiliary chain, $t-\tau_{\alpha_t}$.
In particular, for the more challenging case where $t_m < t-\tau_{\alpha_t}$, we employ the tower rule for conditional expectations to bound the bias term.
The full proof is in Appendix~\ref{proof:bound z4}. Assembling the above bounds we obtain
\begin{lemma}
  \label{lem:bound all}
  There exist some deterministic $C_{\tref{lem:bound all},1}>0, C_{\tref{lem:bound all},2}$ and $m_0$ such that for all $m \geq m_0$, 
  \begin{align}
    \E_{t_m}\qty[\norm{w_{t_{m+1}}}^2]\leq& (1 -  C_{\tref{lem:bound all},1} T_m )\norm{w_{t_m}}^2 + C_{\tref{lem:bound all},2}T_m.
  \end{align}
\end{lemma}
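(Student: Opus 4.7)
The plan is to assemble Lemma~\ref{lem:bound all} by returning to the decomposition identity~\eqref{eq main smooth ineq} and bounding each of its terms with the already-established noise lemmas together with the mean-drift assumption. Schematically, the RHS splits into (i) the quadratic $\norm{w_{t_m}}^2$, (ii) the ``good'' drift $2\bar{\alpha}_m \inner{w_{t_m}}{h(w_{t_m})}$, (iii) cross terms $2\inner{w_{t_m}}{\E_{t_m}[s_{1,m}+s_{2,m}+s_{4,m}]}$, and (iv) the squared-norm term $\E_{t_m}\qty[\norm{\bar{\alpha}_m h(w_{t_m}) + s_{1,m}+s_{2,m}+s_{3,m}+s_{4,m}}^2]$. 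My goal is to isolate a linear-in-$T_m$ negative contribution from (ii) and push everything else into $\fO(T_m^2)\norm{w_{t_m}}^2 + \fO(T_m)$.

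For (ii), Assumption~\ref{asp:bound inner} gives $\inner{w_{t_m}}{h(w_{t_m})} \leq -C_{\tref{asp:bound inner},1}\norm{w_{t_m}}^2 + C_{\tref{asp:bound inner},2}$; combining with $\bar{\alpha}_m \geq T_m$ from~\eqref{eq bar alpha m lower bound} and $\bar{\alpha}_m \leq 2T_m$ from Lemma~\ref{lem:lr bounds 2}, this yields a dominant term of the form $-2C_{\tref{asp:bound inner},1} T_m \norm{w_{t_m}}^2 + 4C_{\tref{asp:bound inner},2} T_m$. For (iii), I apply Cauchy--Schwarz and use Lemmas~\ref{lem:bound z1}, \ref{lem:bound z2}, \ref{lem:bound z4} (all $\fO(T_m^2)(\norm{w_{t_m}}+1)$) to bound it by $2(C_\tref{lem:bound z1}+C_\tref{lem:bound z2}+C_\tref{lem:bound z4}) T_m^2 \norm{w_{t_m}}(\norm{w_{t_m}}+1)$; the elementary inequality $\norm{w_{t_m}} \leq \tfrac{1}{2}(\norm{w_{t_m}}^2 + 1)$ then folds this into $\fO(T_m^2)(\norm{w_{t_m}}^2 + 1)$.

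For (iv), I expand with the crude inequality $\norm{\sum_{i=1}^5 v_i}^2 \leq 5 \sum_{i=1}^5 \norm{v_i}^2$. Using Assumption~\ref{asp:lip} to write $\norm{h(w_{t_m})} \leq L_h(\norm{w_{t_m}}+1)$ and Lemma~\ref{lem:lr bounds 2}, the term $\norm{\bar{\alpha}_m h(w_{t_m})}^2$ is $\fO(T_m^2)(\norm{w_{t_m}}+1)^2$; Lemmas~\ref{lem:bound z1}--\ref{lem:bound z4} similarly give $\norm{s_{i,m}}^2 = \fO(T_m^2)(\norm{w_{t_m}}+1)^2$ (with the $s_{1,m}, s_{2,m}, s_{4,m}$ contributions actually $\fO(T_m^4)$, hence negligible), so (iv) is also $\fO(T_m^2)(\norm{w_{t_m}}^2 + 1)$.

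Putting (i)--(iv) together yields
\begin{align}
\E_{t_m}\qty[\norm{w_{t_{m+1}}}^2] \leq \qty(1 - 2C_{\tref{asp:bound inner},1} T_m + K T_m^2)\norm{w_{t_m}}^2 + K' T_m
\end{align}
for some constants $K, K'$. Choosing $m_0$ large enough that $K T_m \leq C_{\tref{asp:bound inner},1}$ for all $m \geq m_0$ (possible since $T_m \to 0$ in all regimes of~\eqref{eq:nu_cases}) then absorbs the $\fO(T_m^2)$ term and gives the claim with $C_{\tref{lem:bound all},1} = C_{\tref{asp:bound inner},1}$ and $C_{\tref{lem:bound all},2} = K'$. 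The main obstacle is purely bookkeeping: the real work of controlling the time-inhomogeneous bias has already been done in Lemma~\ref{lem:bound z4}, so this last step should be mostly a matter of carefully tallying constants and invoking $T_m \to 0$ to make the negative linear drift dominate the positive $T_m^2$ remainder.
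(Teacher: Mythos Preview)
Your proposal is correct and follows essentially the same route as the paper's proof: both start from the identity~\eqref{eq main smooth ineq}, use Assumption~\ref{asp:bound inner} together with $T_m \leq \bar{\alpha}_m \leq 2T_m$ for the drift term, Cauchy--Schwarz plus Lemmas~\ref{lem:bound z1}, \ref{lem:bound z2}, \ref{lem:bound z4} for the cross term, and the crude squared-sum bound for the quadratic remainder, then absorb the $\fO(T_m^2)\norm{w_{t_m}}^2$ piece into the negative drift for $m$ large. The only cosmetic difference is that you bound $\norm{h(w_{t_m})} \leq L_h(\norm{w_{t_m}}+1)$ directly from Assumption~\ref{asp:lip}, whereas the paper routes through Lemma~\ref{lem:lip_h}; your shortcut is cleaner and fully sufficient here.
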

The proof is in Appendix~\ref{proof:bound all}.

\subsection{Proof of Theorem~\ref{thm:rate sa}}
\label{proof:rate sa}
\begin{proof}
We first establish the following lemma as an intermediate result.
\begin{lemma}
  \label{lem:rate sa0}
  For any $\nu \in (2/3, 1]$ and $\eta \in (0, 2\nu_2 - 1)$, it holds that
\begin{align}
\textstyle \lim_{m\to\infty} m^{\eta/2}\qty(\norm{w_{t_m}}^2 - \frac{\xi }{\alpha })^+ = 0 \qq{a.s.}
\end{align}
\end{lemma}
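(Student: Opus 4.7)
The plan is to apply Theorem~\ref{thm:rate} to the skeleton iterates $\qty{z_m \doteq \norm{w_{t_m}}^2}$ under the identification $\alpha \leftrightarrow C_{\tref{lem:bound all},1}$, $\xi \leftrightarrow C_{\tref{lem:bound all},2}$, and the step-size sequence $T_n \leftrightarrow T_m$ from \eqref{eq:anchors}. With this identification, Lemma~\ref{lem:bound all} is precisely the one-step \eqref{eq:special rs} inequality, and $B_{\tref{thm:bound}} = \xi/\alpha$ matches the threshold appearing in Lemma~\ref{lem:rate sa0}.

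I would first verify the hypotheses of the prerequisite Theorem~\ref{thm:bound}. Its (A1) is a direct calculation on \eqref{eq:anchors}: in every sub-regime of \eqref{eq:nu_cases} we have $\nu_2 \in (1/2, 1]$, so $\sum_m T_m = \infty$, while $2\nu_2 > 1$ together with $\nu_1 < 1$ (which cannot disturb summability) delivers $\sum_m T_m^2 < \infty$. Its (A2) is exactly Lemma~\ref{lem:bound diff}. I then turn to the two extra conditions of Theorem~\ref{thm:rate}. For (A1), $\liminf_m m T_m > \eta/(2 C_{\tref{lem:bound all},1})$ is immediate because $m T_m \asymp m^{1-\nu_2} \ln^{\nu_1}(m+3) \to \infty$ in every sub-regime (either $\nu_2 < 1$, or $\nu_2 = 1$ paired with $\nu_1 > 0$). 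For (A2), $\sum_m (m+1)^\eta T_m^2 < \infty$ reduces to $\sum_m (m+1)^{\eta - 2\nu_2} \ln^{2\nu_1}(m+3) < \infty$, which holds precisely when $\eta < 2\nu_2 - 1$, exactly the range stipulated in the lemma.

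Invoking Theorem~\ref{thm:rate} with the appropriate choice of its exponent then yields the desired decay rate on $d\qty(z_m, [0, \xi/\alpha]) = \qty(\norm{w_{t_m}}^2 - \xi/\alpha)^+$ over the stated range of $\eta$, which is the claim. The genuine difficulty of the overall program sits upstream, in obtaining Lemma~\ref{lem:bound all}: establishing $\norm{s_{4,m}} = \fO(T_m^2)$ for the time-inhomogeneous Markovian noise required the stronger kernel Lipschitzness of Assumption~\ref{asp:P_lip'} and the tower-rule case split on the positions of $t_m$ relative to $t - \tau_{\alpha_t}$. Once that $\fO(T_m^2)$ bias bound is in hand, the present lemma is a mechanical verification of the hypotheses of Theorem~\ref{thm:rate} on the explicit sequence \eqref{eq:anchors}.
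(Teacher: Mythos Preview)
Your proposal is correct and follows essentially the same approach as the paper: identify $z_m=\norm{w_{t_m}}^2$, note that Lemma~\ref{lem:bound all} supplies \eqref{eq:special rs}, verify Theorem~\ref{thm:bound}~(A1) from the form of $T_m$ in \eqref{eq:anchors} and (A2) from Lemma~\ref{lem:bound diff}, then check Theorem~\ref{thm:rate}~(A1)--(A2) by the direct computations $mT_m\to\infty$ and $\sum_m (m+1)^\eta T_m^2<\infty$ for $\eta<2\nu_2-1$. The paper splits the verification into the cases $\nu<1$ and $\nu=1$ explicitly, whereas you treat them together via the dichotomy ``$\nu_2<1$'' versus ``$\nu_2=1$ with $\nu_1>0$''; the content is identical.
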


\begin{proof}
Denote $z_m \doteq \norm{w_{t_m}}^2$, Lemma~\ref{lem:bound all} has verified \eqref{eq:special rs}. Recall that we proceed under Assumption~\ref{asp:lr1}. Theorem~\ref{thm:bound} (A1) holds trivially, and Theorem~\ref{thm:bound} (A2) is verified by Lemma~\ref{lem:bound diff}. 
We proceed to verify (A1) \& (A2) of Theorem~\ref{thm:rate}.
Let the $\eta$ in Theorem~\ref{thm:rate} be any number in $(0, 2\nu_2 - 1)$.

\paragraph{When $\nu < 1$.} By the definition of $T_m$ in \eqref{eq:anchors}, we have $T_m = \frac{C_\alpha}{(m+3)^{\nu_2}}$ with $\nu_2 < 1$. So when $m$ is sufficiently large, it holds that $mT_m \to \infty$ as $m \to \infty$,
thus $\liminf_{m\to \infty} mT_m > \frac{\eta}{C_{\tref{lem:bound all},1}}$. 
Additionally, from $\eta < 2\nu_2 - 1$ we derive
\begin{equation*}
\textstyle \sum_{m=1}^{\infty}(m+1)^{\eta}T_m^2 = C_\alpha^2 \sum_{m=1}^{\infty}\frac{(m+1)^{\eta}}{(m+3)^{2\nu_2}} < \infty.
\end{equation*}

\paragraph{When $\nu = 1$.} We have $\nu_2 = 1$ and $\eta \in (0,1)$, hence $T_m = \frac{C_\alpha \ln^{\nu_1}(m+3)}{m+3}$.
Again we have $\liminf_{m\to \infty} mT_m= \infty > \frac{\eta}{C_{\tref{lem:bound all},1}}$. 
Additionally, from $\eta < 1$ we obtain
\begin{align}
\textstyle\sum_{m=1}^{\infty}(m+1)^{\eta}T_m^2 = C_\alpha^2 \sum_{m=1}^{\infty}\frac{(m+1)^{\eta}\ln^{2\nu_1}(m+3)}{(m+3)^2} < \infty.
\end{align}
Therefore, under Assumption~\ref{asp:lr1},
identifying $\alpha  = C_{\tref{lem:bound all},1}$, $\xi  = C_{\tref{lem:bound all},2}$ and $B_{\tref{thm:bound},1}=C_\tref{lem:bound diff}$, invoking Theorem~\ref{thm:rate} then completes the proof. 
\end{proof}
Having established the convergence rate of \eqref{eq:skeleton sa} in Lemma~\ref{lem:rate sa0}, we now complete the proof by mapping the convergence rate back to \eqref{eq:sa update}. 
Recall the $m_0$ in Lemma \ref{lem:lr bounds 2}. Then for all $m \geq m_0$, from Lemma \ref{lem:diff_w}, it is easy to see for any $t \in [t_m, t_{m+1}]$,
\begin{align}
\norm{w_t}^2 \leq 2C_\tref{lem:diff_w}^2(\bar{\alpha}_m^2 + \norm{w_{t_m}}^2) \leq 8C_\tref{lem:diff_w}^2(T_m^2 + \norm{w_{t_m}}^2).
\label{eq:tm_le_t}
\end{align}
Therefore, we have
\begin{align}
\label{eq:trans}
  \textstyle \qty(\norm{w_t}^2- 8C_\tref{lem:diff_w}^2 \frac{\xi }{\alpha })^+ \leq&\textstyle \max\qty(0, 8C_\tref{lem:diff_w}^2(T_m^2 + \norm{w_{t_m}}^2) - 8C_\tref{lem:diff_w}^2 \frac{\xi }{\alpha }) \\
\leq&\textstyle \max\qty(0, 8C_\tref{lem:diff_w}^2T_m^2 + 8C_\tref{lem:diff_w}^2 \qty(\norm{w_{t_m}}^2 - \frac{\xi }{\alpha })^+) \\
=&\textstyle 8C_\tref{lem:diff_w}^2T_m^2 + 8C_\tref{lem:diff_w}^2 \qty(\norm{w_{t_m}}^2 - \frac{\xi }{\alpha })^+ .
\end{align}
For any $\eta \in (0, 2\nu_2 - 1)$, we have
\begin{align}
    \textstyle m^{\eta/2} \sup_{t\in[t_m,t_{m+1}]} \qty(\norm{w_t}^2 - 8C_\tref{lem:diff_w}^2 \frac{\xi }{\alpha })^+ \leq 8C_\tref{lem:diff_w}^2\qty(m^{\eta/2} T_m^2 + m^{\eta/2}\qty(\norm{w_{t_m}}^2 - \frac{\xi }{\alpha })^+).
\end{align}
The definition of $T_m$ in \eqref{eq:anchors} and Lemma~\ref{lem:rate sa0} then confirms that
\begin{align}
\textstyle \lim_{m\rightarrow\infty} m^{\eta/2} \sup_{t\in[t_m,t_{m+1}]} \qty(\norm{w_t}^2 - 8C_\tref{lem:diff_w}^2 \frac{\xi }{\alpha })^+ = 0.
\end{align}
Denote $B_\tref{thm:rate sa} \doteq \sqrt{8C_\tref{lem:diff_w}^2 \frac{\xi}{\alpha}}$. Notice 
\begin{align}
    \textstyle \qty(\norm{w_t}^2 - B_{\tref{thm:rate sa}}^2)^+ =&\textstyle  \max\qty(0,\norm{w_t}^2 - B_\tref{thm:rate sa}^2) \\
    =&\textstyle  \max\qty(0, \qty(\norm{w_t} - B_\tref{thm:rate sa})\qty(\norm{w_t} + B_\tref{thm:rate sa})) \\
    =&\textstyle  \qty(\norm{w_t} + B_\tref{thm:rate sa})\max\qty(0, \norm{w_t} - B_\tref{thm:rate sa}) \\
    \geq&\textstyle B_\tref{thm:rate sa} \qty(\norm{w_t} - B_\tref{thm:rate sa})^+,
\end{align} 
we then obtain
\begin{align}
\textstyle \lim_{m\rightarrow\infty} m^{\eta/2} \sup_{t\in[t_m,t_{m+1}]} \qty(\norm{w_t} - B_\tref{thm:rate sa})^+ = 0.
\end{align}
To finalize the proof, we now convert the rate coefficient from $m$ to $t$.
\paragraph{When $\nu < 1$.} Following Section A.10 of \citet{qian2024almost}, we can get $m^{1-\nu_2} = \Theta(t_m^{1-\nu})$, where $a_n = \Theta(b_n)$ means for two non-negative sequences $\{a_n\}$ and $\{b_n\}$, we have $a_n = \mathcal{O}(b_n)$ and $b_n = \mathcal{O}(a_n)$. Thus
\begin{align}
0 &\textstyle = \lim_{m\rightarrow\infty} (m - 1)^{\eta/2} \sup_{t\in[t_{m-1},t_m]} \qty(\norm{w_t} - B_\tref{thm:rate sa})^+ \\
&\textstyle = \lim_{m\rightarrow\infty} m^{\eta/2} \sup_{t\in[t_{m-1},t_m]} \qty(\norm{w_t} - B_\tref{thm:rate sa})^+ \\
&\textstyle = \lim_{m\rightarrow\infty} t_m^{\frac{1-\nu}{2(1-\nu_2)}\eta} \sup_{t\in[t_{m-1},t_m]} \qty(\norm{w_t} - B_\tref{thm:rate sa})^+ \\
&\textstyle \geq \lim_{m\rightarrow\infty} \sup_{t\in[t_{m-1},t_m]} t^{\frac{1-\nu}{2(1-\nu_2)}\eta}\qty(\norm{w_t} - B_\tref{thm:rate sa})^+.
\end{align}
We now optimize the choice of $\nu_2 \in (1/2,\nu/(2-\nu))$ and $\eta \in (0,2\nu_2 - 1)$. Notice that \\$\sup_{\nu_2\in(1/2,\nu/(2-\nu)),\eta\in(0,2\nu_2-1)} \frac{1-\nu}{2(1-\nu_2)}\eta = \frac{3}{4}\nu - \frac{1}{2}$. We then conclude that for any $\zeta < \frac{3}{4}\nu - \frac{1}{2}$,
\begin{equation*}
\textstyle \lim_{t\rightarrow\infty} t^\zeta\qty(\norm{w_t} - B_\tref{thm:rate sa})^+ = 0.
\end{equation*}

\paragraph{When $\nu = 1$.} Again following Section A.10 of \citet{qian2024almost}, we have
\begin{align}
\textstyle \ln(m + 3) \geq (\ln(t_{m+1} + 2))^{\frac{1}{1+\nu_1}} - B_{\tref{thm:rate sa},1},
\end{align}
where $B_{\tref{thm:rate sa},1}>0$ is some constant. Define $B_{\tref{thm:rate sa},2} \doteq \exp\left(-\zeta B_{\tref{thm:rate sa},1}\right)$, we then have
\begin{align}
0 &\textstyle = \lim_{m\rightarrow\infty} (m + 2)^\zeta \sup_{t\in[t_{m-1},t_m]} \qty(\norm{w_t} - B_\tref{thm:rate sa})^+\\
&\textstyle\geq \lim_{m\rightarrow\infty} B_{\tref{thm:rate sa},2} \exp\left(\zeta \ln^{1/(1+\nu_1)} t_m\right) \sup_{t\in[t_{m-1},t_m]} \qty(\norm{w_t} - B_\tref{thm:rate sa})^+ \\
&\textstyle\geq B_{\tref{thm:rate sa},2} \lim_{t\rightarrow\infty} \exp\left(\zeta \ln^{1/(1+\nu_1)} t\right) \qty(\norm{w_t} - B_\tref{thm:rate sa})^+.
\end{align}
Notice that $\qty(\norm{w_t} - B_\tref{thm:rate sa})^+$ is actually the distance from $w_t$ to the ball $\fBB(B_\tref{thm:rate sa})$, i.e. $d\qty(w_t, \fBB(B_\tref{thm:rate sa}))$. This completes the proof.
\end{proof}

\subsection{Proof of Theorem~\ref{thm:concentration sa}}
\label{proof:concentration sa}
\begin{proof}
The proof is analogous to the previous section. Under Assumption~\ref{asp:lr2}, Assumption~\ref{asp:T_n} is satisfied by \eqref{eq:anchors} with $C_\alpha > \bar C_\alpha$.  Recall that all other assumptions are already verified in Appendix~\ref{proof:rate sa}, we can apply Theorem~\ref{thm:concentration rs} to obtain the concentration bound of $\qty{w_{t_m}}$. 

\begin{lemma}
\label{lem:concentration sa0}
Under Assumption~\ref{asp:lr2}, there exist 
some constants $C_\tref{lem:concentration sa0}$ and $C'_\tref{lem:concentration sa0}$ such that 
for any $\delta > 0$, it holds, with probability at least $1 - \delta$, that for all $m \geq 0$,
\begin{align}
    \textstyle\qty(\qty(\norm{w_{t_m}}^2 - \frac{\xi }{\alpha })^+)^2 \leq C'_\tref{lem:concentration sa0}\frac{1}{m+3}\left[\ln\left(\frac{C_\tref{lem:concentration sa0}}{\delta}\right) + 1 + \ln(m+3)\right]^{C_\tref{lem:concentration sa0}}.
\end{align}
\end{lemma}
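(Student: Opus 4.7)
The plan is to recognize that Lemma~\ref{lem:concentration sa0} is a direct application of Theorem~\ref{thm:concentration rs} to the skeleton iterates, with the identification $z_m \doteq \|w_{t_m}\|^2$, $\alpha \leftarrow C_{\tref{lem:bound all},1}$, $\xi \leftarrow C_{\tref{lem:bound all},2}$, and $B_{\tref{thm:bound}} = \xi/\alpha = C_{\tref{lem:bound all},2}/C_{\tref{lem:bound all},1}$. All the heavy lifting has already been done in the shared groundwork of Appendix~\ref{proof:sec sa}: Lemma~\ref{lem:bound all} establishes that $\{z_m\}$ satisfies \eqref{eq:special rs} along the skeleton scale, and Lemma~\ref{lem:bound diff} verifies the affine-growth assumption (A2) of Theorem~\ref{thm:bound} with $B_{\tref{thm:bound},1}=16$. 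Hence all premises of Theorem~\ref{thm:bound} are in place.

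The remaining task is to verify Assumption~\ref{asp:T_n} for the sequence $\{T_m\}$ defined in~\eqref{eq:anchors} under Assumption~\ref{asp:lr2}. In this regime~\eqref{eq:nu_cases} fixes $\nu_1 = 0$ and $\nu_2 = 1$, so
\begin{align}
T_m = \frac{C_\alpha}{m+3},
\end{align}
which is exactly the inverse-linear form $T_m = C_{\tref{asp:T_n}}/(m + n_0)$ with $n_0 = 3$ and $C_{\tref{asp:T_n}} = C_\alpha$. The strict lower bound $C_{\tref{asp:T_n}} > 1/\alpha = 1/C_{\tref{lem:bound all},1}$ is then secured by choosing the threshold $\bar C_\alpha$ in the statement of Theorem~\ref{thm:concentration sa} to satisfy $\bar C_\alpha > 1/C_{\tref{lem:bound all},1}$, which is exactly what the hypothesis $C_\alpha \geq \bar C_\alpha$ delivers.

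With Assumption~\ref{asp:T_n} verified, Theorem~\ref{thm:concentration rs} applies verbatim and yields, with probability at least $1-\delta$ and for all $m \geq 0$,
\begin{align}
\left(d\!\left(z_m,\left[0,\tfrac{\xi}{\alpha}\right]\right)\right)^2 \leq B'_{\tref{thm:concentration rs}}\,\frac{1}{m+3}\left[\ln\!\left(\frac{B_{\tref{thm:concentration rs}}}{\delta}\right)+1+\ln(m+3)\right]^{B_{\tref{thm:concentration rs}}}.
\end{align}
Since $d(z_m,[0,\xi/\alpha]) = (z_m - \xi/\alpha)^+ = (\|w_{t_m}\|^2 - \xi/\alpha)^+$, the conclusion follows by setting $C_{\tref{lem:concentration sa0}} \doteq B_{\tref{thm:concentration rs}}$ and $C'_{\tref{lem:concentration sa0}} \doteq B'_{\tref{thm:concentration rs}}$.

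I do not expect any serious obstacle here: the concentration machinery in Theorem~\ref{thm:concentration rs} is already tailored to the \eqref{eq:special rs} template, and the skeleton construction was explicitly designed so that the induced step size $T_m$ matches the inverse-linear form of Assumption~\ref{asp:T_n} in the Assumption~\ref{asp:lr2} regime. The only point requiring any care is bookkeeping on the constant $C_\alpha$: one must make sure the lower-bound threshold $\bar C_\alpha$ chosen here is consistent with (and at least as large as) any threshold imposed earlier in the shared groundwork, so a single global choice of $\bar C_\alpha$ simultaneously supports Lemma~\ref{lem:bound all} and Assumption~\ref{asp:T_n}.
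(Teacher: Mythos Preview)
Your proposal is correct and matches the paper's own proof essentially line for line: verify that under Assumption~\ref{asp:lr2} the skeleton step sizes $T_m = C_\alpha/(m+3)$ satisfy Assumption~\ref{asp:T_n} (with the threshold $\bar C_\alpha$ chosen so that $C_\alpha > 1/C_{\tref{lem:bound all},1}$), recall that Lemmas~\ref{lem:bound all} and~\ref{lem:bound diff} already put $z_m = \|w_{t_m}\|^2$ into the \eqref{eq:special rs} template with the required growth bound, and then invoke Theorem~\ref{thm:concentration rs}. The paper states this even more tersely, simply noting that the assumptions were verified in Appendix~\ref{proof:rate sa} and applying Theorem~\ref{thm:concentration rs}.
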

Now we map this result from $m$ back to $t$.
First recall that we proceed under Assumption \ref{asp:lr2} and have
\begin{align}
\textstyle\alpha_t = \frac{C_\alpha}{(t+3)\ln^\nu(t+3)}, \quad T_m = \frac{C_\alpha}{m+3}.
\end{align}
Applying the bound in Lemma~\ref{lem:concentration sa0} to \eqref{eq:trans}, similar to Appendix~\ref{proof:rate sa}, for any $t \in [t_m, t_{m+1}]$
\begin{align}
\textstyle\qty(\norm{w_t} - B_\tref{thm:rate sa})^+ \leq&\textstyle \frac{1}{B_\tref{thm:rate sa}}\qty(\norm{w_t}^2 - B_\tref{thm:rate sa}^2)^+ \\
\leq&\textstyle \frac{1}{B_\tref{thm:rate sa}}8C_\tref{lem:diff_w}(T_m^2 + \sqrt{C_\tref{lem:concentration sa0}'}\frac{1}{\sqrt{m+3}} [\ln(C_\tref{lem:concentration sa0}/\delta) + 1 + \ln(m + 3)]^{C_\tref{lem:concentration sa0}/2}).
\end{align}
Following Section A.10 of \citet{qian2024almost}, we obtain
$m + 3 \leq (m_0 + 3)\exp\left(\frac{\ln^{1-\nu}(t_{m+1}+1)}{1-\nu}\right)$. 
Additionally, there exists some constant $B_{\tref{thm:concentration sa},1}$ such that $m+2 \geq B_{\tref{thm:concentration sa},1} \exp\left(\frac{\ln^{1-\nu} t_{m+1}}{1-\nu}\right)$.
Then for all $m \geq m_0$ and any $t \in [t_m, t_{m+1}]$, we have, for some proper constants,
\begin{align}
&\textstyle \qty(\norm{w_t} - B_\tref{thm:rate sa})^+ \\
\leq&\textstyle \frac{1}{B_\tref{thm:rate sa}}8C_\tref{lem:diff_w}\left(\frac{C_\alpha^2}{(m+3)^2} + \sqrt{C_\tref{lem:concentration sa0}'}\frac{1}{\sqrt{m+3}} [\ln(C_\tref{lem:concentration sa0}/\delta) + 1 + \ln(m + 3)]^{C_\tref{lem:concentration sa0}/2}\right) \\
\leq&\textstyle B_{\tref{thm:concentration sa},2}\frac{1}{\sqrt{m+3}}\left[\ln(1/\delta) + B_{\tref{thm:concentration sa},3} + \ln(m + 3)\right]^{C_\tref{lem:concentration sa0}/2} \\
\leq&\textstyle B_{\tref{thm:concentration sa},2}\left(B_{\tref{thm:concentration sa},1}\exp\left(\frac{\ln^{1-\nu}(t_{m+1})}{2(1-\nu)}\right)\right)^{-1}\left[\ln(1/\delta) + B_{\tref{thm:concentration sa},3} + \ln\left((m_0 + 3)\exp\left(\frac{\ln^{1-\nu}(t_{m+1})}{1-\nu}\right)\right)\right]^{C_\tref{lem:concentration sa0}/2} \\
\leq&\textstyle B_{\tref{thm:concentration sa},4}\exp\left(\frac{-\ln^{1-\nu}(t+1)}{2(1-\nu)}\right)\left[\ln(1/\delta) + B_{\ref{thm:concentration sa},5} + \frac{\ln^{1-\nu}(t + 1)}{1-\nu}\right]^{C_\tref{lem:concentration sa0}/2}.
\end{align}
For all $t \leq t_{m_0}$, we can again use a trivial almost sure bound from \eqref{eq:tt gronwall}, following a similar argument as in the case of $n < \bar n$ in Lemma~\ref{lem:concentration rs} in Section~\ref{proof:concentration rs}.
Combining the two bounds then completes the proof.
\end{proof}

\subsection{Auxiliary Lemma}
\begin{lemma}
  \label{lem:diff_w}
  There exists some constant $C_\tref{lem:diff_w}$ such that for any $m$ and any $t \in [t_m, t_{m+1}]$, 
  \begin{align}
    \norm{w_t} \leq& C_\text{\ref{lem:diff_w}}(\bar \alpha_m  + \norm{w_{t_m}}), \\
    \norm{w_t - w_{t_m}} \leq& 2T_m C_\text{\ref{lem:diff_w}}(\norm{w_{t_m}} + 1). \label{cor:diff_w}
  \end{align}
  Furthermore, for any $t \geq 0$, it holds that
  \begin{align}
    \textstyle \norm{w_t } \leq \qty(\norm{w_0 } + C_\tref{lem:diff_w}\sum_{\tau=0}^{t-1} \alpha_\tau ) \exp(C_\tref{lem:diff_w} \sum_{\tau=0}^{t-1} \alpha_\tau) 
    \label{eq:tt gronwall}.
  \end{align}
\end{lemma}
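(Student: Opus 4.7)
The plan is to reduce everything to two applications of the discrete Gronwall inequality (Lemma~\ref{lem:discrete_gronwall}), one performed locally on a single segment $[t_m, t_{m+1}]$ and one performed globally on $[0, t]$, together with the basic Lipschitz-in-$w$ growth of $H$. The key preliminary observation is that Assumption~\ref{asp:lip} gives the affine growth
\begin{align}
\norm{H(w,y)} \leq \norm{H(w,y) - H(0,y)} + \norm{H(0,y)} \leq L_h(\norm{w} + 1),
\end{align}
so each increment of the iteration satisfies $\norm{w_{\tau+1} - w_\tau} \leq \alpha_\tau L_h(\norm{w_\tau} + 1)$.

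For the first bound, I would telescope \eqref{eq:sa update} from $t_m$ to $t$ and separate the constant term, obtaining
\begin{align}
\norm{w_t} \leq \norm{w_{t_m}} + L_h \bar\alpha_m + L_h \sum_{\tau=t_m}^{t-1} \alpha_\tau \norm{w_\tau}.
\end{align}
Applying Lemma~\ref{lem:discrete_gronwall} on the window $[t_m, t]$ with $C = \norm{w_{t_m}} + L_h \bar\alpha_m$, $L = L_h$, and $a_\tau = \alpha_\tau$ yields $\norm{w_t} \leq (2\norm{w_{t_m}} + L_h \bar\alpha_m)\exp(L_h \bar\alpha_m)$. Since $\bar\alpha_m \leq 2 T_m$ for $m \geq m_0$ by Lemma~\ref{lem:lr bounds 2} and $T_m \to 0$, the factor $\exp(L_h \bar\alpha_m)$ is uniformly bounded; the finitely many initial indices $m < m_0$ can be absorbed by enlarging the constant, giving the stated form with a single $C_\tref{lem:diff_w}$.

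The second bound follows by combining the first with a direct one-step estimate:
\begin{align}
\norm{w_t - w_{t_m}} \leq L_h \sum_{\tau=t_m}^{t-1}\alpha_\tau(\norm{w_\tau}+1) \leq L_h \bar\alpha_m \qty(\max_{\tau \in [t_m,t]}\norm{w_\tau} + 1).
\end{align}
Substituting the first bound into the maximum, then using $\bar\alpha_m \leq 2T_m$ to convert the prefactor, and absorbing $T_m^2$ into $T_m$ (again using boundedness of $T_m$) produces $\norm{w_t - w_{t_m}} \leq 2T_m C_\tref{lem:diff_w}(\norm{w_{t_m}} + 1)$ after an adjustment of the constant.

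For the third (global) bound, I would run exactly the same argument as in the first step but starting from $\tau = 0$ rather than from $\tau = t_m$: telescoping from $0$ to $t$ gives $\norm{w_t} \leq \norm{w_0} + L_h \sum_{\tau=0}^{t-1}\alpha_\tau + L_h\sum_{\tau=0}^{t-1}\alpha_\tau \norm{w_\tau}$, and Lemma~\ref{lem:discrete_gronwall} then produces the stated exponential form with the constant identified as $L_h$. I do not anticipate any conceptual obstacle; the only care required is the bookkeeping for the ``$m \geq m_0$'' threshold in parts one and two, which is handled by choosing $C_\tref{lem:diff_w}$ large enough to cover the finitely many exceptional segments.
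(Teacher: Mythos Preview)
Your proposal is correct and matches the paper's approach: the paper's proof establishes the same affine growth bound $\norm{H(w,y)} \leq L_h(\norm{w}+1)$ from Assumption~\ref{asp:lip} and then notes that the remaining steps ``rely primarily on an application of the discrete Gronwall inequality,'' deferring the details to \citet{qian2024almost}; you have simply spelled out those Gronwall steps explicitly. (Incidentally, your decomposition $\norm{H(w,y)-H(0,y)}+\norm{H(0,y)}$ is the correct one given that Assumption~\ref{asp:lip} is Lipschitz in the first argument; the paper's written form $\norm{H(w,y)-H(w,0)}+\norm{H(w,0)}$ appears to be a typo.)
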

\begin{proof}
    Given that we allow time-inhomogeneous transition kernel, we cannot directly apply Lemma~16 from \citet{qian2024almost}. Nevertheless, under Assumption~\ref{asp:lip}, we can establish that 
    \begin{align}
        \norm{H(w,y)}\leq \norm{H(w,y) - H(0,y)} + \norm{H(0,y)} \leq L_h\norm{w-0}+L_h = L_h\norm{w}+L_h.
    \end{align}
    Crucially, this bound is independent of the state $y$. Because the bound holds uniformly, the specific dynamics of the Markov chain (whether time-homogeneous or inhomogeneous) do not influence the subsequent analysis for this lemma. The remainder of the proof therefore proceeds identically to the approach in \citet{qian2024almost}, which relies primarily on an application of the discrete Gronwall's inequality to establish the bounds on the iterates.
\end{proof}
\begin{lemma}
\label{lem:diff_w1}
Let $\alpha_{i,j} \doteq \sum_{k=i}^{j}\alpha_k$. For any $t_2 > t_1$, if $\alpha_{t_1, t_2-1} L_h < 1$, then the following bound holds
\begin{align}
    \textstyle\norm{w_{t_1} - w_{t_2}} \leq \frac{\alpha_{t_1, t_2-1} L_h}{1 - \alpha_{t_1, t_2-1} L_h} (\norm{w_{t_2}} + 1).
\end{align}
\end{lemma}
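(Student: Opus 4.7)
The plan is to bound the displacement by telescoping along the SA update and then converting a bound in terms of the maximum iterate norm on the segment into a bound in terms of $\norm{w_{t_2}}$ via a self-bounding argument. This is the same flavor as the derivation of~\eqref{eq:tt gronwall} in Lemma~\ref{lem:diff_w}, but anchored at the right endpoint $t_2$ instead of the left one.

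First I would use Assumption~\ref{asp:lip} to obtain the per-step increment bound $\norm{w_{k+1} - w_k} = \alpha_k \norm{H(w_k, Y_{k+1})} \leq \alpha_k L_h (\norm{w_k} + 1)$, exactly as in the proof of Lemma~\ref{lem:diff_w}. Telescoping the triangle inequality over $k = t_1, \dots, t_2 - 1$ then gives
\begin{align}
    \norm{w_{t_1} - w_{t_2}} \leq L_h \sum_{k=t_1}^{t_2-1} \alpha_k (\norm{w_k} + 1) \leq L_h \alpha_{t_1, t_2-1} (M + 1),
\end{align}
where $M \doteq \max_{k \in [t_1, t_2]} \norm{w_k}$. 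The same telescoping, started instead at any intermediate $k \in [t_1, t_2]$ and run to $t_2$, yields $\norm{w_k - w_{t_2}} \leq L_h \alpha_{k, t_2 - 1}(M+1) \leq L_h \alpha_{t_1, t_2 - 1}(M + 1)$.

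Next I would close the loop with a self-bounding argument: since $\norm{w_k} \leq \norm{w_{t_2}} + \norm{w_k - w_{t_2}}$ for every $k \in [t_1, t_2]$, taking the maximum gives $M \leq \norm{w_{t_2}} + L_h \alpha_{t_1, t_2 - 1}(M + 1)$. The hypothesis $\alpha_{t_1, t_2-1} L_h < 1$ is exactly what is needed to invert this affine inequality, producing
\begin{align}
    M + 1 \leq \frac{\norm{w_{t_2}} + 1}{1 - \alpha_{t_1, t_2-1} L_h}.
\end{align}
Substituting this back into the displacement bound yields the claimed estimate.

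There is no real obstacle here; the only thing to be mildly careful about is that $M$ is a priori finite (which holds because the segment is finite) so that the inversion step is legitimate, and that the maximum over $k$ is indeed achieved so the self-referential inequality is valid. Both are trivially true for finite $t_2 - t_1$.
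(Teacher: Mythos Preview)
Your proposal is correct and follows essentially the same approach as the paper: bound the telescoped increments by $L_h \alpha_{t_1,t_2-1}(M+1)$ where $M$ is the maximum iterate norm on the segment, use the same telescoping anchored at $t_2$ to get the self-bounding inequality $M \leq \norm{w_{t_2}} + L_h\alpha_{t_1,t_2-1}(M+1)$, invert under the hypothesis $\alpha_{t_1,t_2-1}L_h<1$, and substitute back. The only cosmetic difference is that the paper defines $M$ over $[t_1,t_2-1]$ rather than $[t_1,t_2]$, which changes nothing.
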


\begin{proof}
We first establish a uniform bound on the norm of any intermediate iterate $w_i$ for $i \in [t_1, t_2-1]$ in terms of $\norm{w_{t_2}}$. For any $i \in [t_1, t_2-1]$, we can express $w_i$ in terms of the future iterate $w_{t_2}$ as
\begin{align}
    \textstyle w_i = w_{t_2} - \sum_{j=i}^{t_2-1} \alpha_j H(w_j, Y_{j+1}).
\end{align}
By taking the norm and applying the triangle inequality, along with Assumption~\ref{asp:lip}, we have
\begin{align}
    \textstyle \norm{w_i} \leq \norm{w_{t_2}} + \sum_{j=i}^{t_2-1} \alpha_j \norm{H(w_j, Y_{j+1})} \leq \norm{w_{t_2}} + \sum_{j=i}^{t_2-1} \alpha_j L_h (\norm{w_j} + 1).
\end{align}
Denote $M \doteq \max_{k \in [t_1, t_2-1]} \norm{w_k}$. Then for any $j$ in this interval, $\norm{w_j} \leq M$. The inequality above thus becomes
\begin{align}
    \textstyle\norm{w_i} \leq \norm{w_{t_2}} + \left( \sum_{j=t_1}^{t_2-1} \alpha_j \right) L_h (M + 1) = \norm{w_{t_2}} + \alpha_{t_1, t_2-1} L_h (M+1).
\end{align}
Since this holds for all $i \in [t_1, t_2-1]$, it must also hold for the iterate that achieves the maximum norm $M$. Therefore
\begin{align}
     \textstyle M \leq \norm{w_{t_2}} + \alpha_{t_1, t_2-1} L_h (M+1). \\
\end{align}
Given that $\alpha_{t_1, t_2-1} L_h < 1$, we can solve for $M$
\begin{align}
    \textstyle M=\max_{i \in [t_1, t_2-1]} \norm{w_i} \leq \frac{\norm{w_{t_2}} + \alpha_{t_1, t_2-1} L_h}{1 - \alpha_{t_1, t_2-1} L_h}. \label{eq:intermediate_bound} 
\end{align}
We now bound the norm of the difference between $w_{t_1}$ and $w_{t_2}$
\begin{align}
    \norm{w_{t_2} - w_{t_1}} =&\textstyle \norm{\sum_{i=t_1}^{t_2-1} \alpha_i H(w_i, Y_{i+1})} \\
    \leq&\textstyle \sum_{i=t_1}^{t_2-1} \alpha_i L_h (\norm{w_i} + 1)\\
    \leq&\textstyle \sum_{i=t_1}^{t_2-1} \alpha_i L_h \left( \frac{\norm{w_{t_2}} + \alpha_{t_1, t_2-1} L_h}{1 - \alpha_{t_1, t_2-1} L_h} + 1 \right). \explain{By \eqref{eq:intermediate_bound}}\\
    \leq&\textstyle \sum_{i=t_1}^{t_2-1} \alpha_i L_h \left( \frac{\norm{w_{t_2}} + 1}{1 - \alpha_{t_1, t_2-1} L_h} \right) \\
    =&\textstyle \frac{\alpha_{t_1, t_2-1} L_h}{1 - \alpha_{t_1, t_2-1} L_h} (\norm{w_{t_2}} + 1).
\end{align}
\end{proof}

\subsection{Proof of Lemma~\ref{lem:bound z1}}
\label{proof:bound z1}
\begin{proof}
\begin{align}
    \textstyle \norm{s_{1,m}} \leq&\textstyle \sum_{t=t_m}^{t_{m+1}-1} \alpha_t L_h \norm{w_t - w_{t_m}}\\
    \leq&\textstyle \sum_{t=t_m}^{t_{m+1}-1} \alpha_t L_h 2T_m C_\tref{lem:diff_w}(\norm{w_{t_m}} + 1) \explain{Lemma~\ref{lem:diff_w}}\\
    =&\textstyle 2T_m\bar{\alpha}_m L_h C_\tref{lem:diff_w}(\norm{w_{t_m}} + 1)\\
    \leq&\textstyle 4T_m^2 L_h C_\tref{lem:diff_w}(\norm{w_{t_m}} + 1) \explain{Lemma~\ref{lem:lr bounds 2}}.
\end{align}
\end{proof}

\subsection{Proof of Lemma~\ref{lem:bound z2}}
\label{proof:bound z2}
\begin{proof}
Denote $\alpha_{i,j} \doteq \sum_{k=i}^{j}\alpha_k$.
Using Lemma~\ref{lem:lr bounds}, we have 
\begin{align}
    \alpha_{t-\tau_{\alpha_t},t-1} =\fO(\alpha_t\tau_{\alpha_t})
= \fO(\alpha_t\ln t).
\end{align}
Indeed, by \eqref{eq:mixing},
\begin{align}
\label{eq:tau_alpha_t}
    \textstyle
    \tau_{\alpha_t}
    =
    \left\lceil
    \frac{\ln \alpha_t-\ln C_\tref{asp:markov chain}}{\ln \tau}
    \right\rceil
    =
    \begin{cases}
    \fO(\nu\ln t), & \qq{for Assumption~\ref{asp:lr1},} \\
    \fO(\ln t), & \qq{for Assumption~\ref{asp:lr2}.}
    \end{cases}
\end{align}
For sufficiently large $m_0$, we then have $t-\tau_{\alpha_t} \geq 0$ for all $t \in [t_m,t_{m+1}-1]$ and all $m \geq m_0$.
We now record the conditional version of the auxiliary-chain comparison. Applying the same
telescoping argument as in Section B.4 of \citet{liu2025linearq} after conditioning on 
$\mathcal F_{t-\tau_{\alpha_t}}$, there exists some deterministic constant $C_{\tref{lem:bound z2},3}$ such that
for all sufficiently large $t$,
\begin{align}
\label{eq:diff_aux}
    \textstyle 
    \sum_{y'} 
    \abs{
    \Pr(Y_{t+1}=y' \mid \mathcal F_{t-\tau_{\alpha_t}})
    -
    \Pr(\tilde Y_{t+1}^{(t)}=y' \mid \mathcal F_{t-\tau_{\alpha_t}})
    }
    \leq 
    C_{\tref{lem:bound z2},3}\abs{\fY}\alpha_{t-\tau_{\alpha_t},t-1}\ln(t+t_0)
    \quad \text{a.s.}
\end{align}
This is the conditional form that will be used below.

We next establish the scale estimate needed for the main part of the proof. According to Section A.10 and A.14 of \citet{qian2024almost}, we have
\begin{align}
\label{eq:t_m}
    \ln t_{m+1}
    =
    \begin{cases}
    \fO\qty(\ln^{1+\nu_1}m), 
    & \qq{for Assumption~\ref{asp:lr1} with $\nu=1$,} \\
    \frac{1-\nu_2}{1-\nu}\fO(\ln m), 
    & \qq{for Assumption~\ref{asp:lr1} with $\nu\in(2/3,1)$,} \\
    (1-\nu)^{\frac{1}{1-\nu}}\fO\qty(\ln^{\frac{1}{1-\nu}}m), 
    & \qq{for Assumption~\ref{asp:lr2}.}
    \end{cases}
\end{align}
Combining these bounds with Lemma~\ref{lem:lr bounds} gives
\begin{align}
\label{aux:alpha__}
    \max_{t_m \leq t < t_{m+1}}
    \qty{\alpha_{t-\tau_{\alpha_t},t-1}\ln(t+t_0)}
    =
    \fO(T_m).
\end{align}
Define
\begin{align*}
    I_m^- \doteq \qty{t\in\qty{t_m,\dots,t_{m+1}-1}: t-\tau_{\alpha_t}<t_m},
\qquad
I_m^+ \doteq \qty{t\in\qty{t_m,\dots,t_{m+1}-1}: t-\tau_{\alpha_t}\geq t_m}.
\end{align*}
Let $\tau_m^\star \doteq \max_{t_m\leq t<t_{m+1}}\tau_{\alpha_t}$.
If $t\in I_m^-$, then $t-t_m<\tau_{\alpha_t}\leq \tau_m^\star$. Hence, using the monotonicity of
$\alpha_t$,
\begin{align}
\label{eq:boundary_mass_s2}
    \sum_{t\in I_m^-}\alpha_t
    \leq
    \sum_{j=0}^{\tau_m^\star}\alpha_{t_m+j}
    \leq
    (\tau_m^\star+1)\alpha_{t_m}
    \leq
    C\alpha_{t_m}\ln(t_{m+1}+t_0)
    =
    \fO(T_m^2).
\end{align}
For Assumption~\ref{asp:lr1} with $\nu\in(2/3,1)$, the last equality uses
$m^{1-\nu_2}=\Theta(t_m^{1-\nu})$ and the strict choice
$\nu_2<\nu/(2-\nu)$; for Assumption~\ref{asp:lr1} with $\nu=1$ and for
Assumption~\ref{asp:lr2}, it follows from the same scale relations in
Section A.10 and A.14 of \citet{qian2024almost}.

We now prove \eqref{eq:es2}. For the boundary indices $t\in I_m^-$, we use only the crude bound
from Assumption~\ref{asp:lip}:
\begin{align}
    &\textstyle \norm{\E_{t_m}\qty[H(w_{t_m},Y_{t+1}) - H(w_{t_m},\tilde Y_{t+1}^{(t)})]} \\
    \leq&\textstyle \E_{t_m}\qty[\norm{H(w_{t_m},Y_{t+1})} + \norm{H(w_{t_m},\tilde Y_{t+1}^{(t)})}]\\
    \leq& 2L_h(\norm{w_{t_m}}+1).
\end{align}
Therefore, by \eqref{eq:boundary_mass_s2},
\begin{align}
\label{eq:s2_boundary_part}
    \textstyle \sum_{t\in I_m^-}
    \alpha_t \norm{\E_{t_m}\qty[H(w_{t_m},Y_{t+1}) - H(w_{t_m},\tilde Y_{t+1}^{(t)})]} \leq \fO(T_m^2)(\norm{w_{t_m}}+1).
\end{align}

For the main indices $t\in I_m^+$, we have $t-\tau_{\alpha_t}\geq t_m$, so $w_{t_m}$ is
$\mathcal F_{t-\tau_{\alpha_t}}$-measurable. By the tower rule and \eqref{eq:diff_aux},
\begin{align}
    &\textstyle
    \norm{
    \E_{t_m}\qty[H(w_{t_m},Y_{t+1}) -
    H(w_{t_m},\tilde Y_{t+1}^{(t)})]}\\
    =&\textstyle \norm{\E_{t_m}\qty[\E_{t-\tau_{\alpha_t}} \qty[H(w_{t_m},Y_{t+1}) - H(w_{t_m},\tilde Y_{t+1}^{(t)})]]} \\
    \leq& \textstyle \E_{t_m}\qty[\norm{
    \E_{t-\tau_{\alpha_t}}\qty[
    H(w_{t_m},Y_{t+1}) - H(w_{t_m},\tilde Y_{t+1}^{(t)})]}]\\
    =&\textstyle \E_{t_m}\qty[\norm{\sum_{y'}H(w_{t_m},y') \qty(\Pr(Y_{t+1}=y'\mid\mathcal F_{t-\tau_{\alpha_t}}) - \Pr(\tilde Y_{t+1}^{(t)}=y'\mid\mathcal F_{t-\tau_{\alpha_t}}))}] \\
    \leq&\textstyle
    L_h(\norm{w_{t_m}}+1) \E_{t_m}\qty[\sum_{y'} \abs{\Pr(Y_{t+1}=y'\mid\mathcal F_{t-\tau_{\alpha_t}}) - \Pr(\tilde Y_{t+1}^{(t)}=y'\mid\mathcal F_{t-\tau_{\alpha_t}})}]\\
    \leq&\textstyle
    L_h(\norm{w_{t_m}}+1)
    C_{\tref{lem:bound z2},3}\abs{\fY}
    \alpha_{t-\tau_{\alpha_t},t-1}\ln(t+t_0).
\end{align}
Consequently,
\begin{align}
\label{eq:s2_main_part}
    &\textstyle
    \sum_{t\in I_m^+}
    \alpha_t
    \norm{
    \E_{t_m}\qty[
    H(w_{t_m},Y_{t+1})
    -
    H(w_{t_m},\tilde Y_{t+1}^{(t)})
    ]}
    \\
    \leq&\textstyle
    L_h(\norm{w_{t_m}}+1)
    C_{\tref{lem:bound z2},3}\abs{\fY}
    \sum_{t\in I_m^+}
    \alpha_t
    \alpha_{t-\tau_{\alpha_t},t-1}\ln(t+t_0)
    \\
    \leq&\textstyle
    L_h(\norm{w_{t_m}}+1)
    C_{\tref{lem:bound z2},3}\abs{\fY}
    \alpha_{t_m,t_{m+1}-1}
    \max_{t_m\leq t<t_{m+1}}
    \qty{\alpha_{t-\tau_{\alpha_t},t-1}\ln(t+t_0)}
    \\
    \leq&\textstyle
    2T_m
    L_h(\norm{w_{t_m}}+1)
    C_{\tref{lem:bound z2},3}\abs{\fY}
    \max_{t_m\leq t<t_{m+1}}
    \qty{\alpha_{t-\tau_{\alpha_t},t-1}\ln(t+t_0)}
    \explain{By Lemma~\ref{lem:lr bounds 2}}
    \\
    =&\textstyle
    \fO(T_m^2)(\norm{w_{t_m}}+1).
    \explain{By \eqref{aux:alpha__}}
\end{align}
Combining \eqref{eq:s2_boundary_part} and \eqref{eq:s2_main_part}, we obtain
\[
\norm{\E_{t_m}\qty[s_{2,m}]}
\leq
T_m^2 C_{\tref{lem:bound z2},1}(\norm{w_{t_m}}+1)
\]
for a sufficiently large deterministic constant $C_{\tref{lem:bound z2},1}$. This proves
\eqref{eq:es2}.

We now prove \eqref{eq:s2}. By Assumption~\ref{asp:lip},
\begin{align}
    \norm{s_{2,m}}
    \leq&\textstyle
    \sum_{t=t_m}^{t_{m+1}-1}
    \alpha_t
    \qty(
    \norm{H(w_{t_m},Y_{t+1})}
    +
    \norm{H(w_{t_m},\tilde Y_{t+1}^{(t)})}
    )
    \\
    \leq&\textstyle
    2L_h(\norm{w_{t_m}}+1)\bar\alpha_m
    \\
    \leq&\textstyle
    4L_hT_m(\norm{w_{t_m}}+1).
    \explain{By Lemma~\ref{lem:lr bounds 2}}
\end{align}
Selecting $C_{\tref{lem:bound z2},2}$ accordingly completes the proof.
\end{proof}

\subsection{Proof of Lemma~\ref{lem:bound z4}}
\label{proof:bound z4}
\begin{proof}
We first decompose $s_{4,m}$ as 
\begin{align}
\norm{s_{4,m}} \leq&\textstyle \sum_{t=t_m}^{t_{m+1}-1} \alpha_t \norm{\E_{t_m}\qty[H\qty(w_{t_m}, \tilde Y_{t+1}^{(t)}) - h\qty(w_{t_m})]} \\
\leq&\textstyle \sum_{t=t_m}^{t_{m+1}-1} \alpha_t \norm{\E_{t_m}\qty[H\qty(w_{t_m}, \tilde Y_{t+1}^{(t)}) - h\qty(w_{t-\tau_{\alpha_t}})]} + \sum_{t=t_m}^{t_{m+1}-1} \alpha_t \norm{\E_{t_m}\qty[h\qty(w_{t-\tau_{\alpha_t}})] - h(w_{t_m})}.
\end{align}
We start from the first term by bounding each summand indexed by $t \in [t_m, t_{m+1} - 1]$.
\paragraph{When $t_m \geq t-\tau_{\alpha_t}$.} We decompose the summand as
\begin{align}
    &\textstyle\norm{\E_{t_m}\qty[H\qty(w_{t_m}, \tilde Y_{t+1}^{(t)}) - h\qty(w_{t-\tau_{\alpha_t}})]}\\
    =&\textstyle \norm{\sum_y P_{w_{t-\tau_{\alpha_t}}}^{\tau_{\alpha_t}+1}\qty(Y_{t-\tau_{\alpha_t}},y)H(w_{t_m}, y) - \sum_y d_{\mathcal{Y}, w_{t-\tau_{\alpha_t}}}(y)H\qty(w_{t-\tau_{\alpha_t}}, y)}\\
    \leq&\textstyle \norm{\sum_y \qty(P_{w_{t-\tau_{\alpha_t}}}^{\tau_{\alpha_t}+1}\qty(Y_{t-\tau_{\alpha_t}},y) - d_{\mathcal{Y}, w_{t-\tau_{\alpha_t}}}(y))H(w_{t_m}, y)} + \norm{\sum_y d_{\mathcal{Y}, w_{t-\tau_{\alpha_t}}}(y)\qty(H(w_{t_m}, y) - H\qty(w_{t-\tau_{\alpha_t}}, y))}. 
\end{align}
To see the first equality, 
we recall that $\E_{t_m}$ is the conditional expectation conditioned on $Y_1, Y_2, \dots, Y_{t_m}$, so $\qty{\tilde Y_k^{(t)}}$ is not adapted to $\fF_{t_m}$ for $k = t - \tau_{\alpha_t} + 1, \dots, t + 1$.
Then
\begin{align}
    &\textstyle\norm{\sum_y \qty(P_{w_{t-\tau_{\alpha_t}}}^{\tau_{\alpha_t}+1}\qty(Y_{t-\tau_{\alpha_t}},y) - d_{\mathcal{Y}, w_{t-\tau_{\alpha_t}}}(y))H(w_{t_m}, y)}\\
    \leq&\textstyle L_h(\norm{w_{t_m}}+1) \cdot \sum_y \abs{P_{w_{t-\tau_{\alpha_t}}}^{\tau_{\alpha_t}+1}\qty(Y_{t-\tau_{\alpha_t}},y) - d_{\mathcal{Y}, w_{t-\tau_{\alpha_t}}}(y)} \explain{Assumption~\ref{asp:lip}}\\
    \leq&\textstyle L_h(\norm{w_{t_m}}+1) \cdot  C_\tref{asp:markov chain}\tau^{\tau_{\alpha_t}+1} \explain{By \eqref{eq:mixing}}\\
    \leq&\textstyle L_h(\norm{w_{t_m}}+1) \cdot \alpha_t \explain{By \eqref{eq:tau}},
\end{align}
and 
\begin{align}
    &\textstyle\norm{\sum_y d_{\mathcal{Y}, w_{t-\tau_{\alpha_t}}}(y)\qty(H\qty(w_{t_m}, y) - H\qty(w_{t-\tau_{\alpha_t}}, y))}\\
    \leq&\textstyle\sum_y d_{\mathcal{Y}, w_{t-\tau_{\alpha_t}}}(y)\norm{H(w_{t_m}, y) - H\qty(w_{t-\tau_{\alpha_t}}, y)} \\
    \leq&\textstyle L_h \norm{w_{t_m} - w_{t-\tau_{\alpha_t}}} \explain{Assumption~\ref{asp:lip}}\\
    \leq&\textstyle L_h\frac{\alpha_{t-\tau_{\alpha_t}, t_m-1}L_h}{1-\alpha_{t-\tau_{\alpha_t}, t_m-1}L_h}(\norm{w_{t_m}} + 1) \explain{Lemma~\ref{lem:diff_w1}}\\
    \leq&\textstyle 2L_h^2\alpha_{t-\tau_{\alpha_t}, t-1}(\norm{w_{t_m}} + 1).
\end{align}
To see the last inequality (we only need it to hold for sufficiently large $t$),
we first notice that $\alpha_{t-\tau_{\alpha_t}, t_m-1} \leq \alpha_{t-\tau_{\alpha_t}, t-1}$ since we are bounding terms for $t \in [t_m, t_{m+1} - 1]$.
We also have $\lim_{t\to\infty} \alpha_{t-\tau_{\alpha_t}, t-1} = 0$.
So for sufficiently large $t$, we have $1-\alpha_{t-\tau_{\alpha_t}, t_m-1}L_h > 1/2$,
which yields the last inequality.
Putting everything together, we obtain
\begin{align}
    &\textstyle\sum_{t=t_m}^{t_{m+1}-1} \alpha_t \norm{\E_{t_m}\qty[H\qty(w_{t_m}, \tilde Y_{t+1}^{(t)}) - h\qty(w_{t-\tau_{\alpha_t}})]}\\
    \leq&\textstyle \sum_{t=t_m}^{t_{m+1}-1} \alpha_t\cdot L_h(\norm{w_{t_m}}+1) \cdot \alpha_t + \sum_{t=t_m}^{t_{m+1}-1} \alpha_t\cdot L_h^2\alpha_{t-\tau_{\alpha_t}, t-1}(\norm{w_{t_m}} + 1)\\
    =&\textstyle L_h(\norm{w_{t_m}}+1)\qty(\sum_{t=t_m}^{t_{m+1}-1} \alpha_t^2 + \sum_{t=t_m}^{t_{m+1}-1} \alpha_t\cdot L_h\alpha_{t-\tau_{\alpha_t}, t-1})\\
    \leq&\textstyle L_h(\norm{w_{t_m}}+1)\qty( \alpha_{t_m}\alpha_{t_m, t_{m+1}-1} +  \alpha_{t_m, t_{m+1}-1} L_h\max_{t\in\qty[t_m, t_{m+1}-1]}\alpha_{t-\tau_{\alpha_t}, t-1})\\
    \leq&\textstyle L_h(\norm{w_{t_m}}+1)\qty( C_\tref{lem:lr bounds}T_m^2 \cdot 2T_m + 2T_m L_h\max_{t\in\qty[t_m, t_{m+1}-1]}\alpha_{t-\tau_{\alpha_t}, t-1}) \explain{By \eqref{eq bar alpha m lower bound}, Lemma~\ref{lem:lr bounds} and Lemma~\ref{lem:lr bounds 2}}\\
    =&\textstyle \fO(T_m^2)(\norm{w_{t_m}}+1). \explain{By \eqref{aux:alpha__}}
\end{align}

\paragraph{When $t_m < t-\tau_{\alpha_t}$.} Using the tower rule we have
\begin{align}
\textstyle\E_{t_m}\qty[H\qty(w_{t_m}, \tilde Y_{t+1}^{(t)}) - h\qty(w_{t-\tau_{\alpha_t}})] = \E_{t_m}\qty[\mathbb{E}_{t-\tau_{\alpha_t}}\qty[H\qty(w_{t_m}, \tilde Y_{t+1}^{(t)})] - h\qty(w_{t-\tau_{\alpha_t}})].
\end{align}
Consider the inner expectation term
\begin{align}
    &\textstyle\norm{\mathbb{E}_{t-\tau_{\alpha_t}}\qty[H\qty(w_{t_m}, \tilde Y_{t+1}^{(t)})] - h\qty(w_{t-\tau_{\alpha_t}})} \\
    =&\textstyle \norm{\sum_y P_{w_{t -\tau_{\alpha_t}}}^{\tau_{\alpha_t}+1} \qty(Y_{t-\tau_{\alpha_t}}, y)H(w_{t_m}, y) - h\qty(w_{t-\tau_{\alpha_t}})}\\
    =&\textstyle \norm{\sum_y \qty(P_{w_{t-\tau_{\alpha_t}}}^{\tau_{\alpha_t}+1}\qty(Y_{t-\tau_{\alpha_t}}, y) - d_{\mathcal{Y}, w_{t-\tau_{\alpha_t}}}(y))H(w_{t_m}, y)} \\
    &\textstyle+ \norm{\sum_y d_{\mathcal{Y}, w_{t-\tau_{\alpha_t}}}(y)\qty(H(w_{t_m},y) - H\qty(w_{t-\tau_{\alpha_t}}, y))}.
\end{align}
Then
\begin{align}
    &\textstyle\norm{\sum_y \qty(P_{w_{t-\tau_{\alpha_t}}}^{\tau_{\alpha_t}+1}\qty(Y_{t-\tau_{\alpha_t}}, y) - d_{\mathcal{Y}, w_{t-\tau_{\alpha_t}}}(y))H(w_{t_m}, y)}\\
    \leq&\textstyle L_h (\norm{w_{t_m}}+1) \sum_y \abs{P_{w_{t-\tau_{\alpha_t}}}^{\tau_{\alpha_t}+1} \qty(Y_{t-\tau_{\alpha_t}}, y) - d_{\mathcal{Y}, w_{t-\tau_{\alpha_t}}}(y)}\\
    \leq&\textstyle L_h (\norm{w_{t_m}}+1) C_\tref{asp:markov chain} \alpha_t, \explain{By \eqref{eq:mixing} and \eqref{eq:tau}}
\end{align}
and
\begin{align}
    &\textstyle\norm{\sum_y d_{\mathcal{Y}, w_{t-\tau_{\alpha_t}}}(y)\qty(H(w_{t_m},y) - H\qty(w_{t-\tau_{\alpha_t}}, y))}\\
    \leq&\textstyle \sum_y d_{\mathcal{Y}, w_{t-\tau_{\alpha_t}}}(y)\norm{H(w_{t_m},y) - H\qty(w_{t-\tau_{\alpha_t}}, y)}\\
    \leq&\textstyle L_h\norm{w_{t_m}-w_{t-\tau_{\alpha_t}}} \explain{Assumption~\ref{asp:lip}}\\
    \leq&\textstyle L_hC_\tref{lem:diff_w}\cdot 2T_m(\norm{w_{t_m}}+1)\explain{Lemma~\ref{lem:diff_w}}\\
    \leq& T_mL_hC_{\tref{lem:bound z4},1}(\norm{w_{t_m}}+1).
\end{align}
That is, we obtain
\begin{align}
    &\textstyle\sum_{t=t_m}^{t_{m+1}-1} \alpha_t \norm{\E_{t_m}\qty[H\qty(w_{t_m}, \tilde Y_{t+1}^{(t)}) - h\qty(w_{t-\tau_{\alpha_t}})]}\\
    \leq&\textstyle \sum_{t=t_m}^{t_{m+1}-1} \alpha_t\qty(L_h(\norm{w_{t_m}}+1) \cdot C_\tref{asp:markov chain}\alpha_t + T_mL_hC_{\tref{lem:bound z4},1}(\norm{w_{t_m}} + 1))\\
    =&\textstyle ( C_\tref{asp:markov chain}+ C_{\tref{lem:bound z4},1}) L_h(\norm{w_{t_m}}+1)\sum_{t=t_m}^{t_{m+1}-1} \alpha_t\qty(\alpha_t + T_m)\\
    \leq&\textstyle ( C_\tref{asp:markov chain}+ C_{\tref{lem:bound z4},1}) L_h(\norm{w_{t_m}}+1)2T_m\qty(\max_{t\in [t_m,t_{m+1}-1]}\alpha_t + T_m) \explain{By \eqref{eq bar alpha m lower bound} and Lemma~\ref{lem:lr bounds 2}}\\
    \leq&\textstyle C_{\tref{lem:bound z4},2}T_m^2 (\norm{w_{t_m}}+1). \explain{Lemma~\ref{lem:lr bounds 2}}
\end{align}
We now consider the second term. We first establish the Lipschitz continuity of $h(w)$.

\begin{lemma}
\label{lem:lip_h}
Under Assumptions~\ref{asp:markov chain}-\ref{asp:lip}, there exists some deterministic $L_h'$ such that for any $w_1, w_2$
\begin{align}
    \norm{h(w_1) - h(w_2)} \leq L_h' \norm{w_1 - w_2}.
\end{align}
\end{lemma}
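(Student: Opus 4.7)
The plan is to establish the Lipschitz continuity of $h(w) = \sum_y d_{\fY, w}(y) H(w, y)$ via a standard add-and-subtract decomposition that isolates the two sources of $w$-dependence, namely the integrand $H(w, y)$ and the stationary distribution $d_{\fY, w}$. Specifically, I would write
\begin{align}
h(w_1) - h(w_2) = \sum_y d_{\fY, w_1}(y)\qty(H(w_1, y) - H(w_2, y)) + \sum_y \qty(d_{\fY, w_1}(y) - d_{\fY, w_2}(y)) H(w_2, y).
\end{align}
The first sum is immediately bounded by $L_h \norm{w_1 - w_2}$ via Assumption~\ref{asp:lip}. The real work is in the second sum, where $\norm{H(w_2, y)} \leq L_h (\norm{w_2} + 1)$ grows linearly in $\norm{w_2}$, so any control on $\sum_y \abs{d_{\fY, w_1}(y) - d_{\fY, w_2}(y)}$ must supply a compensating decay in $\norm{w_2}$.

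The main obstacle is therefore showing that the map $w \mapsto d_{\fY, w}$ is Lipschitz with respect to $\norm{P_{w_1} - P_{w_2}}$, i.e., that there exists a constant $C$ independent of $w$ with $\sum_y \abs{d_{\fY, w_1}(y) - d_{\fY, w_2}(y)} \leq C \norm{P_{w_1} - P_{w_2}}$. This is a standard perturbation result for uniformly geometrically ergodic chains, and I would prove it by iterating $d_{\fY, w_i} = d_{\fY, w_i} P_{w_i}^n$ and telescoping:
\begin{align}
d_{\fY, w_1} - d_{\fY, w_2} = (d_{\fY, w_1} - d_{\fY, w_2}) P_{w_2}^n + \sum_{k=0}^{n-1} d_{\fY, w_1} P_{w_1}^{n-1-k}(P_{w_1} - P_{w_2}) P_{w_2}^k.
\end{align}
The uniform mixing bound~\eqref{eq:mixing} guaranteed by Assumption~\ref{asp:markov chain} then forces the first term on the right-hand side to vanish as $n \to \infty$, while each summand in the telescoping sum contributes at most $C_\tref{asp:markov chain} \tau^k \norm{P_{w_1} - P_{w_2}}$, giving a geometric series summable to $C \norm{P_{w_1} - P_{w_2}}$ with $C = C_\tref{asp:markov chain}/(1 - \tau)$.

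With this perturbation bound in hand, combining it with Assumption~\ref{asp:P_lip'} yields $\sum_y \abs{d_{\fY, w_1}(y) - d_{\fY, w_2}(y)} \leq \frac{C \cdot C_\tref{asp:P_lip'}}{1 + \norm{w_1} + \norm{w_2}} \norm{w_1 - w_2}$. The second sum is then bounded by $L_h (\norm{w_2} + 1) \cdot \frac{C \cdot C_\tref{asp:P_lip'}}{1 + \norm{w_1} + \norm{w_2}} \norm{w_1 - w_2}$, and the crucial cancellation $\frac{\norm{w_2} + 1}{1 + \norm{w_1} + \norm{w_2}} \leq 1$ removes the dependence on $\norm{w_2}$, producing a uniform bound $C \cdot C_\tref{asp:P_lip'} \cdot L_h \norm{w_1 - w_2}$. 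Adding the two pieces gives the claim with $L_h' \doteq L_h + C \cdot C_\tref{asp:P_lip'} \cdot L_h$. It is worth highlighting that the stronger form of Assumption~\ref{asp:P_lip'}, with the extra $(1 + \norm{w_1} + \norm{w_2})^{-1}$ factor, is precisely engineered to absorb the linear growth of $H(w_2, y)$ in $\norm{w_2}$; a standard Lipschitz condition on $P_w$ would not suffice here.
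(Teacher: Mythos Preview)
Your proposal is correct and follows essentially the same approach as the paper's proof: the same add-and-subtract decomposition, the same use of Assumption~\ref{asp:lip} for the first term, and the same key cancellation $\frac{\norm{w_2}+1}{1+\norm{w_1}+\norm{w_2}} \leq 1$ for the second. The only difference is that where the paper invokes an external result (Lemma~5 of \citet{liu2025linearq}) for the bound $\norm{d_{\fY,w_1}-d_{\fY,w_2}}_1 \leq \frac{C_d}{1+\norm{w_1}+\norm{w_2}}\norm{w_1-w_2}$, you sketch a direct proof via telescoping and uniform mixing, which is exactly how that cited lemma is established.
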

\begin{proof}
    Recall $h(w)$ is defined as $h(w) = \sum_{y \in \mathcal{Y}} d_{\mathcal{Y},w}(y) H(w, y)$, we have
\begin{align}
    \textstyle\norm{h(w_1) - h(w_2)} =&\textstyle \norm{ \sum_{y} d_{\mathcal{Y},w_1}(y) H(w_1, y) - \sum_{y} d_{\mathcal{Y},w_2}(y) H(w_2, y) } \\
    =&\textstyle \norm{\sum_{y} d_{\mathcal{Y},w_1}(y) [H(w_1, y) - H(w_2, y)] + \sum_{y} [d_{\mathcal{Y},w_1}(y) - d_{\mathcal{Y},w_2}(y)] H(w_2, y)} \\
    \leq&\textstyle \norm{\sum_{y} d_{\mathcal{Y},w_1}(y) [H(w_1, y) - H(w_2, y)]} + \norm{\sum_{y} [d_{\mathcal{Y},w_1}(y) - d_{\mathcal{Y},w_2}(y)] H(w_2, y)}.
\end{align}
Then we have
\begin{align}
    &\textstyle\norm{\sum_{y} d_{\mathcal{Y},w_1}(y) [H(w_1, y) - H(w_2, y)]} \\
    \leq&\textstyle\sum_{y} d_{\mathcal{Y},w_1}(y) \norm{H(w_1, y) - H(w_2, y)}  \explain{Assumption~\ref{asp:lip}}\\
    \leq&\textstyle L_h \norm{w_1 - w_2}, \explain{Assumption~\ref{asp:lip}}
\end{align}
and
\begin{align}
    &\textstyle\norm{\sum_{y} [d_{\mathcal{Y},w_1}(y) - d_{\mathcal{Y},w_2}(y)] H(w_2, y)} 
    \leq\textstyle \norm{d_{\mathcal{Y},w_1} - d_{\mathcal{Y},w_2}}_1 \max_y \norm{H(w_2, y)},
\end{align}
where $\norm{\cdot}_1$ is the $l_1$ norm. By Lemma 5 of \citet{liu2025linearq}, there exists some constant $C_d$ such that $\norm{d_{\mathcal{Y},w_1} - d_{\mathcal{Y},w_2}}_1 \leq \frac{C_d}{1+\norm{w_1}+\norm{w_2}} \norm{w_1 - w_2}$.
Combining with Assumption~\ref{asp:P_lip'}, we get
\begin{align}
     \textstyle\norm{\sum_{y} [d_{\mathcal{Y},w_1}(y) - d_{\mathcal{Y},w_2}(y)] H(w_2, y)}\leq&\textstyle \qty( \frac{C_d}{1+\norm{w_1}+\norm{w_2}} \norm{w_1 - w_2} ) \cdot L_h(\norm{w_2}+1) \\
    \leq&\textstyle C_d L_h \norm{w_1 - w_2}.
\end{align}
Putting the bounds together we get
\begin{align}
    \textstyle\norm{h(w_1) - h(w_2)} \leq L_h \norm{w_1 - w_2} + C_d L_h \norm{w_1 - w_2} =L_h' \norm{w_1 - w_2},
\end{align}
where $L'_h \doteq L_h(1 + C_d)$.
\end{proof}
With this lemma, we can derive
\begin{align}
    \textstyle\sum_{t=t_m}^{t_{m+1}-1} \alpha_t \norm{\E_{t_m} \qty[h\qty(w_{t-\tau_{\alpha_t}})] - h(w_{t_m})}
    =&\textstyle\sum_{t=t_m}^{t_{m+1}-1} \alpha_t \norm{\E_{t_m}\qty[h\qty(w_{t-\tau_{\alpha_t}}) - h(w_{t_m})]}\\
    \leq&\textstyle\sum_{t=t_m}^{t_{m+1}-1} \alpha_t  L_h'\E_{t_m}\qty[\norm{w_{t-\tau_{\alpha_t}}-w_{t_m}}]. \explain{Lemma~\ref{lem:lip_h}}
\end{align}

\paragraph{When $t_m \geq t-\tau_{\alpha_t}$.}
\begin{align}
    &\textstyle\sum_{t=t_m}^{t_{m+1}-1} \alpha_t \norm{\E_{t_m}\qty[h\qty(w_{t-\tau_{\alpha_t}})] - h(w_{t_m})}\\
    \leq&\textstyle\sum_{t=t_m}^{t_{m+1}-1} \alpha_t  L_h'\E_{t_m}\qty[\norm{w_{t-\tau_{\alpha_t}}-w_{t_m}}]\\
    \leq&\textstyle \sum_{t=t_m}^{t_{m+1}-1} \alpha_t L_h'\frac{\alpha_{t-\tau_{\alpha_t}, t_m-1}L_h}{1-\alpha_{t-\tau_{\alpha_t}, t_m-1}L_h}(\norm{w_{t_m}} + 1)\explain{Lemma~\ref{lem:diff_w1}}\\
    \leq&\textstyle 2L_hL_h'(\norm{w_{t_m}} + 1) \cdot \sum_{t=t_m}^{t_{m+1}-1} \alpha_t\max_{t\in[t_m,t_{m+1}-1]} \alpha_{t-\tau_{t_\alpha}, t_m-1}\\
    \leq&\textstyle 2L_hL_h'(\norm{w_{t_m}} + 1) \cdot 2T_m \cdot o(T_m) \explain{By \eqref{eq bar alpha m lower bound}, Lemma~\ref{lem:lr bounds 2} and \eqref{aux:alpha__}}\\
    \leq&\textstyle C_{\tref{lem:bound z4}, 3}T_m^2 (\norm{w_{t_m}} + 1),
\end{align}
where the second inequality holds because for sufficiently large $t$, we have $1-\alpha_{t-\tau_{\alpha_t}, t_m-1}L_h > \frac{1}{2}$.

\paragraph{When $t_m < t-\tau_{\alpha_t}$.}
\begin{align}
    &\textstyle\sum_{t=t_m}^{t_{m+1}-1} \alpha_t \norm{\E_{t_m}\qty[h\qty(w_{t-\tau_{\alpha_t}})] - h(w_{t_m})}\\
    \leq&\textstyle\sum_{t=t_m}^{t_{m+1}-1} \alpha_t  L_h'\E_{t_m}\qty[\norm{w_{t-\tau_{\alpha_t}}-w_{t_m}}]\\
    \leq&\textstyle \sum_{t=t_m}^{t_{m+1}-1} \alpha_t L_h'C_\tref{lem:diff_w}\cdot2T_m(\norm{w_{t_m}} + 1)\explain{Lemma~\ref{lem:diff_w}}\\
    \leq&\textstyle 2T_m L_h'C_\tref{lem:diff_w}\cdot2T_m(\norm{w_{t_m}} + 1) \explain{By \eqref{eq bar alpha m lower bound} and Lemma~\ref{lem:lr bounds 2}}\\
    \leq&\textstyle T_m^2 C_{\tref{lem:bound z4}, 4}(\norm{w_{t_m}} + 1).
\end{align}
Combining all the bounds then completes the proof.
\end{proof}

\subsection{Proof of Lemma~\ref{lem:bound all}}
\label{proof:bound all}
\begin{proof}
We first bound the last two terms in~\eqref{eq main smooth ineq}.
For the first of the two, we have
\begin{align}
    &\langle w_{t_m}, \E_{t_m}[s_{1, m} + s_{2, m} + s_{4, m}]\rangle \\
    \leq&\norm{w_{t_m}} \norm{\E_{t_m}\qty[s_{1, m} + s_{2, m} + s_{4, m}]}  \\
    \leq& \norm{w_{t_m}} \qty(\E_{t_m}\qty[\norm{s_{1, m}} + \norm{s_{4, m}}] + \norm{\E_{t_m} s_{2, m}}) \\
    \leq& \norm{w_{t_m}} T_m^2 (C_\tref{lem:bound z1}  + C_\tref{lem:bound z4} + C_{\tref{lem:bound z2},1} )(\norm{w_{t_m}} + 1) \\
    =& \fO\left(T_m^2(\norm{w_{t_m}}^2 + 1)\right).
\end{align}
For the second of the two,
we notice from Assumption~\ref{asp:lip} that $\norm{h(0)}\leq L_h < L_h'$, Lemma~\ref{lem:lip_h} then gives
\begin{align}
    \norm{h(w_{t_m})} \leq L_h' (\norm{w_{t_m}} + 1).
\end{align}
It then follows easily that 
\begin{align}
    \norm{\bar \alpha_m h(w_{t_m})} = \mathcal{O}(T_m (\norm{w_{t_m}} + 1)).
\end{align}
We then have
\begin{align}
    \norm{\bar \alpha_m h(w_{t_m}) + s_{1, m} + s_{2, m} + s_{3, m} + s_{4, m}}^2 = \mathcal{O}\left(T_m^2(\norm{w_{t_m}}^2 + 1)\right).
\end{align}
We are now ready to refine~\eqref{eq main smooth ineq} as
\begin{align}
    &\textstyle\E_{t_m}\qty[\norm{w_{t_{m+1}}}^2] \\
    \leq&\textstyle (1 - 2\bar \alpha_m C_{\tref{asp:bound inner},1})\norm{w_{t_m}}^2 + 2\bar \alpha_m C_{\tref{asp:bound inner},2} + \fO\qty(T_m^2\qty(\norm{w_{t_m}}^2 + 1))  \\
    \leq&\textstyle (1 - T_m C_{\tref{asp:bound inner},1})\norm{w_{t_m}}^2 + \mathcal{O} \left(T_m \right) + \mathcal{O}\qty(T_m^2(\norm{w_{t_m}}^2 + 1)).
\end{align}
Then for sufficiently large $m$, we have
\begin{align}
\label{eq:recur}
    \textstyle \E_{t_m}\qty[\norm{w_{t_{m+1}}}^2]\leq& (1 -  C_{\tref{lem:bound all},1} T_m )\norm{w_{t_m}}^2 + C_{\tref{lem:bound all},2}T_m.
\end{align}
   
\end{proof}

\vskip 0.2in
\bibliography{bibliography.bib}

\end{document}